\def\ints{{{\rm Z} \kern -.35em {\rm Z} }}  
\def\smallints{{{\rm Z} \kern -.3em {\rm Z} }}  
\def\pints{{{\rm I} \kern -.15em {\rm N} }}      
\newcommand{\reals}{\mathbb R}
\newcommand{\nats}{\mathbb N}
\DeclareMathOperator{\tr}{tr}
\DeclareMathOperator{\Sp}{\mathrm Sp}
\def\cplx{{{\rm I} \kern -.45em {\rm C} }}       
\def\l2{\rm {\mathcal L}^{2}(\reals)}            
\def\im{\mathrm{Im}}
\newcommand{\be}{\begin{equation}}
\newcommand{\ee}{\end{equation}}
\newcommand{\bea}{\begin{eqnarray}}
\newcommand{\eea}{\end{eqnarray}}
\newcommand{\beaa}{\begin{eqnarray*}}
\newcommand{\eeaa}{\end{eqnarray*}}
\newcommand{\bnad}{\begin{nad}}
\newcommand{\enad}{\end{nad}}
\DeclareMathOperator{\diag}{diag}
\DeclareMathOperator{\rank}{rank}
\newcommand{\psdge}{\succcurlyeq}
\newcommand{\di}{{\,\mathrm{d}}}
\newcommand{\latop}[2]{\genfrac{}{}{0pt}{}{#1}{#2}}
\newcommand{\eps}{\epsilon}
\newcommand{\nin}{\in\!\!\!\!\!/\,}
\newcommand{\Expect}{\operatorname{\mathbb{E}}}
\renewcommand{\mod}{{\rm mod\,}}
\renewcommand{\ell}{l}
\def\GDd{\mathrm{G}(D,d)}
\def\bQ{\mathbf{Q}}
\def\bP{\mathbf{P}}
\def\sX{\mathcal{X}}
\def\sY{\mathcal{Y}}
\def\bSigma{\boldsymbol\Sigma}
\def\bx{\mathbf{x}}
\def\by{\mathbf{y}}
\def\bv{\mathbf{v}}
\def\bu{\mathbf{u}}
\def\bU{\mathbf{U}}
\def\bV{\mathbf{V}}
\def\bB{\mathbf{B}}
\def\b0{\mathbf{0}}
\def\bO{\mathbf{O}}
\def\bI{\mathbf{I}}
\def\bDelta{\boldsymbol{\Delta}}
\def\bA{\mathbf{A}}
\def\bX{\mathbf{X}}
\def\bZ{\mathbf{Z}}
\def\bL{\mathbf{L}}
\def\rmL{\mathrm{L}}
\def\bbH{\mathbb{H}}
\DeclareMathOperator*{\argmin}{arg\,min}
\begin{document}

\title{A Novel M-Estimator for Robust PCA}

\author{\name Teng Zhang \email zhang620@umn.edu \\
       \addr Institute for Mathematics and its Applications\\
        University of Minnesota\\
        Minneapolis, MN 55455, USA
       \AND
       \name Gilad Lerman\thanks{Gilad Lerman is the corresponding author.} \email lerman@umn.edu \\
       \addr School of Mathematics\\
        University of Minnesota\\
        Minneapolis, MN 55455, USA}

\editor{Martin Wainwright}

\maketitle

\begin{abstract}
We study the basic problem of robust subspace recovery.
That is, we assume a data set that some of its points are sampled around a fixed
subspace and the rest of them are spread in the whole ambient space, and we aim to recover the fixed underlying subspace.
We first estimate ``robust inverse sample covariance'' by solving a convex minimization procedure; we then recover the subspace by the bottom eigenvectors of this matrix (their number correspond to the number of eigenvalues close to 0).
We guarantee exact subspace recovery under some conditions on the underlying data. Furthermore, we propose a fast iterative algorithm, which linearly converges to the matrix
minimizing the convex problem.
We also quantify the effect of noise and regularization and discuss many other practical and theoretical issues for improving the subspace recovery in various settings.
When replacing the sum of terms in the convex energy function (that we minimize) with the sum of squares of terms,
we obtain that the new minimizer is a scaled version of the inverse sample covariance (when exists). We thus interpret our minimizer and its subspace (spanned by its bottom eigenvectors)
as robust versions of the empirical inverse covariance and the PCA subspace respectively.
We compare our method with many other algorithms for robust PCA on synthetic and real data sets and demonstrate state-of-the-art speed and accuracy.
\end{abstract}

\begin{keywords}
principal components analysis,
robust statistics,
M-estimator,
iteratively re-weighted least squares,
convex relaxation
\end{keywords}

\section{Introduction}

The most useful paradigm in data analysis and machine learning is arguably the modeling of data by a low-dimensional subspace.
The well-known total least squares solves this modeling problem by finding the subspace minimizing the sum of squared errors of data points.
This is practically done via principal components analysis (PCA) of the data matrix.
Nevertheless, this procedure is highly sensitive to outliers.
Many heuristics have been proposed for robust recovery of the underlying subspace.
Recent progress in the rigorous study of sparsity and low-rank of data has resulted
in provable convex algorithms for this purpose.
Here, we propose a different rigorous and convex approach, which is a special M-estimator.

Robustness of statistical estimators has been carefully studied for several decades \citep{huber_book, robust_stat_book2006}.
A classical example is the robustness of the geometric median \citep{lopuha_rousseeuw_robust91}.
For a data set $\sX = \{\bx_i\}_{i=1}^N \subset \reals^D$,
the geometric median is the minimizer of the following function of $\by \in \reals^D$:
\be
\label{eq:geom_med}
\sum_{i=1}^N \|\by - \bx_i\|\,,
\ee
where $\|\cdot\|$ denotes the Euclidean norm.
This is a typical example of an M-estimator, that is, a minimizer of a function of the form $\sum_{i=1}^N \rho(r_i)$, where $r_i$ is a residual of the $i$th data point, $\bx_i$, from the parametrized object we want to estimate. Here, $r_i= \|\by - \bx_i\|$, $\rho(x)=|x|$ and we estimate $\by \in \reals^D$, which is parametrized by its $D$ coordinates.

There are several obstacles in developing robust and effective estimators for subspaces.
For simplicity, we discuss here estimators of linear subspaces and thus assume that the data is centered at the origin.\footnote{This is a common assumption to reduce the complexity of the subspace recovery problem \citep{candes_wright_robust_pca09,Xu2010,Xu2012,robust_mccoy}, where \citet{robust_mccoy} suggest centering by the geometric median. Nevertheless, our methods easily adapt to affine subspace fitting by simultaneously estimating both the offset and the shifted linear component, but the justification is a bit more complicated then.}
A main obstacle is due to the fact that the set of $d$-dimensional linear subspaces in $\reals^D$, that is, the Grassmannian $\GDd$, is not convex.
Therefore, a direct optimization on $\GDd$ (or a union of $\GDd$ over different $d$'s) will not be convex (even not geodesically convex)
and may result in several (or many) local minima.
Another problem is that extensions of simple robust estimators of vectors to subspaces (e.g., using $\ell_1$-type averages) can fail by a single far away outlier.
For example, one may extend the $d$-dimensional geometric median minimizing~\eqref{eq:geom_med} to the minimizer over $\rmL \in \GDd$ of the function
\be
\label{eq:geom_subs}
\sum_{i=1}^N \|\bx_i - \bP_{\rmL}\bx_i\| \equiv \sum_{i=1}^N \|\bP_{\rmL^\perp}\bx_i\|\,,
\ee
where $\rmL^\perp$ is the orthogonal complement of $\rmL$ and $\bP_{\rmL}$ and $\bP_{\rmL^\perp}$ are the orthogonal projections on $\rmL$ and $\rmL^\perp$ respectively \citep[see, e.g.,][]{Ding+06, lp_recovery_part1_11}.
However, a single outlier with arbitrarily large magnitude will enforce the minimizer of~\eqref{eq:geom_subs} to contain it.

The first obstacle can be resolved by applying a convex relaxation of the minimization of~\eqref{eq:geom_subs} so that subspaces are mapped into a convex set of matrices (the objective function may be adapted respectively).
Indeed, the subspace recovery proposed by \citet{Xu2010} can be interpreted in this way.
Their objective function has one component which is similar to~\eqref{eq:geom_subs}, though translated to matrices.
They avoid the second obstacle by introducing a second component, which penalizes inliers of large magnitude (so that outliers of large magnitude may not be easily identified as inliers).
However, the combination of the two components involves a parameter that needs to be carefully estimated.

Here, we suggest a different convex relaxation that does not introduce arbitrary parameters and its implementation is significantly faster. However, it introduces some restrictions on the distributions of inliers and outliers. Some of these restrictions have analogs in other works (see, e.g., \S\ref{sec:counterexample}), while others are unique to this framework (see \S\ref{sec:remedies} and the non-technical description of all of our restrictions in \S\ref{sec:this_work}).

\subsection{Previous Work}
\label{sec:previous}

Many algorithms (or pure estimators) have been proposed for robust subspace estimation or equivalently robust low rank approximation of matrices.
\citet{Maronna1976}, \citet[\S8]{huber_book}, \citet{Devlin1981}, \citet{Davies1987}, \citet{Xu1995}, \citet{Croux00principalcomponent} and \citet[\S6]{robust_stat_book2006} estimate a robust covariance matrix.
Some of these methods use M-estimators \citep[\S6]{robust_stat_book2006} and compute them via iteratively re-weighted least squares (IRLS) algorithms, which linearly converge \citep{Arslan2004}.
The convergence of algorithms based on other estimators or strategies is not as satisfying.
The objective functions of the MCD (Minimum Covariance Determinant) and S-estimators
converge \citep[\S6]{robust_stat_book2006}, but no convergence rates are specified.
Moreover, there are no guarantees for the actual convergence to the global optimum of these objective functions.
There is no good algorithm for the MVE (Minimum Volume Ellipsoid) or Stahel-Donoho
estimators \citep[\S6]{robust_stat_book2006}. Furthermore, convergence analysis is problematic for the online algorithm of \citet{Xu1995}.

\citet{Li_85}, \citet{Ammann1993}, \citet{Croux2007}, \citet{kwak08} and \citet[\S2]{robust_mccoy} find low-dimensional projections by ``Projection Pursuit'' (PP), now commonly referred to as PP-PCA  \citep[the initial proposal is due to Huber, see, e.g.,][p.~204 of first edition]{huber_book}.
The PP-PCA procedure is based on the observation that PCA maximizes the projective variance and can be implemented incrementally by computing the residual principal component or vector each time. Consequently, PP-PCA replaces this variance by a more robust function in this incremental implementation.
Most PP-based methods are based on
non-convex optimization and consequently lack satisfying guarantees.
In particular, \citet{Croux2007} do not analyze convergence of their non-convex PP-PCA and \citet{kwak08} only
establishes convergence to a local maximum.
\citet[\S2]{robust_mccoy} suggest a convex relaxation for PP-PCA.
However, they do not guarantee that the output of their algorithm
coincides with the exact maximizer of their energy (though they show that the energies of the two are sufficiently close).
\citet{Ammann1993} applies a minimization on the sphere, which is clearly not convex.
It iteratively tries to locate vectors spanning the orthogonal complement of the underlying subspace, that is, $D-d$ vectors for a subspace in $\GDd$.
We remark that our method also suggests an optimization revealing the orthogonal complement, but it requires a single convex optimization, which is completely different from the method of \citet{Ammann1993}.

\citet{Torre2001,Torre03aframework}, \citet{Brubaker2009} and \citet{Xu2010_highdimensional} remove possible outliers, followed by estimation of the underlying subspace by PCA.
These methods are highly non-convex. Nevertheless, \citet{Xu2010_highdimensional} provide a probabilistic analysis for their near
recovery of the underlying subspace.

The non-convex minimization of~\eqref{eq:geom_subs} as a robust
alternative for principal component analysis was suggested earlier by various authors for hyperplane modeling \citep{osborne_watson85, spath_watson1987, Nyquist_l1_88, Bargiela_Hartley93}, surface modeling \citep{watson2001orth_l1, Watson02}, subspace modeling \citep{Ding+06}
and multiple subspaces modeling \citep{MKF_workshop09}.
This minimization also appeared in a pure geometric-analytic context of general surface modeling without outliers \citep{DS91}.
\citet{lp_recovery_part1_11, lp_recovery_part2_11} have shown that this minimization
can be robust to outliers under some conditions on the sampling of the data.

\citet{ke_kanade03} tried to minimize (over all low-rank approximations) the element-wise $\ell_1$ norm of the difference of a given matrix and its low-rank approximation. \citet{Chandrasekaran_Sanghavi_Parrilo_Willsky_2009} and \citet{candes_wright_robust_pca09} have proposed to minimize a linear combination of such an $\ell_1$ norm and the nuclear norm of the low-rank approximation in order to find the optimal low-rank estimator.
\citet{candes_wright_robust_pca09} considered the setting where uniformly sampled elements of the low-rank matrix are corrupted, which does not apply to our outlier model (where only some of the rows are totally corrupted). \citet{Chandrasekaran_Sanghavi_Parrilo_Willsky_2009} consider a general setting, though their underlying condition is too restrictive; weaker condition was suggested by \citet{Hsu_rpca_11}, though it is still not sufficiently general.
Nevertheless, \citet{Chandrasekaran_Sanghavi_Parrilo_Willsky_2009} and \citet{candes_wright_robust_pca09} are ground-breaking to the whole area, since they provide rigorous analysis of exact low-rank recovery with unspecified rank.

\citet{Xu2010} and \citet{robust_mccoy} have suggested a strategy analogous to \citet{Chandrasekaran_Sanghavi_Parrilo_Willsky_2009} and \citet{candes_wright_robust_pca09} to solve the outlier problem. They divide the matrix $\bX$ whose rows are the data points as follows: $\bX=\bL+\bO$, where $\bL$ is low-rank and $\bO$ represents outliers (so that only some of its rows are non-zero). They minimize $\|\bL\|_* + \lambda \|\bO\|_{(2,1)}$, where $\|\cdot\|_*$ and $\|\cdot\|_{(2,1)}$ denote the nuclear norm and sum of $\ell_2$ norms of rows respectively and $\lambda$ is a parameter that needs to be carefully chosen. We note that the term $\|\bO\|_{(2,1)}$ is analogous to~\eqref{eq:geom_subs}.
\citet{Xu2012} have established an impressive theory showing that under some incoherency conditions,
a bound on the fraction of outliers and correct choice of the parameter $\lambda$, they can exactly recover the low-rank approximation.
\citet{Hsu_rpca_11}  and \citet{Agarwal_Negahban_Wainwright_2011} improved error bounds for this estimator as well as for the ones of \citet{Chandrasekaran_Sanghavi_Parrilo_Willsky_2009} and \citet{candes_wright_robust_pca09}.

In practice, the implementations by \citet{Chandrasekaran_Sanghavi_Parrilo_Willsky_2009}, \citet{candes_wright_robust_pca09}, \citet{Xu2010} and
\citet{robust_mccoy} use the iterative procedure described by \citet{rpca_code09}. The difference between the objective functions of the minimizer and its estimator obtained at the $k$th iteration is of order $O(k^{-2})$ \citep[Theorem 2.1]{rpca_code09}.
On the other hand, for our algorithm the convergence rate is of order $O(\exp(-ck))$ for some constant $c$ (i.e., it r-linearly converges).
This rate is the order of the Frobenius norm of the difference between the minimizer sought by our algorithm (formulated in~\eqref{eq:symmetric} below) and its estimator obtained at the $k$th iteration (it is also the order of the difference of the regularized objective functions of these two matrices). Recently, \citet{Agarwal_Negahban_Wainwright_2011b} showed that projected gradient descent algorithms for these estimators obtain linear convergence rates, though with an additional statistical error.

Our numerical algorithm can be categorized as IRLS.
\citet{Weiszfeld1937} used a procedure similar to ours to find the geometric median. \citet{Lawson61} later used it to solve uniform approximation problems by the limits of
weighted $\ell_p$-norm solutions. This procedure was generalized to various minimization problems, in particular, it is native to M-estimators \citep{huber_book, robust_stat_book2006}, and its linear convergence was proved for special instances \citep[see, e.g.,][]{Cline1972,Voss80,Chan99}.
Recently, IRLS algorithms were also applied to sparse
recovery and matrix completion \citep{Daubechies_iterativelyreweighted,Fornasier_low-rankmatrix}.

\subsection{This Work}
\label{sec:this_work}
We suggest another convex relaxation of the minimization of~\eqref{eq:geom_subs}. We note that the original minimization is over all subspaces $\rmL$ or equivalently all orthogonal projectors $\bP\equiv \bP_{\rmL^\perp}$. We can identify $\bP$ with a $D\times D$ matrix satisfying $\bP^2 = \bP$ and $\bP^T = \bP$ (where $\cdot^T$ denotes the transpose). Since the latter set is not convex, we relax it to include all symmetric matrices, but avoid singularities by enforcing unit trace. That is, we minimize over the set:
\be
\label{eq:def_H}
\bbH:=\{\bQ\in\reals^{D\times D}: \bQ=\bQ^T,\tr(\bQ)=1\}
\ee
as follows
\begin{equation}\label{eq:symmetric}
\hat{\bQ}=\argmin_{\bQ\in\bbH}F(\bQ),\,\,\text{where}\,F(\bQ):=\sum_{i=1}^{N}\|\bQ\bx_i\|.
\end{equation}

For the noiseless case (i.e., inliers lie exactly on $\rmL^*$), we estimate the subspace $\rmL^*$ by
\be
\label{eq:gms_no_noise}
\hat{\rmL}:=\ker(\hat{\bQ}).
\ee
If the intrinsic dimension $d$ is known (or can be estimate from the data), we estimate the subspace by the span of the bottom $d$ eigenvectors
of $\hat{\bQ}$ (or equivalently, the top $d$ eigenvectors of $-\hat{\bQ}$). This procedure is robust to sufficiently small levels of noise. We refer to it as the Geometric Median Subspace (GMS) algorithm and summarize it in Algorithm~\ref{alg:GMS}. We elaborate on this scheme throughout the paper,
\begin{algorithm}[htbp]
\caption{The Geometric Median Subspace Algorithm} \label{alg:GMS}
\begin{algorithmic}
\REQUIRE $\sX=\{\bx_i\}_{i=1}^N \subseteq
\mathbb{R}^{D}$: data, $d$: dimension of $\rmL^*$, an algorithm for minimizing~\eqref{eq:symmetric}\\
\ENSURE $\hat{\rmL}$: a $d$-dimensional linear subspace in $\reals^D$.\\
\textbf{Steps}:
\STATE
     $\bullet$ $\{\bv_i\}_{i=1}^d=$ \ the bottom $d$ eigenvectors of $\hat{\bQ}$ (see \eqref{eq:symmetric}) \\
    $\bullet$  $\hat{\rmL} = \Sp(\{\bv_i\}_{i=1}^d)$
\end{algorithmic}
\end{algorithm}

We remark that $\hat{\bQ}$ is semi-definite positive (we verify this later in Lemma~\ref{lemma:nonnegative}).
We can thus restrict $\bbH$ to contain only semi-definite positive matrices and thus make it even closer to a set of orthogonal projectors.
Theoretically, it makes sense to require that the trace of the matrices in $\bbH$ is $D-d$ (since they are relaxed versions of projectors onto the orthogonal complement of a $d$-dimensional subspace). However, scaling of the trace in~\eqref{eq:def_H} results in scaling the minimizer of \eqref{eq:symmetric} by a constant, which does not effect the subspace recovery procedure.

We note that~\eqref{eq:symmetric} is an M-estimator with residuals $r_i = \|\bQ\bx_i\|$, $1 \leq i \leq N$, and $\rho(x)=|x|$.
Unlike~\eqref{eq:geom_subs}, which can also be seen as a formal M-estimator, the estimator $\hat{\bQ}$ is unique under a weak condition that we will state later.

We are unaware of similar formulations for the problem of robust PCA.
Nevertheless, the Low-Rank Representation (LRR) framework of \citet{lrr_short, lrr_long} for modeling data by multiple subspaces (and not a single subspace as in here) is formally similar. LRR tries to assign to a data matrix $\bX$, which is viewed as a dictionary of $N$ column vectors in $\reals^D$, dictionary coefficients $\bZ$ by minimizing $\lambda \|\bZ\|_*+ \|(\bX(\bI-\bZ))^T\|_{(2,1)}$ over all $\bZ\in\reals^{N\times N}$, where $\lambda$ is a free parameter.
Our formulation can be obtained by their formulation with $\lambda=0$, $\bQ= (\bI-\bZ)^T$ and the additional constraint $\tr(\bZ)=D-1$ (which is equivalent with the scaling $\tr(\bQ)=1$), where $\{\bx_i\}_{i=1}^N$ are the row vectors of $\bX$ (and not the column vectors that represent the original data points).
In fact, our work provides some intuition for LRR as robust recovery of the low rank row space of the data matrix and its use (via $\bZ$) in partitioning the column space into multiple subspaces. We also remark that a trace 1 constraint is quite natural in convex relaxation problems and was applied, for example, in the convex relaxation of sparse PCA \citep{DAspremont07}, though the optimization problem there is completely different.

Our formulation is rather simple and intuitive, but results in the following fundamental contributions to robust recovery of subspaces:
\begin{enumerate}
\item We prove that our proposed minimization can achieve exact recovery under some assumptions on the underlying data (which we clarify below)
and without introducing an additional parameter.
\item We propose a fast iterative algorithm for achieving this minimization and prove its linear convergence.
\item We demonstrate the state-of-the-art accuracy and speed of our algorithm when compared with other methods on both synthetic and real data sets.
\item We establish the robustness of our method to noise and to a common regularization of IRLS algorithms.
\item We explain how to incorporate knowledge of the intrinsic dimension and also how to estimate it empirically.
\item We show that when replacing the sum of norms in \eqref{eq:symmetric} by the sum of squares of norms, then the modified minimizer $\hat{\bQ}$ is a scaled version of the
empirical inverse covariance. The subspace spanned by the bottom $d$ eigenvectors is clearly the $d$-dimensional subspace obtained by PCA. The original minimizer of \eqref{eq:symmetric}
can thus be interpreted as a robust version of the inverse covariance matrix.
\item We show that previous and well-known M-estimators \citep{Maronna1976, huber_book, robust_stat_book2006} do not solve the subspace recovery problem under a common assumption.
\end{enumerate}

\subsection{Exact Recovery and Conditions for Exact Recovery by GMS}
In order to study the robustness to outliers of our estimator for the underlying subspace,
we formulate the exact subspace recovery problem (see also \citealt{Xu2012}).
This problem assumes a fixed $d$-dimensional linear subspace $\rmL^*$,
inliers sampled from $\rmL^*$ and outliers sampled from its complement; it asks to recover $\rmL^*$ as well as identify
correctly inliers and outliers.

In the case of point estimators, like the geometric median minimizing~\eqref{eq:geom_med},
robustness is commonly measured by the breakdown point of the estimator \citep{huber_book, robust_stat_book2006}.
Roughly speaking, the breakdown point  measures
the proportion of arbitrarily large observations (that is, the proportion of ``outliers'') an estimator can handle before giving an
arbitrarily large result.

In the case of estimating subspaces, we cannot directly extend this definition, since the set of subspaces, that is, the Grassmannian (or unions of it),
is compact, so we cannot talk about ``an arbitrarily large result'', that is, a subspace with arbitrarily large distance from all other subspaces.
Furthermore, given an arbitrarily large data point, we can always form a subspace containing it; that is, this point is not
arbitrarily large with respect to this subspace.
Instead, we identify the outliers as the ones in the compliment of $\rmL^*$ and we are interested in the largest
fraction of outliers (or smallest fraction of inliers per outliers) allowing exact recovery of $\rmL^*$. Whenever an estimator can exactly recover
a subspace under a given sampling scenario we view it as robust and measure its effectiveness by the largest fraction of outliers it can tolerate.
However, when an estimator cannot exactly recover a subspace, one needs to bound from below the distance between the recovered subspace and the underlying
subspace of the model. Alternatively, one would need to point out at interesting scenarios where exact recovery cannot even occur
in the limit when the number of points approaches infinity.
We are unaware of other notions of robustness of subspace estimation (but of robustness of covariance estimation, which does not apply here; see, for example,
\S6.2.1 of \citealt{robust_stat_book2006}).

In order to guarantee exact recovery of our estimator we basically require three kinds of restrictions on the underlying data, which we explain here on a non-technical level
(technical discussion appears in \S\ref{sec:theory_minimization}).
First of all, the inliers need to permeate through the whole underlying subspace $\rmL^*$, in particular,
they cannot concentrate on a lower dimensional subspace of $\rmL^*$.
Second of all, outliers need to permeate throughout the whole complement of $\rmL^*$.
This assumption is rather restrictive and its violation is a failure mode of the algorithm.
We thus show that this failure mode does not occur when the knowledge of $d$ is used appropriately.
We also suggest some practical methods to avoid this failure mode when $d$ is unknown (see \S\ref{sec:theory_no_dim}).
Third of all, the ``magnitude' of outliers needs to be restricted.
We may initially scale all points to the unit sphere in order to avoid extremely large outliers.
However, we still need to avoid outliers concentrating along lines,
which may have an equivalent effect of a single arbitrarily large outlier.
Figure~\ref{fig:counter} (which appears later in \S\ref{sec:theory_minimization}) demonstrates cases where these assumptions are not satisfied.

The failure mode discussed above occurs in particular when the number of outliers is rather small and the dimension $d$ is unknown.
While we suggest some practical methods to avoid it (see \S\ref{sec:theory_no_dim}),
we also note that there are many modern applications with high percentages of outliers, where this failure mode may not occur.
In particular, computer vision data often contain high percentages of outliers \citep{Stewart99robustparameter, Chin+_hypotheis2012}.
However, such data usually involve multiple geometric models, in particular, multiple underlying linear subspaces.
We believe that the robust subspace modeling is still relevant to these kinds of data.
First of all, robust single subspace strategies can be well-integrated into common schemes of modeling data
by multiple subspaces. For example, the $K$-flats algorithm is
based on repetitive clustering and single subspace modeling per
cluster \citep{Tipping99mixtures, Bradley00kplanes, Tseng00nearest, Ho03, MKF_workshop09, LBF_journal12}
and the LBF and SLBF algorithms use local subspace modeling \citep{LBF_cvpr10, LBF_journal12}.
Second of all, some of the important preprocessing tasks in computer vision require single subspace modeling.
For example, in face recognition, a preprocessing step requires efficient subspace modeling of images of the same face
under different illuminating conditions \citep{Basri03, Basri_nearest_subspace11}.
There are also problems in computer vision with more complicated geometric models and large percentage of corruption,
where our strategies can be carefully adapted.
One important example is the synchronization problem, which finds an important application in Cryo-EM.
The goal of this problem is to recover rotation matrices $R_1$, $\ldots$, $R_N \in S0(3)$ from noisy
and mostly corrupted measurements of $R_i^{-1}R_j$ for some values of $1 \leq i,j \leq N$. \citet{wang_singer_2013} adapted ideas of both this work
and \cite{LMTZ2012} to justify and implement a robust solution for the synchronization problem.

\subsection{Recent Subsequent Work}

In the case where $d$ is known, \citet{LMTZ2012} followed this work and suggested a tight
convex relaxation of the minimization of \eqref{eq:geom_subs_l2} over all projectors $\bP_{\rmL^\perp}$ of rank $d$.
Their optimizer, which they refer to as the REAPER (of the needle-in-haystack problem)
minimize the same function $F(\bQ)$ (see \eqref{eq:symmetric}) over the set
\[
\bbH'=\{\bQ\in\reals^{D\times D}: \bQ=\bQ^T,\tr(\bQ)=1,\|\bQ\|\leq \frac{1}{D-d}\}.
\]
They estimate the underlying subspace by the bottom $d$ eigenvectors of the REAPER.
The new constraints in $\bbH'$
result in more elegant conditions for exact recovery and tighter probabilistic
theory (due to the tighter relaxation).
Since $d$ is known the failure mode of GMS mentioned above is avoided.
Their REAPER algorithm for computing the REAPER is based on the IRLS procedure of this paper with additional constraints, which complicate its analysis.
The algorithmic and theoretical developments of \citet{LMTZ2012} are based on the ones here.

While the REAPER framework applies a tighter relaxation, the GMS framework still has several advantages over the REAPER framework.
First of all, in various practical situations the dimension of the data is unknown and thus REAPER is inapplicable.
On the other hand, GMS can be used for dimension estimation, as we demonstrate in \S\ref{sec:dim}.
Second of all, the GMS algorithm is faster than REAPER
(the REAPER requires additional eigenvalue decomposition of a $D\times D$ matrix at each iteration of the IRLS algorithm).
Furthermore, we present here a complete theory for the linear convergence of the GMS algorithm,
where the convergence theory for the REAPER algorithm is currently incomplete.
Third of all, when the failure mode mentioned above is avoided, the empirical performances of REAPER and GMS are usually comparable (while GMS is faster).
At last, GMS and REAPER have different objectives with different consequences. REAPER aims to find a projector
onto the underlying subspace. On the other hand, GMS aims to find a ``generalized inverse covariance'' (see \S\ref{sec:l2}) and is
formally similar to other M-estimators (see \S\ref{sec:m_estimator_whole} and \S\ref{sec:m_estimator_tyler}).
Therefore, the eigenvalues and eigenvectors of the GMS estimator (i.e., the ``generalized inverse covariance'') can be interpreted as
robust eigenvalues and eigenvectors of the empirical covariance (see \S\ref{sec:dim} and \S\ref{sec:info_evd}).	

\subsection{Structure of This Paper}

In \S\ref{sec:theory_minimization} we establish exact and near subspace recovery via the GMS algorithm.
We also carefully explain the common obstacles for robust subspace recovery and the way they are handled by previous
rigorous solutions \citep{candes_wright_robust_pca09,Chandrasekaran_Sanghavi_Parrilo_Willsky_2009,Xu2012} as well as our solution.
Section \ref{sec:significance} aims to interpret our M-estimator in two different ways. First of all, it shows a formal similarity to
a well-known class of M-estimators \citep{Maronna1976, huber_book, robust_stat_book2006}, though clarifies the difference. Those estimators aims to robustly estimate the sample covariance. However, we show there that unlike our M-estimator, they cannot solve the subspace recovery problem (under a common assumption).
Second of all, it shows that non-robust adaptation of our M-estimator provides both direct estimation of the inverse covariance matrix as well as convex minimization equivalent to the non-convex
total least squares (this part requires full rank data and thus a possible initial dimensionality reduction but without any loss of information).
We thus interpret \eqref{eq:symmetric} as a robust estimation of the inverse covariance.
In \S\ref{sec:algorithm_whole} we propose an IRLS algorithm for minimizing~\eqref{eq:symmetric} and establish its linear convergence.
Section \ref{sec:prac_solutions} discusses practical versions of the GMS procedure that allow more general distributions than the ones guaranteed by the theory. One of these versions, the Extended GMS (EGMS) even provides robust alternative to principal components.
In \S\ref{sec:numerics} we demonstrate the state-of-the-art accuracy and speed of our algorithm when compared with other methods on both synthetic and real data sets and also numerically clarify some earlier claims. Section \ref{sec:proofs} provides all details of the proofs and \S\ref{sec:discussion} concludes with brief discussion.

\section{Exact and Near Subspace Recovery by GMS}
\label{sec:theory_minimization}

We establish exact and near subspace recovery by the GMS algorithm. In \S\ref{subsec:prob_formulation} we formulate the problems of exact and near subspace recovery. In \S\ref{sec:counterexample} we describe common obstacles for solving these problems and how they were handled in previous works; in \S\ref{sec:remedies} we formulate some conditions that the data may satisfy; whereas in \S\ref{subsec:recovery} we claim that these conditions are sufficient to avoid the former obstacles, that is, they guarantee exact recovery (see Theorem~\ref{thm:recovery}); We also propose weaker conditions for exact recovery and demonstrate their near-tightness in \S\ref{sec:verifiability}. Section~\ref{sec:unique} describes a simple general condition for uniqueness of GMS (beyond the setting of exact recovery).
Section~\ref{sec:prob} establishes (with some specified limitations) unique exact recovery with high probability under basic probabilistic models (see Theorems~\ref{thm:prob} and~\ref{thm:prob_b}); it also covers cases with asymmetric outliers. At last, \S\ref{sec:noisy} and \S\ref{sec:regularize} establish results for near recovery under noise and under regularization respectively.


\subsection{Problem Formulation}
\label{subsec:prob_formulation}
Let us repeat the formulation of the exact subspace recovery problem, which we motivated in \S\ref{sec:this_work} as a robust measure for the performance of our estimator.
We assume a linear subspace $\rmL^* \in \GDd$ and a data set $\sX=\{\bx_i\}_{i=1}^N$,
which contains inliers sampled from $\rmL^*$ and outliers sampled from $\reals^D\setminus\rmL^*$.
Given the data set $\sX$ and no other information, the objective of the exact subspace recovery problem is to exactly recover the underlying subspace $\rmL^*$.

In order to make the problem well-defined, one needs to assume some conditions on the sampled data set, which may vary with the proposed solution.
We emphasize that this is a formal mathematical problem, which excludes some ambiguous scenarios and allows us to determine admissible distributions of inliers and outliers.

In the noisy case (where inliers do not lie on $\rmL^*$, but perturbed by noise), we
ask about near subspace recovery, that is, recovery up to an error depending on the underlying noise level.
We argue below that in this case additional information on the model is needed.
Here we assume the knowledge of $d$, though under some assumptions we can estimate $d$ from the data (as we demonstrate later).
We remark that exact asymptotic recovery under some conditions on the noise distribution is way more complicated
and is discussed in another work \citep{Coudron_Lerman2012}.

\subsection{Common Difficulties with Subspace Recovery}\label{sec:counterexample}
We introduce here three typical enemies of subspace recovery and exemplify them in Figure~\ref{fig:counter}. We also explain how they are handled by the previous convex solutions for exact recovery of subspaces as well as low-rank matrices \citep{Chandrasekaran_Sanghavi_Parrilo_Willsky_2009, candes_wright_robust_pca09, Xu2012}.

A type 1 enemy occurs when the inliers are mainly sampled from a subspace $\rmL' \subset \rmL^*$. In this case, it seems impossible to recover $\rmL^*$. We would expect a good algorithm to recover $\rmL'$ (instead of $\rmL^*$) or a subspace containing it with slightly higher dimension
(see for example Figure~\ref{fig:counterexample1}).
\citet{Chandrasekaran_Sanghavi_Parrilo_Willsky_2009}, \citet{candes_wright_robust_pca09} and \citet{Xu2012} have addressed this issue by requiring incoherence conditions for the inliers.
For example, if $m$ and $N-m$ points are sampled from $L'$ and $L^* \setminus L'$ respectively, then the incoherency condition  of \citet{Xu2012} requires that $\mu \geq N/(\dim(\rmL^*) \cdot (N-m))$, where $\mu$ is their incoherency parameter. That is, their theory holds only when the fraction of points sampled from $L^* \setminus L'$ is sufficiently large.

\begin{figure}
\begin{center}
\subfigure[ \ Example of a type 1 enemy: $\rmL^*$ is a plane represented by a rectangle, ``inliers'' (in
$\rmL^*$) are colored blue and ``outliers'' (in $\reals^3 \setminus L^*$) red.
Most inliers lie on a line inside $\rmL^*$. It seems unlikely to distinguish between inliers, which are not on ``the main line'', and the outliers. It is thus likely to recover the main line instead of $\rmL^*$.]
{
\includegraphics[width=.45\textwidth,height=.3\textwidth]{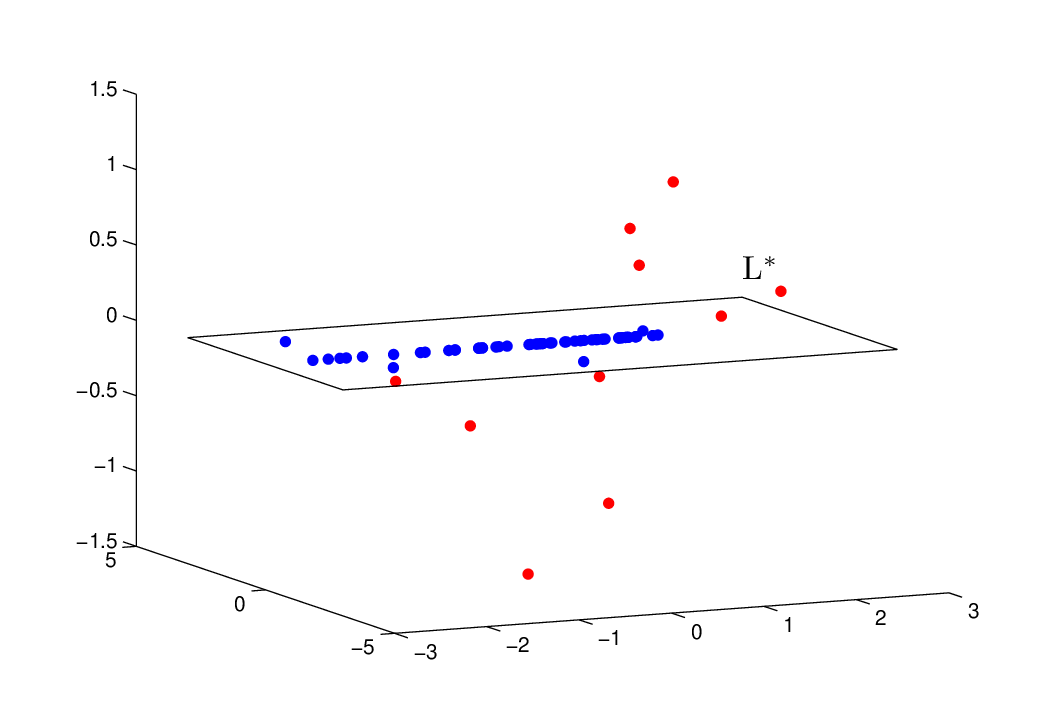}\label{fig:counterexample1}
}
  \hspace{5mm}
\subfigure[ \ Example of a type 2 enemy: $\rmL^*$ is a line represented by a black line segment, ``inliers'' (in
$\rmL^*$) are colored blue and ``outliers'' (in $\reals^3 \setminus L^*$) red.
All outliers but one lie within a plane containing $\rmL^*$, which is represented by a dashed rectangle.
There seems to be stronger distinction between the points on this plane and the isolated outlier than the original inliers and outliers.
Therefore, an exact recovery algorithm may output this plane instead of $\rmL^*$.]
{
\includegraphics[width=.45\textwidth,height=.3\textwidth]{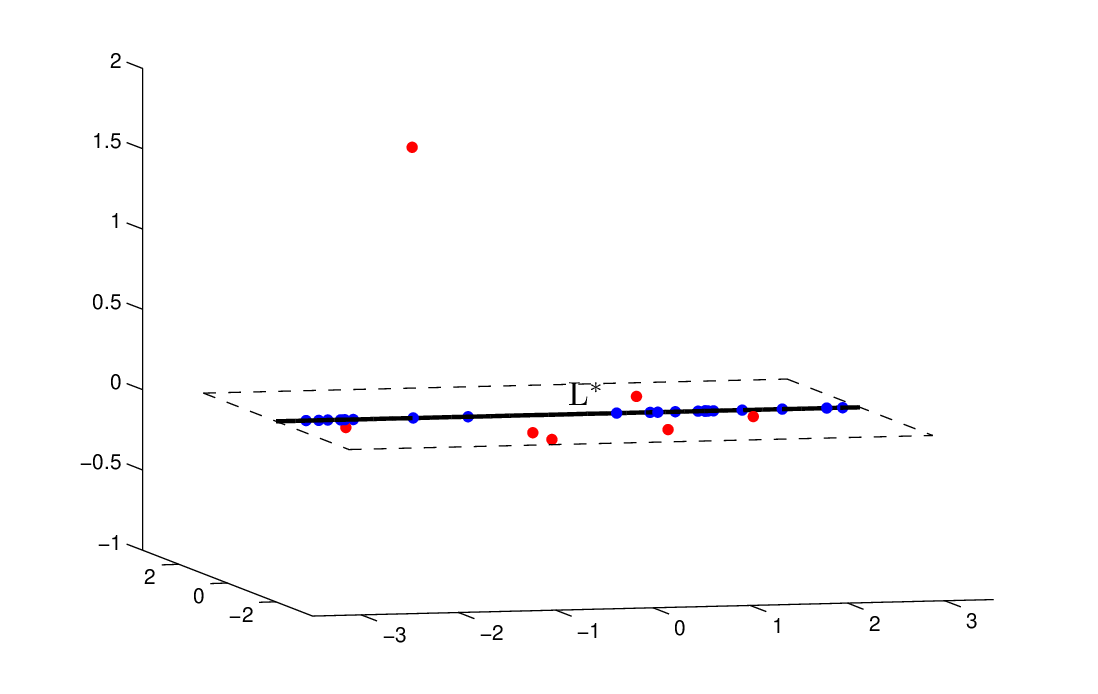}\label{fig:counterexample2}
} \\
\subfigure[ \ Example 1 of a type 3 enemy: The inliers (in blue) lie on the line $\rmL^*$ and there is a single outlier (in red) with relatively large magnitude. An exact recovery algorithm can output the line $\tilde{\rmL}$ (determined by the outlier) instead of $\rmL^*$. If the data is normalized to the unit circle, then any reasonable robust subspace recovery algorithm can still recover $\rmL^*$.]
{
\includegraphics[width=.45\textwidth,height=.3\textwidth]{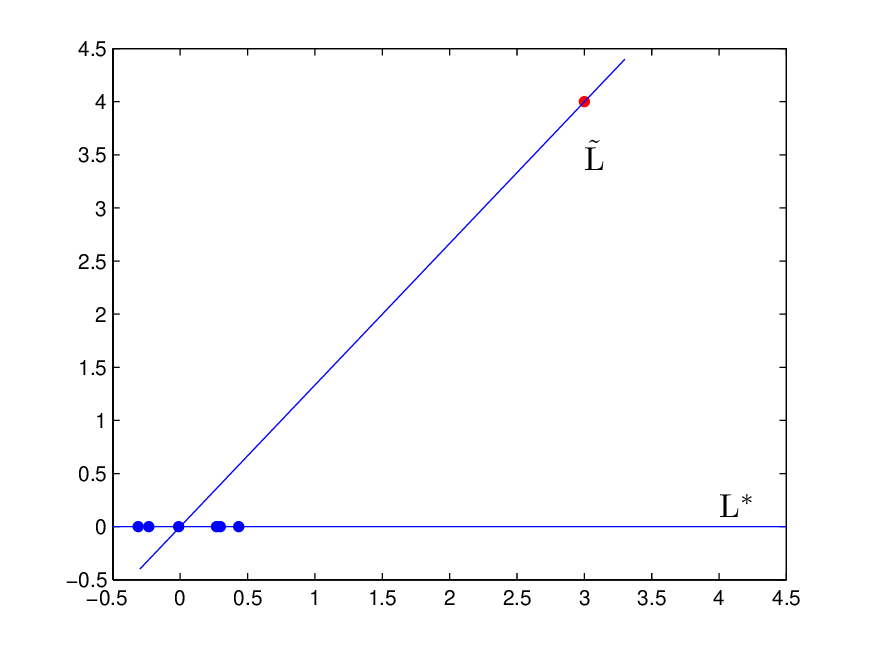}\label{fig:counterexample3}
}
 \hspace{5mm}
\subfigure[\ Example 2 of a type 3 enemy: Points are normalized to lie on the unit circle, inliers (in blue) lie around the line $\rmL^*$
and outliers (in red) concentrate around another line, $\tilde{\rmL}$. A subspace recovery algorithm can output $\tilde{\rmL}$
instead of $\rmL^*$.]
{
\includegraphics[width=.45\textwidth,height=.3\textwidth]{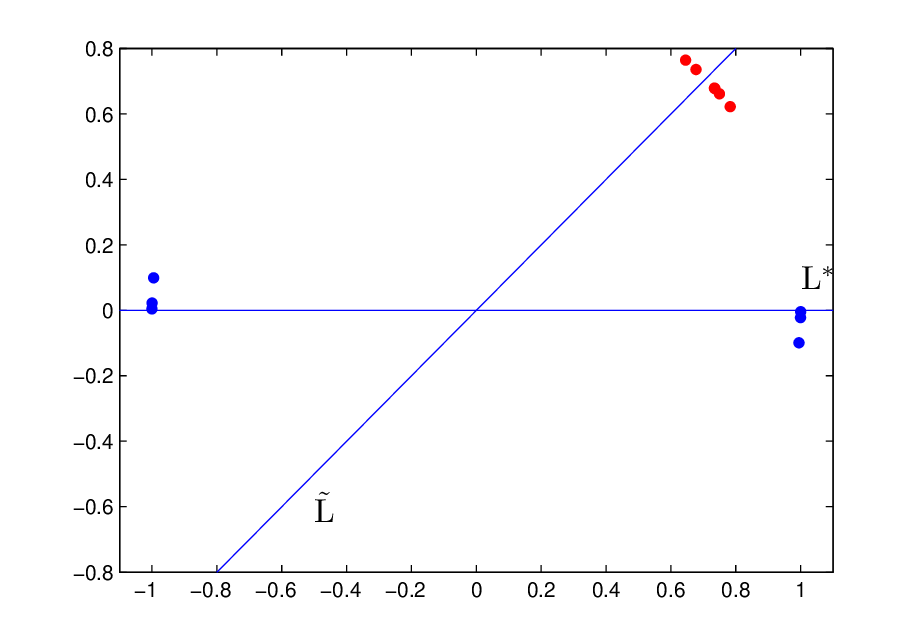}\label{fig:counterexample4}
}
\caption{Enemies of the mathematical formulation of exact subspace recovery. \label{fig:counter}}
\end{center}
\end{figure}

A type 2 enemy occurs when the outliers are mainly sampled from a subspace
$\tilde{\rmL}$ such that $\dim(\tilde{\rmL}\oplus \rmL^* )<D$. In this case $\rmL^*\oplus\tilde{\rmL}$ can be mistakenly identified as the low-rank subspace (see for example Figure~\ref{fig:counterexample2}). This is a main issue when the intrinsic dimension is unknown; if on the other hand the intrinsic dimension is known, then one can often overcome this enemy.
\citet{candes_wright_robust_pca09} handle it by assuming that the distribution of corrupted elements is uniform. \citet{Chandrasekaran_Sanghavi_Parrilo_Willsky_2009} address it by restricting their parameter $\mu$  (see their main condition, which is used in  Theorem~2 of their work, and their definition of $\mu$ in (1.2) of their work) and consequently limit the values of the mixture parameter (denoted here by $\lambda$).
On the other hand, \citet{Xu2012} use the true percentage of outliers to infer the right choice of the mixture parameter $\lambda$.
That is, they practically invoke model selection (for estimating this percentage) in order to reject $\tilde{\rmL}\oplus \rmL^*$
and choose the true model, which is $\rmL^*$.

A type 3 enemy occurs due to large magnitudes of outliers.
For example, a single outlier with arbitrarily large magnitude will be contained in the minimizer of~\eqref{eq:geom_subs}, which will thus be different than the underlying subspace (see for example Figure~\ref{fig:counterexample3}). Also, many outliers with not-so-small magnitudes that lie around a fixed line may have the effect of a single large outlier (see for example Figure~\ref{fig:counterexample4}).
This enemy is avoided by \citet{Chandrasekaran_Sanghavi_Parrilo_Willsky_2009}, \cite{candes_wright_robust_pca09} and \cite{Xu2012} by the additional mixture component of nuclear norm, which penalizes the magnitude (or combined magnitude) of the supposed inliers (so that outliers of large magnitude may not be easily identified as inliers). It is interesting to note that if the rank is used instead of the nuclear norm (as sometimes advocated), then it will not resolve this issue.

Another issue for our mathematical problem of exact subspace recovery is whether the subspace obtained by a proposed algorithm is unique.
Many of the convex algorithms depend on convex $\ell_1$-type methods that may not be strictly convex. But it may still happen that in the setting of pure inliers and outliers and under some conditions avoiding the three types of enemies, the recovered subspace is unique (even though it may be obtained by several non-unique minimizers). This is indeed the case in \citet{Chandrasekaran_Sanghavi_Parrilo_Willsky_2009}, \citet{candes_wright_robust_pca09}, \citet{Xu2012} and our own work.
Nevertheless, uniqueness of our minimizer (and not the recovered subspace) is important
for analyzing the numerical algorithm approximating it and for perturbation analysis (e.g., when considering near recovery with noisy data). It is also helpful for practically verifying the conditions we will propose for exact recovery.
Uniqueness of the minimizer (and not just the subspace) is also important in \citet{Chandrasekaran_Sanghavi_Parrilo_Willsky_2009} and \citet{candes_wright_robust_pca09} and they thus established conditions for it.

At last, we comment that subspace recovery with unknown intrinsic dimension may require a model selection procedure (possibly implicitly).
That is, even though one can provide a theory for exact subspace recovery (under some conditions), which might be stable to perturbations, in practice, some form of model selection will be necessary in noisy cases.
For example, the impressive theories by \citet{Chandrasekaran_Sanghavi_Parrilo_Willsky_2009} and \citet{Xu2012} require the estimation of the mixture parameter $\lambda$.
\citet{Xu2012} propose such an estimate for $\lambda$, which is based on knowledge of the data set (e.g., the distribution of corruptions and the fraction of outliers). However, we noticed that in practice this  proposal did not work well (even for simple synthetic examples), partly due to the fact that the deduced conditions are only sufficient, not necessary and there is much room left for improvement. The theory by \citet{candes_wright_robust_pca09} specified a choice for $\lambda$ that is independent of the model parameters, but it applies only for the special case of uniform corruption without noise; moreover, they noticed that other values of $\lambda$ could achieve better results.

\subsection{Conditions for Handling the Three Enemies}
\label{sec:remedies}

We introduce additional assumptions on the data to address the three types of enemies.
We denote the sets of exact inliers and outliers by
$\sX_1$ and $\sX_0$ respectively, that is, $\sX_1=\sX\cap\rmL^*$ and $\sX_0=\sX\setminus\rmL^*$.
The following two conditions simultaneously address both type 1 and type 3 enemies:
\begin{equation}\label{eq:condition1}
\min_{\bQ\in\bbH,\bQ\bP_{\rmL^{*\perp}}=\b0}\sum_{\bx\in\sX_1}\|\bQ\bx\|> \sqrt{2}\,\min_{\bv\in\rmL^{*\perp},\|\bv\|=1}\sum_{\bx\in\sX_0}|\bv^T\bx|,
\end{equation}
\begin{equation}\label{eq:condition2}
\min_{\bQ\in\bbH,\bQ\bP_{\rmL^{*\perp}}=\b0}\sum_{\bx\in\sX_1}\|\bQ\bx\|> \sqrt{2}\,\max_{\bv\in\rmL^*,\|\bv\|=1}\sum_{\bx\in\sX_0}|\bv^T\bx|.
\end{equation}

A lower bound on the common LHS of both~\eqref{eq:condition1} and~\eqref{eq:condition2} is designed to  avoid type 1 enemies. 
This common LHS is a weak version of the  permeance statistics, which was defined in (3.1) of \citet{LMTZ2012} as follows:
$$
{\cal P}(\rmL^*):= \min_{\latop{\bu \in \rmL^*}{\|\bu\|=1}}\sum_{\bx \in \sX_1} |\bu^T\bx|.
$$
Similarly to the permeance statistics, it is zero if and only if all inliers are contained in a proper subspace of $\rmL^*$. Indeed, if all inliers lie in a subspace $\rmL' \subset \rmL^*$, then this common LHS is zero with the minimizer $\bQ=\bP_{\rmL'^\perp\cap \rmL^*}/\tr(\bP_{\rmL'^\perp\cap \rmL^*})$. Similarly, if it is zero, then $\bQ \bx = \b0$ for any $\bx \in \sX_1$ and for some $\bQ$ with kernel containing $\rmL^{*\perp}$. This is only possible when $\sX_1$ is contained in a proper subspace of $\rmL^{*}$. Similarly to the permeance statistics, if the inliers nicely permeate through $\rmL^*$, then this common LHS clearly obtain large values.

The upper bounds on the RHS's of~\eqref{eq:condition1} and~\eqref{eq:condition2} address two complementing type 3 enemies.
If $\sX_0$ contains few data points of large magnitude, which are orthogonal to $\rmL^*$, then the RHS of~\eqref{eq:condition1} may be too large and \eqref{eq:condition1} may not hold.
If on the other hand $\sX_0$ contains few data points with large magnitude and a small angle with $\rmL^*$, then the RHS of~\eqref{eq:condition2} will be large so that~\eqref{eq:condition2}
may not hold. Conditions~\eqref{eq:condition1} and~\eqref{eq:condition2} thus complete each other.

The RHS of condition~\eqref{eq:condition2} is similar to the linear structure statistics (for $\rmL^*$), which was defined in (3.3) of \citet{LMTZ2012}. The linear structure statistics uses an $\ell_2$ average of dot products instead of the $\ell_1$ average used here and was applied in this context to $\reals^D$ (instead of $\rmL^*$) in \citet{LMTZ2012}.
Similarly to the linear structure statistics, the RHS of~\eqref{eq:condition2} is large when  outliers either have large magnitude or they lie close to a line (so that their combined contribution is similar to an outlier with a very large magnitude as exemplified in Figure~\ref{fig:counterexample4}).
The RHS of condition~\eqref{eq:condition2} is a very weak analog of the linear structure statics of ${\rmL^*}^\perp$ since it uses a minimum instead of a maximum. There are some significant outliers within ${\rmL^*}^\perp$ that will not be avoided by requiring~\eqref{eq:condition2}.
For example, if the codimension of $\rmL^*$ is larger than 1 and there is a single outlier with an arbitrary large magnitude orthogonal to $\rmL^*$, then the RHS of~\eqref{eq:condition2} is zero.

The next condition avoids type 2 enemies and also significant outliers within ${\rmL^*}^\perp$
(i.e., type 3 enemies) that were not avoided by condition~\eqref{eq:condition2}. This condition requires that any minimizer of the following oracle problem
\begin{equation}\label{eq:symmetric1}
\hat{\bQ}_0:=\argmin_{\bQ\in\bbH,\bQ\bP_{\rmL^*}=\b0}F(\bQ)
\end{equation}
satisfies
\be
\label{eq:condition3}
\rank(\hat{\bQ}_0) = D-d.
\ee
We note that the requirement $\bQ\bP_{\rmL^{*}}=\b0$ is equivalent to the condition $\ker(\bQ) \supseteq {\rmL^*}$ and therefore the rank of the minimizer is at most $D-d$.
Enforcing the rank of the minimizer to be exactly $D-d$ restricts the
distribution of the projection of $\sX$ onto ${\rmL^*}^\perp$.
In particular, it avoids its concentration on lower dimensional subspaces and is thus suitable to avoid type 2 enemies.
Indeed, if all outliers are sampled from
$\tilde{\rmL}\subset\rmL^{*\perp}$, then
any $\bQ\in\bbH$ with $\ker(\bQ)\supset\tilde{\rmL}+\rmL^*$ satisfies
$F(\bQ)=0$ and therefore it is a minimizer of the oracle
problem~\eqref{eq:symmetric}, but it contradicts~\eqref{eq:condition3}.

We note that this condition also avoids some type 3 enemies, which were not handled by conditions~\eqref{eq:condition1} and~\eqref{eq:condition2}. For example, any $D-d-1$ outliers with large magnitude orthogonal to $\rmL^*$ will not be excluded by requiring~\eqref{eq:condition1} or~\eqref{eq:condition2}, but will be avoided by~\eqref{eq:condition3}.

This condition is restrictive though, especially in very high ambient dimensions. Indeed,
it does not hold when the number of outliers is smaller than $D-d$ (since then the outliers are sampled from some $\tilde{\rmL}$ with $\dim(\tilde{\rmL}\oplus\rmL^*)<D$).
We thus explain in \S\ref{sec:theory_know_dim} and \S\ref{sec:egms} how to avoid this condition when knowing the dimension. We also suggest in \S\ref{sec:theory_no_dim} some practical solutions to overcome the corresponding restrictive lower bound on the number of outliers when the dimension is unknown.

\begin{example}
\label{example1}
We demonstrate the violation of the conditions above for the examples depicted in Figure~\ref{fig:counter}.
The actual calculations rely on ideas explained in \S\ref{sec:verifiability}.

For the example in Figure~\ref{fig:counterexample1}, which represents a type 1 enemy, both conditions \eqref{eq:condition1}
and \eqref{eq:condition2} are violated.
Indeed, the common LHS of \eqref{eq:condition1} and \eqref{eq:condition2} is $5.69$, whereas the RHS of \eqref{eq:condition1} is $8.57$ and the RHS of \eqref{eq:condition2} is larger than $10.02$ (this lower bound is obtained by substituting $\bv=[0,1,0]$ in the RHS of \eqref{eq:condition2}; note
that $\bv$ is a unit vector in $\rmL^*$).

For the example in Figure~\ref{fig:counterexample2}, which represents a type 2 enemy,
condition~\eqref{eq:condition3} is violated. Indeed, we obtained numerically a solution $\hat{\bQ}_0$
with $\rank(\hat{\bQ}_0) = 1 \neq D-d=2$ (one can actually prove in this case that $\hat{\bQ}_0$
is the projector onto the orthogonal complement of the plane represented by the dashed rectangle).

For the example in Figure~\ref{fig:counterexample3}, which represents a type 3 enemy, both conditions \eqref{eq:condition1}
and \eqref{eq:condition2} are violated. Indeed, the common LHS of \eqref{eq:condition1} and \eqref{eq:condition2} is $1.56$ and the RHS's of \eqref{eq:condition1} and \eqref{eq:condition2} are $5.66$ and $4.24$ respectively.
However, if we normalize all points to lie on the unit circle, then this enemy can be overcome. Indeed, for the normalized data, the common LHS of \eqref{eq:condition1} and \eqref{eq:condition2} is $6$ and the RHS's of \eqref{eq:condition1} and \eqref{eq:condition2} are $1.13$ and $0.85$ respectively.

For the example in Figure~\ref{fig:counterexample4}, which also represents a type 3 enemy, both conditions \eqref{eq:condition1}
and \eqref{eq:condition2} are violated. Indeed, the LHS of \eqref{eq:condition1} and \eqref{eq:condition2} are $5.99$ and the RHS's of \eqref{eq:condition1} and \eqref{eq:condition2} are $6.91$ and $7.02$ respectively.
\end{example}

\subsection{Exact Recovery Under Combinatorial Conditions}
\label{subsec:recovery}

We show that the minimizer of~\eqref{eq:symmetric} solves the exact recovery problem under the above combinatorial conditions.
\begin{theorem}\label{thm:recovery}
Assume that $d,D \in \nats$, $d<D$, $\sX$ is a data set in $\reals^D$ and $\rmL^*\in\GDd$. If conditions
\eqref{eq:condition1}, \eqref{eq:condition2} and~\eqref{eq:condition3} hold (w.r.t.~$\sX$ and~$\rmL^*$),
then any minimizer of~\eqref{eq:symmetric}, $\hat{\bQ}$, recovers the subspace $\rmL^*$ in the following way:
$\ker(\hat{\bQ})= \rmL^*$. If only \eqref{eq:condition1} and \eqref{eq:condition2} hold, then $\ker(\hat{\bQ}) \supseteq \rmL^*$.
\end{theorem}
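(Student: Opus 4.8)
The plan is to establish the two statements in turn: conditions~\eqref{eq:condition1}--\eqref{eq:condition2} force $\ker(\hat\bQ)\supseteq\rmL^*$, and then~\eqref{eq:condition3} promotes this inclusion to equality. I would fix an orthonormal basis adapted to $\reals^D=\rmL^*\oplus\rmL^{*\perp}$ and write $\hat\bQ=\left(\begin{smallmatrix}\bA&\bB\\\bB^T&\bC\end{smallmatrix}\right)$, with $\bA$ acting on $\rmL^*$, $\bC$ on $\rmL^{*\perp}$ and $\bB$ coupling them. By Lemma~\ref{lemma:nonnegative} I take $\hat\bQ$ positive semidefinite, so $\bA\succeq\b0$ and $\tr(\bA)+\tr(\bC)=1$. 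Since $\ker(\hat\bQ)\supseteq\rmL^*$ is the same as $\hat\bQ\bP_{\rmL^*}=\b0$, i.e.\ $\bA=\b0$ and $\bB=\b0$, the first statement amounts to showing $\tr(\bA)=0$ and $\|\bB\|_*=0$ (here $\tr(\bA)=0$ forces $\bA=\b0$ because $\bA\succeq\b0$). Write $\gamma$ for the common left-hand side of~\eqref{eq:condition1}--\eqref{eq:condition2}, and let $m_1:=\min_{\bv\in\rmL^{*\perp},\|\bv\|=1}\sum_{\bx\in\sX_0}|\bv^T\bx|$ and $M_2:=\max_{\bv\in\rmL^*,\|\bv\|=1}\sum_{\bx\in\sX_0}|\bv^T\bx|$, so that~\eqref{eq:condition1} reads $\gamma>\sqrt2\,m_1$ and~\eqref{eq:condition2} reads $\gamma>\sqrt2\,M_2$.

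The argument is a single strict-improvement comparison. Suppose $(\bA,\bB)\neq(\b0,\b0)$; let $\bv_*\in\rmL^{*\perp}$ be a unit minimizer for $m_1$ and set $\bQ'=\left(\begin{smallmatrix}\b0&\b0\\\b0&\bC+\tr(\bA)\,\bv_*\bv_*^T\end{smallmatrix}\right)\in\bbH$. The decrease of $F$ is collected from the inliers, which $\bQ'$ annihilates: for $\bx\in\sX_1\subseteq\rmL^*$ one has $\|\hat\bQ\bx\|=\sqrt{\|\bA\bx\|^2+\|\bB^T\bx\|^2}\ge\tfrac1{\sqrt2}\big(\|\bA\bx\|+\|\bB^T\bx\|\big)$. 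The crucial device is to re-express the off-diagonal residual through the symmetric matrix $\bM:=\sqrt{\bB\bB^T}\succeq\b0$ on $\rmL^*$, which satisfies $\|\bM\bx\|=\|\bB^T\bx\|$ and $\tr(\bM)=\|\bB\|_*$; then $\bA/\tr(\bA)$ and $\bM/\tr(\bM)$ are both admissible in the minimization defining $\gamma$, giving $\sum_{\bx\in\sX_1}\|\bA\bx\|\ge\tr(\bA)\,\gamma$ and $\sum_{\bx\in\sX_1}\|\bB^T\bx\|\ge\|\bB\|_*\,\gamma$. Hence the inlier gain is at least $\tfrac{\gamma}{\sqrt2}\big(\tr(\bA)+\|\bB\|_*\big)$, and this is exactly where the factor $\sqrt2$ ($\ell_2$ versus $\ell_1$) enters.

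For the outlier cost I would bound, for each $\bx=\bx_1+\bx_0\in\sX_0$ (with $\bx_1\in\rmL^*$, $\bx_0\in\rmL^{*\perp}$), the lower block of $\hat\bQ\bx$ so as to retain $\|\bC\bx_0\|$: since $\|\hat\bQ\bx\|\ge\|\bB^T\bx_1+\bC\bx_0\|\ge\|\bC\bx_0\|-\|\bB^T\bx_1\|$ while $\|\bQ'\bx\|\le\|\bC\bx_0\|+\tr(\bA)\,|\bv_*^T\bx_0|$, the troublesome $\|\bC\bx_0\|$ cancels and $\|\bQ'\bx\|-\|\hat\bQ\bx\|\le\tr(\bA)\,|\bv_*^T\bx|+\|\bB^T\bx_1\|$. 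Summing, using $\bv_*\in\rmL^{*\perp}$ and the eigendecomposition of $\bM$ (each eigendirection a unit vector in $\rmL^*$), gives a total cost at most $\tr(\bA)\,m_1+\|\bB\|_*\,M_2$. Therefore $F(\hat\bQ)-F(\bQ')\ge\tr(\bA)\big(\tfrac{\gamma}{\sqrt2}-m_1\big)+\|\bB\|_*\big(\tfrac{\gamma}{\sqrt2}-M_2\big)$, which is strictly positive by~\eqref{eq:condition1} and~\eqref{eq:condition2} whenever $(\bA,\bB)\neq(\b0,\b0)$, contradicting the minimality of $\hat\bQ$. Thus $\bA=\b0$ and $\bB=\b0$, i.e.\ $\ker(\hat\bQ)\supseteq\rmL^*$. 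I expect the main subtlety to be precisely this decoupling: retaining $\|\bC\bx_0\|$ in the outlier lower bound and routing $\|\bB^T\bx\|$ through $\sqrt{\bB\bB^T}$, so that both blocks are charged to the single permeance constant $\gamma$ and matched against~\eqref{eq:condition1} and~\eqref{eq:condition2} separately.

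Finally, assume also~\eqref{eq:condition3}. The feasible set of the oracle problem~\eqref{eq:symmetric1} lies inside $\bbH$, and by the first part the global minimizer $\hat\bQ$ obeys $\hat\bQ\bP_{\rmL^*}=\b0$ and hence belongs to it; therefore $\hat\bQ$ minimizes~\eqref{eq:symmetric1}, i.e.\ it is an admissible $\hat\bQ_0$, and~\eqref{eq:condition3} gives $\rank(\hat\bQ)=D-d$. Because $\hat\bQ$ is supported on the $(D-d)$-dimensional subspace $\rmL^{*\perp}$, full rank there forces $\ker(\hat\bQ)\cap\rmL^{*\perp}=\{\b0\}$; combined with $\ker(\hat\bQ)\supseteq\rmL^*$ this yields $\ker(\hat\bQ)=\rmL^*$ exactly, completing the proof.
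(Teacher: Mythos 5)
Your proof is correct, but it takes a genuinely different route from the paper's. The paper argues around the oracle minimizer $\hat{\bQ}_0$ of~\eqref{eq:symmetric1}: it writes an arbitrary competitor as $\hat{\bQ}_0+\bDelta$, lower-bounds $F(\hat{\bQ}_0+\bDelta)-F(\hat{\bQ}_0)$ by the subgradient inequality $\|(\hat{\bQ}_0+\bDelta)\bx\|-\|\hat{\bQ}_0\bx\|\geq\langle\bDelta,\hat{\bQ}_0\bx\bx^T/\|\hat{\bQ}_0\bx\|\rangle_F$ on the outliers, splits $\bDelta$ into the three blocks $\bP_{\rmL^*}\bDelta\bP_{\rmL^*}$, $\bP_{\rmL^{*\perp}}\bDelta\bP_{\rmL^*}$, $\bP_{\rmL^{*\perp}}\bDelta\bP_{\rmL^{*\perp}}$, and crucially uses the first-order optimality of $\hat{\bQ}_0$ within its constrained set (their inequality~\eqref{eq:bDelta}) to trade the $\bP_{\rmL^{*\perp}}\bDelta\bP_{\rmL^{*\perp}}$ contribution against a rank-one matrix $\tr(\bP_{\rmL^*}\bDelta\bP_{\rmL^*})\bv_0\bv_0^T$ along the worst direction $\bv_0$. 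You instead start from an arbitrary global minimizer $\hat{\bQ}$ and exhibit an explicit strictly better competitor $\bQ'$, cancelling the $\|\bC\bx_0\|$ terms by hand via triangle inequalities; your trace transfer onto $\bv_*\bv_*^T$ plays exactly the role of the paper's $\bDelta_0$, your bounds $\sum_{\bx\in\sX_1}\|\bA\bx\|\geq\tr(\bA)\gamma$ and $\sum_{\bx\in\sX_1}\|\bB^T\bx\|\geq\|\bB\|_*\gamma$ via $\sqrt{\bB\bB^T}$ parallel their use of the common LHS of~\eqref{eq:condition1}--\eqref{eq:condition2}, and the $\sqrt2$ arises identically from splitting an $\ell_2$ norm of two blocks into an $\ell_1$ sum. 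What your version buys is elementarity: no subgradients and no variational characterization of the oracle minimizer are needed, and positive semidefiniteness of $\hat{\bQ}$ (Lemma~\ref{lemma:nonnegative}, which you correctly invoke to get $\bA=\b0$ from $\tr(\bA)=0$) is the only auxiliary fact. What the paper's heavier machinery buys is the refinement in \S\ref{sec:verifiability}: the subgradient formulation is what yields the weaker, data-verifiable sufficient conditions~\eqref{eq:condition1b}--\eqref{eq:condition2b} and the near-matching necessary conditions, which your direct competitor argument would not produce. Your handling of~\eqref{eq:condition3} (once $\hat{\bQ}\bP_{\rmL^*}=\b0$, the global minimizer is automatically an oracle minimizer, so its rank is $D-d$ and hence $\ker(\hat{\bQ})=\rmL^*$) matches the paper's logic.
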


\subsubsection{Weaker Alternatives of Conditions \eqref{eq:condition1} and \eqref{eq:condition2}}\label{sec:verifiability}

It is sufficient to guarantee exact recovery by requiring~\eqref{eq:condition3} and that for an arbitrarily chosen solution of \eqref{eq:symmetric1}, $\hat{\bQ}_0$, the following two conditions are satisfied:
\begin{equation}\label{eq:condition1b}
\min_{\bQ\in\bbH,\bQ\bP_{\rmL^{*\perp}}=\b0}\sum_{\bx\in\sX_1}\|\bQ\bx\|> \sqrt{2}\,\left\|\sum_{\bx\in\sX_0}\hat{\bQ}_0\bx\bx^T\bP_{\rmL^{*\perp}}/\|\hat{\bQ}_0\bx\|\right\|
\end{equation}
and
\begin{equation}\label{eq:condition2b}
\min_{\bQ\in\bbH,\bQ\bP_{\rmL^{*\perp}}=\b0}\sum_{\bx\in\sX_1}\|\bQ\bx\|> \sqrt{2}\,\left\|\sum_{\bx\in\sX_0}\hat{\bQ}_0\bx\bx^T\bP_{\rmL^*}/\|\hat{\bQ}_0\bx\|\right\|.
\end{equation}
We note that condition~\eqref{eq:condition3} guarantees that $\hat{\bQ}_0\bx \neq \b0$ for all $\bx\in\sX_0$ and thus the RHS's of~\eqref{eq:condition1b} and \eqref{eq:condition2b} are well-defined. We prove this statement in~\eqref{sec:verifiability_proofs}.

We note that conditions \eqref{eq:condition1b} and \eqref{eq:condition2b} can be verified when $\sX_0$, $\sX_1$ and $\rmL^*$ are known (unlike \eqref{eq:condition1} and \eqref{eq:condition2}), where  $\hat{\bQ}_0$ can be found by Algorithm~\ref{alg:practical}.
Furthermore, \eqref{eq:condition1b} and \eqref{eq:condition2b} are weaker than \eqref{eq:condition1} and \eqref{eq:condition2}, though they are more technically involved and harder to motivate.

In order to demonstrate the near-tightness of \eqref{eq:condition1b} and \eqref{eq:condition2b}, we formulate the following necessary conditions for the recovery of $\rmL^*$ as $\ker(\hat{\bQ})$ (see the idea of their justification at the end of \S\ref{sec:verifiability_proofs}):
For an arbitrarily chosen  solution  of \eqref{eq:symmetric1}, $\hat{\bQ}_0$:
\be
\label{eq:nec_cond1}
\min_{\bQ\in\bbH,\bQ\bP_{\rmL^{*\perp}}=\b0}\sum_{\bx\in\sX_1}\|\bQ\bx\|
\geq \|\sum_{\bx\in\sX_1}\hat{\bQ}_0\bx\bx^T\bP_{\rmL^{*\perp}}/\|\hat{\bQ}_0\bx\|\|
\ee
and
\be
\label{eq:nec_cond2}
\sum_{\bx\in\sX_1}\|\bQ(\tilde{\bP}_{\rmL^*}\bx)\|\geq  \sum_{\bx\in\sX_0}\left\langle \bQ,\tilde{\bP}_{\rmL^{*\perp}}^T\hat{\bQ}_0\bx\bx^T\tilde{\bP}_{\rmL^{*}}/\|\hat{\bQ}_0\bx\|\right\rangle_F \,\,
\text{for any $\bQ\in\reals^{(D-d)\times d}$},
\ee
where for matrices $\bA$, $\bB \in \reals^{k\times l}$: $\langle \bA, \bB \rangle_F = \tr(A \, B^T)$ is the Frobenius dot product.
Indeed, conditions~\eqref{eq:nec_cond1} and~\eqref{eq:nec_cond2} are close to conditions \eqref{eq:condition1b} and \eqref{eq:condition2b}.
In particular, \eqref{eq:nec_cond1} and \eqref{eq:condition1b} are only different by the constant factor $\sqrt{2}$, that is,
\eqref{eq:condition1b} is practically tight.

\subsection{Uniqueness of the Minimizer}
\label{sec:unique}

We recall that Theorem~\ref{thm:recovery} implies that if \eqref{eq:condition1}, \eqref{eq:condition2} and~\eqref{eq:condition3} hold, then
$\ker(\hat{\bQ})$ is unique. Here we guarantee the uniqueness of $\hat{\bQ}$ (which is required in \S\ref{sec:verifiability}, \S\ref{sec:noisy}, \S\ref{sec:regularize} and \S\ref{sec:theory_alg}) independently of the exact subspace recovery problem.
\begin{theorem}\label{thm:convex}
If the following condition holds:
\begin{equation}\label{eq:convex_condition}\{\sX\cap\rmL_1\}\cup\{\sX\cap\rmL_2\}\neq\sX
\,\,\text{for all $(D-1)$-dimensional subspaces $\rmL_1,\rmL_2\subset\reals^D$,}
\end{equation}
then  $F(\bQ)$ is a strictly convex function on $\bbH$.
\end{theorem}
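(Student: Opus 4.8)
The plan is to first dispatch convexity and then reduce strict convexity to a statement about the data points that can be matched against condition~\eqref{eq:convex_condition}. Each summand $\bQ\mapsto\|\bQ\bx_i\|$ is the Euclidean norm composed with the linear map $\bQ\mapsto\bQ\bx_i$, hence convex, so $F$ is convex on $\bbH$. For strictness I would restrict $F$ to an arbitrary segment between distinct $\bQ_1,\bQ_2\in\bbH$: writing $a_i=\bQ_2\bx_i$ and $b_i=(\bQ_1-\bQ_2)\bx_i$, the $i$th summand becomes $t\mapsto\|a_i+tb_i\|$. Since the Euclidean norm is strictly convex off rays, this one-variable function is strictly convex exactly when $a_i,b_i$ are linearly independent, i.e.\ exactly when $\bQ_1\bx_i$ and $\bQ_2\bx_i$ are linearly independent. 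A finite sum of convex functions is strictly convex on the segment as soon as one summand is, so $F$ fails to be strictly convex only if, for some distinct pair $\bQ_1,\bQ_2$, every $\bx_i$ lies in $S:=\{\bx\in\reals^D:\bQ_1\bx,\bQ_2\bx\text{ linearly dependent}\}$.

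It then suffices to prove the contrapositive: whenever such distinct $\bQ_1,\bQ_2\in\bbH$ exist with $\sX\subseteq S$, the data set is covered by two $(D-1)$-dimensional subspaces, contradicting~\eqref{eq:convex_condition}. The key structural observation is that $\bQ_1\bx$ and $\bQ_2\bx$ are dependent precisely when $(\alpha\bQ_1+\beta\bQ_2)\bx=\b0$ for some $(\alpha,\beta)\neq(0,0)$, so
\[
S=\bigcup_{[\alpha:\beta]}\ker(\alpha\bQ_1+\beta\bQ_2),
\]
the union of the kernels of the real symmetric matrix pencil generated by $\bQ_1$ and $\bQ_2$. I would therefore reduce the whole problem to showing that this union of pencil kernels lies in a union of two hyperplanes.

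For that step I would use the eigenstructure of the pencil. On the span of the real generalized eigenvectors the pair $(\bQ_1,\bQ_2)$ is simultaneously diagonalizable by congruence, so the nontrivial kernels $\ker(\alpha\bQ_1+\beta\bQ_2)$ are exactly the distinct real generalized eigenspaces $V_1,\dots,V_m$, which are mutually independent (they form a direct sum); the remaining, complex-eigenvalue part of the pencil contributes no real kernel and may be ignored. The unit-trace constraint in $\bbH$ forbids $\bQ_1=c\bQ_2$ (as $\tr\bQ_1=\tr\bQ_2=1$ would force $c=1$), so the $V_j$ do not collapse to a single eigenspace. Passing to coordinates adapted to the direct sum, each $V_j$ becomes a coordinate subspace with index set $I_j$, and choosing $a\in I_1$, $b\in I_2$ (disjoint) gives two coordinate hyperplanes $\{x_a=0\}$ and $\{x_b=0\}$ whose union contains every $V_j$; transporting back produces the two genuine hyperplanes $\rmL_1,\rmL_2$ with $\sX\subseteq S\subseteq\rmL_1\cup\rmL_2$.

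The main obstacle is this last covering step for a general real symmetric pencil, where neither matrix need be definite: a pair of real symmetric matrices is not always simultaneously diagonalizable by congruence, and singular pencils (those with $\det(\alpha\bQ_1+\beta\bQ_2)\equiv0$) may carry a common kernel and Kronecker-type degeneracies. I would handle this by invoking the canonical form for self-adjoint (symmetric) pencils to isolate the real-generalized-eigenvector part---on which simultaneous diagonalization does hold---absorbing any common kernel $\ker\bQ_1\cap\ker\bQ_2$ into the decomposition, and checking that the complex-conjugate blocks contribute only the trivial real kernel. The combinatorial embedding of independent subspaces into exactly two (rather than more) hyperplanes, which is what makes condition~\eqref{eq:convex_condition} the precisely right hypothesis, is then the clean finishing move.
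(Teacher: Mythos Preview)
Your reduction is exactly the paper's: failure of strict convexity on the segment from $\bQ_1$ to $\bQ_2$ forces $\bQ_1\bx_i$ and $\bQ_2\bx_i$ to be linearly dependent for every $i$, and the task becomes covering the set of such $\bx$ by two hyperplanes. Where you diverge is in how you produce that cover. The paper never touches pencil canonical forms; it simply splits on whether $\ker\bQ_1=\ker\bQ_2$. When the kernels differ, the two subspaces $\ker\bQ_2$ and $\ker(\bP_{\ker\bQ_1}\bQ_2)$ already work: if $\bQ_2\bx_i\neq\b0$ then $\bQ_1\bx_i=c_i\bQ_2\bx_i$ gives $c_i\bP_{\ker\bQ_1}\bQ_2\bx_i=\bP_{\ker\bQ_1}\bQ_1\bx_i=\b0$ (since $\mathrm{range}(\bQ_1)\perp\ker\bQ_1$), and both subspaces are proper because $\tr\bQ_2=1$ and $\ker\bQ_1\neq\ker\bQ_2$. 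When the kernels coincide, restrict to their common orthogonal complement, where both $\tilde\bQ_j$ are invertible; each projected $\bx_i$ is then a genuine eigenvector of $\tilde\bQ_1^{-1}\tilde\bQ_2$, and distinct eigenspaces are linearly independent by the usual elementary argument---so either they sit inside a single proper subspace or they split into at least two groups, giving the two hyperplanes, with the trace constraint ruling out $\tilde\bQ_1^{-1}\tilde\bQ_2$ being scalar.

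Your pencil route is sound and the canonical-form machinery would indeed close it, but it is substantially heavier: the very difficulty you flag (singular symmetric pencils, Kronecker-type blocks, simultaneous diagonalizability failing) is precisely what the paper's case split sidesteps. The equal-kernel case \emph{is} your regular-pencil situation after restriction, handled by ordinary eigenspaces of a single operator rather than of a pencil; the unequal-kernel case is dispatched by an explicit two-subspace construction that needs no structure theory at all. What your framing buys is a unified viewpoint on $S=\bigcup_{[\alpha:\beta]}\ker(\alpha\bQ_1+\beta\bQ_2)$; what the paper's buys is a proof that fits in half a page with nothing beyond first linear algebra.
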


\subsection{Exact Recovery under Probabilistic Models}
\label{sec:prob}
We show that our conditions for exact recovery (or the main two of them) and our condition for uniqueness of the minimizer
$\hat{\bQ}$ hold with high probability under basic probabilistic models.
Such a probabilistic theory is cleaner when the outliers are sampled from a spherically symmetric distribution as we carefully
demonstrate in \S\ref{sec:sym_outliers_theory} (with two different models).
The problem is that when the outliers are spherically symmetric then various non-robust algorithms (such as PCA) can asymptotically approach exact
recovery and nearly recover the underlying subspace with sufficiently large sample.
We thus also show in \S\ref{sec:asym_outliers_theory} how the theory in \S\ref{sec:sym_outliers_theory} can be slightly modified to
establish exact recovery of the GMS algorithm in an asymmetric case, where PCA cannot even nearly recover the underlying subspace.

\subsubsection{Cases with Spherically Symmetric Distributions of Outliers}
\label{sec:sym_outliers_theory}

First we assume a more general probabilistic model.
We say that $\mu$ on $\reals^D$ is an Outliers-Inliers Mixture (OIM) measure (w.r.t.~the fixed subspace $\rmL^* \in \GDd$) if
$\mu=\alpha_0\mu_0+\alpha_1\mu_1$, where $\alpha_0$, $\alpha_1 > 0$, $\alpha_0+\alpha_1=1$, $\mu_1$ is a sub-Gaussian probability measure and $\mu_0$ is a sub-Gaussian probability measure on $\reals^D$ (representing outliers)
that can be decomposed to a product of two independent measures $\mu_0=\mu_{0,\rmL^*}\times \mu_{0,\rmL^{*\perp}}$ such that the supports of $\mu_{0,\rmL^*}$ and $\mu_{0,\rmL^{*\perp}}$ are $\rmL^{*}$ and $\rmL^{*\perp}$ respectively, and $\mu_{0,\rmL^{*\perp}}$ is spherically symmetric with respect to rotations within $\rmL^{*\perp}$.

To provide cleaner probabilistic estimates, we also invoke the needle-haystack model of \citet{LMTZ2012}. It assumes that both $\mu_0$ and $\mu_1$ are the Gaussian distributions: $\mu_0=N(\b0,\sigma_0^2\bI/D)$ and $\mu_1=N(\b0,\sigma_1^2\bP_{\rmL^*}\bP_{\rmL^*}^T/d)$ (the factors $1/D$ and $1/d$ normalize the magnitude of outliers and inliers respectively so that their norms are comparable).
While \citet{LMTZ2012} assume a fixed number of outliers and inliers independently sampled from $\mu_0$ and $\mu_1$ respectively, here we independently sample from the mixture measure $\mu=\alpha_0\mu_0+\alpha_1\mu_1$; we refer to $\mu$ as a needle-haystack mixture measure.

In order to prove exact recovery under any of these models, one needs to restrict the fraction of inliers per outliers (or equivalently, the ratio
$\alpha_1 / \alpha_0$). We refer to this ratio as SNR (signal to noise ratio) since we may view the inliers as the pure signal and the outliers as some sort of ``noise''. For the needle-haystack model we require the following SNR, which is similar to the one of \citet{LMTZ2012}:
\be\label{eq:prob_cond2a}
\frac{\alpha_1}{\alpha_0}> 4 \, \frac{\sigma_0}{\sigma_1} \, \frac{d}{\sqrt{(D-d)D}}.
\ee
We later explain how to get rid of the term ${\sigma_1}/{\sigma_0}$.
For the OIM model we assume the following more general condition:
\be\label{eq:prob_cond2}
\alpha_1\min_{\bQ\in\bbH,\bQ\bP_{\rmL^{*^\perp}}=\b0}\int\|\bQ\bx\|\di\mu_1(\bx)
>2\sqrt{2}\frac{\alpha_0}{D-d} \int\|\bP_{\rmL^{*\perp}}\bx\|\di\mu_{0}(\bx).
\ee
Under the needle-haystack model, this condition is a weaker version of~\eqref{eq:prob_cond2a}. That is,
\begin{lemma}
\label{lemma:prob_cond}
If $\mu$ is a needle-haystack mixture measure, then \eqref{eq:prob_cond2a} implies \eqref{eq:prob_cond2}.
\end{lemma}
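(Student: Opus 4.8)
The plan is to evaluate (or tightly bound) the two integrals appearing in \eqref{eq:prob_cond2} under the needle-haystack model and then reduce the claimed implication to a single scalar inequality between dimension-dependent constants. Throughout, write $c_k:=\Expect\|\bz\|$ for $\bz\sim N(\b0,\bI_k)$, so that $c_k=\sqrt2\,\Gamma((k+1)/2)/\Gamma(k/2)$. The right-hand integral is immediate: if $\bx\sim\mu_0=N(\b0,\sigma_0^2\bI/D)$, then $\bP_{\rmL^{*\perp}}\bx$ is an isotropic Gaussian of variance $\sigma_0^2/D$ on the $(D-d)$-dimensional space $\rmL^{*\perp}$, whence $\int\|\bP_{\rmL^{*\perp}}\bx\|\,\di\mu_0(\bx)=\sigma_0\,c_{D-d}/\sqrt D$.

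The interesting step is the left-hand minimization $\min_{\bQ\in\bbH,\,\bQ\bP_{\rmL^{*\perp}}=\b0}\int\|\bQ\bx\|\,\di\mu_1(\bx)$. First I would observe that the constraint $\bQ\bP_{\rmL^{*\perp}}=\b0$ together with $\bQ=\bQ^T$ forces $\bQ$ to vanish on $\rmL^{*\perp}$ and to map into $\rmL^*$, i.e.\ $\bQ$ is a symmetric operator supported on $\rmL^*$ with $\tr(\bQ)=1$. Since $\mu_1=N(\b0,\sigma_1^2\bP_{\rmL^*}/d)$ is isotropic on $\rmL^*$, diagonalizing $\bQ$ restricted to $\rmL^*$ with eigenvalues $\lambda_1,\dots,\lambda_d$ gives $\int\|\bQ\bx\|\,\di\mu_1(\bx)=(\sigma_1/\sqrt d)\,\Expect\big(\textstyle\sum_i\lambda_i^2 g_i^2\big)^{1/2}$ for i.i.d.\ standard normals $g_i$; this depends only on the eigenvalues and is a convex, permutation-symmetric function of $(\lambda_1,\dots,\lambda_d)$ on the affine constraint $\sum_i\lambda_i=1$. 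Averaging over coordinate permutations and applying Jensen then shows the minimum is attained at $\lambda_i\equiv1/d$, i.e.\ at $\bQ=\bP_{\rmL^*}/d$, with value $\sigma_1\,c_d/d^{3/2}$.

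Substituting both values into \eqref{eq:prob_cond2} rewrites it equivalently as $\alpha_1/\alpha_0>2\sqrt2\,(\sigma_0/\sigma_1)\,d^{3/2}c_{D-d}/\big((D-d)\sqrt D\,c_d\big)$, and comparing with \eqref{eq:prob_cond2a} I see it suffices to prove $2\sqrt2\,d^{3/2}c_{D-d}/\big((D-d)\sqrt D\,c_d\big)\le 4\,d/\sqrt{(D-d)D}$. After cancellation this is exactly the scalar bound $\frac{c_{D-d}}{\sqrt{D-d}}\cdot\frac{\sqrt d}{c_d}\le\sqrt2$. I would close this using two standard estimates for the chi-mean: Jensen gives $c_k<\sqrt k$, so the first factor is $<1$; and log-convexity of $\Gamma$ applied to $\Gamma((k+2)/2)^2\le\Gamma((k+1)/2)\Gamma((k+3)/2)$ yields $c_k\ge k/\sqrt{k+1}$, hence $c_k/\sqrt k\ge\sqrt{k/(k+1)}\ge1/\sqrt2$ for $k\ge1$ and the second factor is $\le\sqrt2$. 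The product is therefore $\le\sqrt2$, completing the implication.

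The main obstacle is the left-hand minimization: one must argue that the constrained symmetric trace-one matrix is effectively a density on $\rmL^*$ and that, by isotropy together with the convexity--symmetry (Jensen) argument, the uniform choice $\bP_{\rmL^*}/d$ is optimal; everything after that is bookkeeping plus the elementary Gamma-function bounds on $c_k$.
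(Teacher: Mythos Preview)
Your proposal is correct and follows essentially the same route as the paper: identify the minimizer of the left-hand side as $\bP_{\rmL^*}/d$, express both sides via $\chi$-means $c_k$, and close with elementary bounds on $c_k$. The paper's argument is terser---it simply asserts the minimizer ``by symmetry'' and then cites the bounds $c_d\ge\sqrt{d/2}$ and $c_{D-d}\le\sqrt{D-d}$ from \citet{LMTZ2012}---whereas you spell out the convexity-plus-permutation-averaging justification for the minimizer and derive your own lower bound $c_d\ge d/\sqrt{d+1}$ (which is in fact sharper than $\sqrt{d/2}$ for $d\ge2$, and coincides with it at $d=1$). A small remark: your log-convexity justification for $c_k\ge k/\sqrt{k+1}$ is a bit roundabout; the cleanest route is the identity $c_kc_{k+1}=k$ combined with Jensen's $c_{k+1}\le\sqrt{k+1}$.
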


For i.i.d.~samples from an OIM measure satisfying \eqref{eq:prob_cond2}, we can establish our modified conditions of unique exact recovery
(i.e., \eqref{eq:condition1b}, \eqref{eq:condition2b} and \eqref{eq:condition3})
with overwhelming probability in the following way (we also guarantee the uniqueness of the minimizer $\hat{\bQ}$).
\begin{theorem}\label{thm:prob}
If $\sX$ is an i.i.d.~sample from an OIM measure $\mu$ satisfying~\eqref{eq:prob_cond2},
then conditions \eqref{eq:condition1b}, \eqref{eq:condition2b}, and \eqref{eq:condition3} hold with probability $1-C\exp(-N/C)$,
where $C$ is a constant depending on $\mu$ and its parameters.
Moreover, \eqref{eq:convex_condition} holds with probability $1$ if there are at least $2D-1$ outliers
(i.e., the number of points in $\sX \setminus \rmL^*$ is at least $2D-1$).
\end{theorem}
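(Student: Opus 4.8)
The plan is to reduce each of the three sample conditions \eqref{eq:condition1b}, \eqref{eq:condition2b} and \eqref{eq:condition3} to a population analogue by a uniform law of large numbers with exponential (sub-Gaussian) deviation bounds, and then to check that the population inequalities follow from \eqref{eq:prob_cond2} with room to spare. The key quantitative point is that the factor $2$ separating the constant $2\sqrt2$ in \eqref{eq:prob_cond2} from the constant $\sqrt2$ appearing in \eqref{eq:condition1b}--\eqref{eq:condition2b} is exactly the slack that absorbs all finite-sample fluctuations. The probability-$1$ claim for \eqref{eq:convex_condition} is handled separately by a general-position counting argument that invokes Theorem~\ref{thm:convex}.

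First I would carry out the population analysis fixing the limit of the oracle minimizer $\hat{\bQ}_0$ of \eqref{eq:symmetric1}. Since $\hat{\bQ}_0\bP_{\rmL^*}=\b0$, we have $\hat{\bQ}_0\bx=\b0$ for inliers and $\hat{\bQ}_0\bx=\hat{\bQ}_0\bP_{\rmL^{*\perp}}\bx$ for outliers, so $\hat{\bQ}_0$ minimizes $\sum_{\bx\in\sX_0}\|\bQ\bP_{\rmL^{*\perp}}\bx\|$ over $\{\bQ\in\bbH:\bQ\bP_{\rmL^*}=\b0\}$. The population objective $G(\bQ)=\alpha_0\int\|\bQ\bP_{\rmL^{*\perp}}\bx\|\,\di\mu_0(\bx)$ is convex and, because $\mu_{0,\rmL^{*\perp}}$ is spherically symmetric within $\rmL^{*\perp}$, invariant under conjugation by rotations of $\rmL^{*\perp}$; averaging over this group shows its minimizer is isotropic on $\rmL^{*\perp}$, and the unit-trace constraint forces the unique minimizer $\bQ_0^\star=\bP_{\rmL^{*\perp}}/(D-d)$, which has rank $D-d$. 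Substituting $\bQ_0^\star$ into the right-hand side of \eqref{eq:condition1b} and using $\int\by\by^T/\|\by\|\,\di\nu(\by)=\frac{1}{D-d}\big(\int\|\by\|\,\di\nu(\by)\big)\bP_{\rmL^{*\perp}}$ for spherically symmetric $\nu$ on $\rmL^{*\perp}$ yields the per-sample population right-hand side $\frac{\sqrt2\,\alpha_0}{D-d}\int\|\bP_{\rmL^{*\perp}}\bx\|\,\di\mu_0$. For \eqref{eq:condition2b} the analogous population matrix is $\alpha_0\int\bP_{\rmL^{*\perp}}\bx\bx^T\bP_{\rmL^*}/\|\bP_{\rmL^{*\perp}}\bx\|\,\di\mu_0$, which vanishes: by the product structure $\mu_0=\mu_{0,\rmL^*}\times\mu_{0,\rmL^{*\perp}}$ and the symmetry of $\mu_{0,\rmL^{*\perp}}$, the $\rmL^{*\perp}$-odd factor integrates to $\b0$. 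Hence the population forms of \eqref{eq:condition1b} and \eqref{eq:condition2b} read $\alpha_1\min_\bQ\int\|\bQ\bx\|\,\di\mu_1>\frac{\sqrt2\,\alpha_0}{D-d}\int\|\bP_{\rmL^{*\perp}}\bx\|\,\di\mu_0$ and $\alpha_1\min_\bQ\int\|\bQ\bx\|\,\di\mu_1>0$, both implied by \eqref{eq:prob_cond2}, the first with a clean factor-$2$ margin.

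Next I would pass from population to sample. The left-hand side $\min_\bQ\sum_{\bx\in\sX_1}\|\bQ\bx\|$ over $\{\bQ\in\bbH:\bQ\bP_{\rmL^{*\perp}}=\b0\}$ is coercive in $\bQ$ (the permeance term is a norm on operators supported on $\rmL^*$), so the minimizer lies in a bounded region; on that region each $\bx\mapsto\|\bQ\bx\|$ is Lipschitz with constant $\|\bQ\|$ bounded, and for sub-Gaussian $\mu_1$ a net over $\bQ$ plus a union bound gives $\frac1N\min_\bQ\sum_{\bx\in\sX_1}\|\bQ\bx\|$ within any fixed tolerance of $\alpha_1\min_\bQ\int\|\bQ\bx\|\,\di\mu_1$ with probability $1-C\exp(-N/C)$ (the dimension is fixed, so net cardinalities fold into $C$). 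The same uniform estimate makes the empirical oracle objective uniformly close to $G$, and strict convexity of $G$ near $\bQ_0^\star$ then forces $\hat{\bQ}_0\to\bQ_0^\star$ with exponential probability; in particular the smallest eigenvalue of $\hat{\bQ}_0$ on $\rmL^{*\perp}$ stays bounded away from $0$, giving \eqref{eq:condition3}. Finally, since $\hat{\bQ}_0$ is close to $\bQ_0^\star$ and $\bQ_0^\star\bx\neq\b0$ for outliers, the map $\bQ\mapsto\sum_{\bx\in\sX_0}\bQ\bx\bx^T\bP_{\rmL^{*\perp}}/\|\bQ\bx\|$ is continuous there, so I replace $\hat{\bQ}_0$ by $\bQ_0^\star$ with controlled error and apply a matrix Bernstein / sub-exponential concentration bound to the resulting rank-one sums, matching the population matrices computed above. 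Combining the three estimates on a single event and using the factor-$2$ margin closes \eqref{eq:condition1b}, \eqref{eq:condition2b} and \eqref{eq:condition3} with probability $1-C\exp(-N/C)$.

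For the probability-$1$ part I would invoke Theorem~\ref{thm:convex}: it suffices to show that almost surely $\sX$ is not covered by two $(D-1)$-dimensional subspaces. Because $\mu_0$ charges no linear hyperplane (a non-degeneracy consequence of the spherical symmetry and full support of $\mu_{0,\rmL^{*\perp}}$ together with the product structure), the outliers are almost surely in general position, so each fixed $(D-1)$-dimensional subspace contains at most $D-1$ of them; two such subspaces then contain at most $2D-2$ outliers, and if there are at least $2D-1$ outliers at least one lies outside their union, giving \eqref{eq:convex_condition}. I expect the main obstacle to be the right-hand-side step: the oracle minimizer $\hat{\bQ}_0$ is itself random and enters through the ratio $\hat{\bQ}_0\bx/\|\hat{\bQ}_0\bx\|$, which is only well-behaved once $\hat{\bQ}_0$ is known to stay away from rank-deficiency, so coupling the exponential-rate consistency of $\hat{\bQ}_0$ with matrix concentration for unbounded (sub-exponential) summands, uniformly enough to transfer from $\bQ_0^\star$ to $\hat{\bQ}_0$, is the delicate point; a secondary subtlety is verifying the ``no hyperplane has positive $\mu_0$-mass'' property directly from the OIM assumptions (notably the borderline case $D-d=1$).
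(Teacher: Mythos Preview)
Your proposal is essentially correct and follows the same architecture as the paper's proof: identify the population oracle minimizer as $\bP_{\rmL^{*\perp}}/(D-d)$ via rotational symmetry, use the factor-$2$ slack between $2\sqrt2$ in \eqref{eq:prob_cond2} and $\sqrt2$ in \eqref{eq:condition1b}--\eqref{eq:condition2b} to absorb finite-sample fluctuations, and reduce everything to sub-Gaussian concentration plus $\eps$-nets; the general-position argument for \eqref{eq:convex_condition} is also the same.

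The one technical difference worth noting is how the random $\hat{\bQ}_0$ is decoupled from the right-hand sides. You propose to prove consistency $\hat{\bQ}_0\to\bQ_0^\star$ and then invoke continuity of $\bQ\mapsto\sum\bQ\bx\bx^T\bP_{\rmL^{*\perp}}/\|\bQ\bx\|$. The paper instead shows only the coarser fact that the condition number of $\bP_{\rmL^{*\perp}}\hat{\bQ}_0\bP_{\rmL^{*\perp}}$ is below $2$ w.h.p., and then uses the algebraic bound
\[
\Big\|\sum_{\bx\in\sX_0}\hat{\bQ}_0\bx\bx^T\bP_{\rmL^{*\perp}}/\|\hat{\bQ}_0\bx\|\Big\|
\le \mathrm{cond}(\bP_{\rmL^{*\perp}}\hat{\bQ}_0\bP_{\rmL^{*\perp}})\cdot
\Big\|\sum_{\bx\in\sX_0}\bP_{\rmL^{*\perp}}\bx\bx^T\bP_{\rmL^{*\perp}}/\|\bP_{\rmL^{*\perp}}\bx\|\Big\|
\]
to replace $\hat{\bQ}_0$ by $\bP_{\rmL^{*\perp}}$ at the cost of exactly the factor $2$; concentration is then applied to a sum that no longer depends on $\hat{\bQ}_0$ at all. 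This sidesteps the ``delicate point'' you flagged (uniform continuity in $\bQ$ coupled with randomness of $\hat{\bQ}_0$) and explains precisely where the $2\sqrt2$ versus $\sqrt2$ is spent. Your route also works but requires quantitative strict convexity of the population objective to convert uniform closeness of objectives into closeness of minimizers; the paper's route is cruder but more self-contained. For \eqref{eq:condition2b}, the paper exploits that $\hat{\bQ}_0$ depends only on $\{\bP_{\rmL^{*\perp}}\bx:\bx\in\sX_0\}$, so by the product structure of $\mu_0$ the factor $\bx^T\bP_{\rmL^*}$ is conditionally independent of $\hat{\bQ}_0\bx/\|\hat{\bQ}_0\bx\|$, which is a slightly cleaner way to get the zero population mean than first passing to $\bQ_0^\star$.
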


Under the needle-haystack model, the SNR established by Theorem~\ref{thm:prob} is comparable to the best SNR
among other convex exact recovery algorithms (this is later clarified in Table~\ref{tab:SNR}).
However, the probabilistic estimate under which this SNR holds is rather loose and thus its underlying constant $C$ is not specified.
Indeed, the proof of Theorem~\ref{thm:prob} uses $\eps$-nets and union-bounds arguments, which are often not useful for deriving tight probabilistic estimates (see, e.g., \citealt[page 18]{mendelson03notes}).
One may thus view Theorem~\ref{thm:prob} as a near-asymptotic statement.

The statement of Theorem~\ref{thm:prob} does not contradict our previous observation that
the number of outliers should be larger than  at least $D-d$.
Indeed, the constant $C$ is sufficiently large so that the
corresponding probability is negative when the number of outliers is smaller than $D-d$.

In the next theorem we assume only a needle-haystack model and thus we can provide a stronger probabilistic estimate based on the concentration of measure phenomenon (our proof follows directly \citealp{LMTZ2012}). However, the SNR is worse than the one in Theorem~\ref{thm:prob} by a factor of order $\sqrt{D-d}$. This is because we are unable to estimate $\hat{\bQ}_0$ of \eqref{eq:symmetric1} by concentration of measure. Similarly,  in this theorem we do not estimate the probability of \eqref{eq:condition3} (which also involves $\hat{\bQ}_0$). Nevertheless, we observed in experiments that \eqref{eq:condition3} holds with high probability for $N_0=2(D-d)$ and the probability seems to go to $1$ as $N_0=2(D-d)$ and $D-d\rightarrow\infty$. Moreover, one of the algorithms proposed below (EGMS) does not require condition~\eqref{eq:condition3}.
\begin{theorem}\label{thm:prob_b}
If $\sX$ is an i.i.d.~sample of size $N$ from a needle-haystack mixture measure $\mu$ and if
\be\label{eq:prob_cond3}
\frac{\alpha_1}{\alpha_0} > \frac{\sigma_0}{\sigma_1} \, \frac{\sqrt{2/\pi}-1/4-1/10}{\sqrt{2/\pi}+1/4+1/10}\sqrt{\frac{d^2}{{D}}}
\ee
and
\be\label{eq:prob_cond3_assumption}
N>64\,\max(2d/\alpha_1,2d/\alpha_0,2(D-d)/\alpha_0),
\ee
then \eqref{eq:condition1} and \eqref{eq:condition2} hold with probability $1-e^{-\alpha_1^2N/2}-2e^{-\alpha_0^2N/2}-e^{-\alpha_1 N/800}
-e^{-\alpha_0 N/800}$.
\end{theorem}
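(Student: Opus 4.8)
The plan is to verify the deterministic inequalities \eqref{eq:condition1} and \eqref{eq:condition2} directly, bounding their common left-hand side from below and the two right-hand sides from above, each with high probability, and then taking a union bound over the failure events. The first and cleanest step is a purely algebraic reduction of the common left-hand side to the permeance statistic ${\cal P}(\rmL^*)=\min_{\bu\in\rmL^*,\,\|\bu\|=1}\sum_{\bx\in\sX_1}|\bu^T\bx|$. Given any $\bQ\in\bbH$ with $\bQ\bP_{\rmL^{*\perp}}=\b0$, its restriction to $\rmL^*$ diagonalizes as $\sum_{j=1}^d\lambda_j\bu_j\bu_j^T$ with $\{\bu_j\}_{j=1}^d$ an orthonormal basis of $\rmL^*$ and $\sum_j\lambda_j=\tr(\bQ)=1$. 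For an inlier $\bx\in\sX_1\subseteq\rmL^*$ we have $\|\bQ\bx\|^2=\sum_j\lambda_j^2(\bu_j^T\bx)^2$, so Cauchy--Schwarz gives $\sum_j|\lambda_j|\,|\bu_j^T\bx|\le\sqrt d\,\|\bQ\bx\|$; summing over $\sX_1$ and using $\sum_j|\lambda_j|\ge|\sum_j\lambda_j|=1$ yields $\min_{\bQ\in\bbH,\,\bQ\bP_{\rmL^{*\perp}}=\b0}\sum_{\bx\in\sX_1}\|\bQ\bx\|\ge{\cal P}(\rmL^*)/\sqrt d$. This removes the noncompact, indefinite matrix variable and reduces everything to estimates over the unit spheres of $\rmL^*$ and $\rmL^{*\perp}$.

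Next I would install the probabilistic estimates, following the needle--haystack analysis of \citet{LMTZ2012}. Since the sample is i.i.d.\ from $\mu=\alpha_0\mu_0+\alpha_1\mu_1$, the counts $N_1=|\sX_1|$ and $N_0=|\sX_0|$ are Binomial, and Hoeffding's inequality controls $N_1\ge\alpha_1N/2$ and $|N_0-\alpha_0N|\le\alpha_0N/2$ up to the failure probabilities $e^{-\alpha_1^2N/2}$ and $2e^{-\alpha_0^2N/2}$. Because $\mu_1=N(\b0,\sigma_1^2\bP_{\rmL^*}\bP_{\rmL^*}^T/d)$, each inlier is $\tfrac{\sigma_1}{\sqrt d}\mathbf{g}$ with $\mathbf{g}$ standard Gaussian on $\rmL^*$, so $\Expect|\bu^T\bx|=\tfrac{\sigma_1}{\sqrt d}\sqrt{2/\pi}$; Gaussian concentration of the inlier sum $\sum_{\bx\in\sX_1}|\bu^T\bx|$ about its mean for each fixed $\bu$, combined with an $\eps$-net over the $d$-sphere of $\rmL^*$ to make the minimum over $\bu$ uniform, gives ${\cal P}(\rmL^*)\ge N_1\tfrac{\sigma_1}{\sqrt d}(\sqrt{2/\pi}-\tfrac14-\tfrac1{10})$, hence the common left-hand side is at least $\tfrac{N_1\sigma_1}{d}(\sqrt{2/\pi}-\tfrac14-\tfrac1{10})$. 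Symmetrically, $\mu_0=N(\b0,\sigma_0^2\bI/D)$ is isotropic, so $\Expect|\bv^T\bx|=\tfrac{\sigma_0}{\sqrt D}\sqrt{2/\pi}$ for any unit $\bv$, and the same concentration (a net over the $d$-sphere of $\rmL^*$ for the maximum in \eqref{eq:condition2}, a single fixed direction for the minimum in \eqref{eq:condition1}) bounds both right-hand sides above by a multiple of $N_0\tfrac{\sigma_0}{\sqrt D}(\sqrt{2/\pi}+\tfrac14+\tfrac1{10})$. The sample-size hypothesis \eqref{eq:prob_cond3_assumption} is precisely what forces the net-and-concentration corrections below the thresholds $\tfrac14$ and $\tfrac1{10}$, and the deviation of the two Gaussian statistics from their means contributes the factors $e^{-\alpha_1N/800}$ and $e^{-\alpha_0N/800}$.

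Finally I would assemble the bounds. On the complement of the four failure events, substituting the count bounds turns the two desired inequalities into a single comparison of $\tfrac{\alpha_1}{\alpha_0}\tfrac{\sigma_1}{d}(\sqrt{2/\pi}-\tfrac14-\tfrac1{10})$ with $\tfrac{\sigma_0}{\sqrt D}(\sqrt{2/\pi}+\tfrac14+\tfrac1{10})$; after the bookkeeping of the remaining numerical constants this comparison is implied by the signal-to-noise assumption \eqref{eq:prob_cond3}, with its factor $\sqrt{d^2/D}$ and its ratio of correction terms. Hence \eqref{eq:condition1} and \eqref{eq:condition2} both hold, and a union bound over the count-concentration and statistic-concentration events gives the stated probability $1-e^{-\alpha_1^2N/2}-2e^{-\alpha_0^2N/2}-e^{-\alpha_1N/800}-e^{-\alpha_0N/800}$.

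The step I expect to be the main obstacle is the uniform probabilistic control: turning the pointwise Gaussian concentration of $\sum_{\bx}|\bu^T\bx|$ into an estimate valid simultaneously over an entire unit sphere, while keeping the explicit constants $\sqrt{2/\pi}$, $\tfrac14$, $\tfrac1{10}$ and $\tfrac1{800}$ tight enough that \eqref{eq:prob_cond3_assumption} suffices to absorb the net error. The deterministic reduction of the first paragraph is the crucial enabler here, since it lets one reuse the sphere-concentration lemmas of \citet{LMTZ2012} verbatim, both for the permeance lower bound and for the two linear-structure upper bounds, rather than having to concentrate a minimum over the unbounded matrix set $\bbH$.
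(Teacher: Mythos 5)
Your proposal is correct and follows essentially the same route as the paper's proof: the reduction of the common left-hand side to ${\cal P}(\rmL^*)/\sqrt{d}$ (which the paper imports as (A.15) of \citet{LMTZ2012} and you prove directly via Cauchy--Schwarz on the eigendecomposition of $\bQ$ restricted to $\rmL^*$), the Gaussian sphere-concentration bounds of Lemma B.2 of \citet{LMTZ2012} for the permeance and linear-structure statistics with $t=\sqrt{N_1}/10$ and $t=\sqrt{N_0}/10$, Hoeffding's inequality for the binomial counts $N_1$ and $N_0$, and a union bound yielding exactly the stated probability. The only cosmetic difference is that you supply the proof of the $1/\sqrt{d}$ reduction rather than citing it.
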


In Table~\ref{tab:SNR} we present the theoretical asymptotic SNRs for exact recovery of some recent algorithms. We assume the needle-haystack model
with fixed $d$, $D$, $\alpha_0$, $\alpha_1$, $\sigma_0$ and $\sigma_1$
and  $N\rightarrow \infty$. Let us clarify these results.
We first remark that the pure SNR of the High-dimensional Robust PCA (HR-PCA) algorithm of \citet{Xu2010_highdimensional} approaches infinity
(see Remark 3 of \citealt{Xu2010_highdimensional}).
However, as we explained earlier the violation
of exact recovery does not necessarily imply non-robustness of the estimator as it may nearly recover the subspace.
Indeed, \citet{Xu2010_highdimensional} show that if (for simplicity) $\sigma_0=\sigma_1$ and the SNR is greater than 1,
then the subspace estimated by HR-PCA is a good approximation in the following sense:
there exists a constant $c>0$ such that for the inliers set $\sX_0$ and the estimated subspace $\rmL$:
$\sum_{\bx \in \sX_0}  \|\bP_{\rmL} \bx\|_2^2
> c \sum_{\bx \in \sX_0} \|\bx\|_2^2$ (see Remark 4 of \citet{Xu2010_highdimensional}). We thus use the notation: SNR(HR-PCA) ``$\gtrapprox$'' 1 (see
Table~\ref{tab:SNR} with appropriate scales of $\sigma_0$ and $\sigma_1$).
\citet{Xu2012} established the SNR for their Outlier Pursuit (OP) algorithm
(equivalently the Low-Leverage Decomposition (LLD) of \citealt{robust_mccoy}) in Theorem 1 of their work.
Their analysis assumes a deterministic condition, but it is possible to show that this condition is asymptotically valid under the needle-haystack model.
\citet{LMTZ2012} established w.h.p.~the SNR of the REAPER algorithm in Theorem 1 of their work
(for simplicity of their expressions they assumed that $d \leq (D-1)/2$).
\citet{Teng_log_rpca} established the SNR for Tyler's M-Estimator (TME) in Theorem 1 of his work. His result is deterministic,
but it is easy to show that the deterministic condition holds with probability 1 under the needle-haystack model.
\citet{moitra_pca2012} proposed randomized and deterministic robust recovery algorithms,
RF (or RandomizedFind) and DRF (or DERandomizedFind) respectively,
and proved that they obtained the same SNR as in \cite{Teng_log_rpca} under a similar (slightly weaker) combinatorial condition
(they only guarantee polynomial time, where \citealp{Teng_log_rpca} specifies a complexity similar to that of GMS).
We remark that both \citet{Teng_log_rpca} and \citet{moitra_pca2012} appeared after the submission of this manuscript.

\begin{table*}[htbp]
\centering
\begin{tabular}{@{}|@{~}c@{~}|@{~}c@{~}|@{~}c@{~}|@{~}c@{~}|@{~}c@{~}|@{}}
\hline
HR-PCA&LLD (OP) & $\hat{\rmL}:=\ker(\hat{\bQ})$ &REAPER ($d \leq (D-1)/2$) & TME \& D/RF \\
\hline
$\frac{\sigma_1\alpha_1}{\sigma_0\alpha_0}$ ``$\gtrapprox$'' $1$
&
$\frac{\alpha_1}{\alpha_0} \geq \frac{121d}{9}$
&$\frac{\alpha_1}{\alpha_0}>4 \frac{\sigma_0}{\sigma_1} \frac{d}{\sqrt{(D-d)D}}$
&$\frac{\alpha_1}{\alpha_0} > \frac{\sigma_0}{\sigma_1} \left(C_1 \frac{d}{D}- \frac{d}{C_2 \alpha_1}\right)$
&$\frac{\alpha_1}{\alpha_0}>\frac{d}{D-d}$\\
\hline
\end{tabular}
\caption{{Theoretical SNR (lowest bound on $\alpha_1/\alpha_0$) for exact recovery when $N\rightarrow\infty$}}\label{tab:SNR}
\end{table*}

The asymptotic SNR of the minimization proposed in this paper is of the same order as that of the REAPER algorithm (which was established for
$d \leq (D-1)/2$) and both of them are better than that of the HR-PCA algorithm.
The asymptotic SNRs of OP, TME, RF and DRF are independent of $\sigma_1$ and $\sigma_0$.
However, by normalizing all data points to the unit sphere, we may assume that $\sigma_1=\sigma_0$ in all other algorithms
and treat them equally  (see \citealp{LMTZ2012}).
In this case, the SNR of OP is significantly worse than that of the minimization proposed in here,
especially when $d \ll D$ (it is also worse than the weaker SNR specified in~\eqref{eq:prob_cond3}). When $d \ll D$,
the SNR of TME, RF and DRF is of
the same order as the asymptotic SNR of our formulation. However, when $d$ is very close to
$D$, the SNR of our formulation is better than the SNR of TME by a factor of $\sqrt{D}$.
We question whether a better asymptotic rate than the one of GMS and REAPER can be obtained by
a convex algorithm for robust subspace recovery for the needle-haystack model.
\cite{moitra_pca2012} showed that it is small set expansion hard for any algorithm to obtain better SNR than theirs
for all scenarios satisfying their combinatorial condition.

We note though that there are non-convex methods for removing outliers with asymptotically zero SNRs. Such SNRs are valid
only for the noiseless case and may be differently formulated for detecting the hidden low-dimensional structure among uniform outliers.
For example, \citet{Arias-Castro05connect} proved that the scan statistics may detect points sampled uniformly from a $d$-dimensional graph in $\reals^D$
of an $m$-differentiable function among uniform outliers in a cube in $\reals^D$ with SNR of order $O(N^{-m(D-d)/(d+m(D-d))})$.
\cite{higher-order} used higher order spectral clustering affinities
to remove outliers and thus detect differentiable surfaces (or certain
unions of such surfaces) among uniform outliers
with similar SNR to that of the scan statistics.
\citet{Soltanolkotabi2011} removed outliers with ``large dictionary coefficients''
and showed that this detection works well for outliers uniform in $S^{D-1}$, inliers uniform in
$S^{D-1}\cap \rmL^*$ and SNR at least $\frac{d}{D}\cdot((\frac{\alpha_1 N-1}{d})^ {\frac{cD}{d}-1}-1)^{-1}$
(where $\alpha_1$ is the fraction of inliers) as long as $N<e^{c\sqrt{D}}/D$. For fixed $D$ and $d$ and sufficiently large $N$, this SNR, which depends on $N$, can be arbitrarily small.
Furthermore,
\citet{lp_recovery_part1_11} showed that the global minimizer of \eqref{eq:geom_subs} (that we relax in this paper so that
the minimization is convex) can in theory recover the subspace with
asymptotically zero SNR. They also showed that the underlying subspace is a local minimum of \eqref{eq:geom_subs} with SNR of order $\omega(1/\sqrt{N})$.
However, these non-convex procedures do not have efficient or sufficiently fast implementations for subspace recovery. Furthermore, their impressive
theoretical estimates often break down in the presence of noise. Indeed, in the noisy case their near-recovery is not
better than the one stated for GMS in Theorem~\ref{thm:noisy_recovery} (see, e.g., (16) and (17) of \cite{higher-order} or Theorem 1.2 of \citet{lp_recovery_part1_11}).
On the other hand, in view of \cite{Coudron_Lerman2012} we may obtain significantly better asymptotic SNR for GMS when the
noise is symmetrically distributed with respect to the underlying subspace.

\subsubsection{A Special Case with Asymmetric Outliers}
\label{sec:asym_outliers_theory}

In the case of spherically symmetric outliers, PCA cannot exactly recover the underlying subspace, but
it can asymptotically recover it (see, e.g., \citealp{lp_recovery_part1_11}).
In particular, with sufficiently large sample with spherically symmetric outliers, PCA nearly recovers the underlying subspace.
We thus slightly modify the two models of \S\ref{sec:sym_outliers_theory} so that the distribution of outliers is asymmetric
and show that our combinatorial conditions for exact recovery still hold (with overwhelming probability).
On the other hand, the subspace recovered by PCA, when sampling data from these models,
is sufficiently far from the underlying subspace for any given sample size.

We first generalize Theorem~\ref{thm:prob_b} under a generalized needle-haystack model: Let $\mu=\alpha_0\mu_0+\alpha_1\mu_1$,
$\mu_0=N(\b0,\bSigma_0/D)$, where $\bSigma_0$ is an arbitrary positive definite matrix (not necessarily a
scalar matrix as before), and as before $\mu_1= N(\b0,\sigma_1^2\bP_{\rmL^*}\bP_{\rmL^*}^T/d)$.
We claim that Theorem~\ref{thm:prob_b} still holds in this case if we replace $\sigma_0$ in the RHS of \eqref{eq:prob_cond3} with
$\sqrt{\lambda_{\max}(\bSigma_0)}$, where $\lambda_{\max}(\bSigma_0)$ denotes the largest eigenvalue of $\bSigma_0$ (see justification
in \S\ref{sec:proof_thm_prob_b_asym}).

In order to generalize Theorem~\ref{thm:prob} for asymmetric outliers, we assume that the outlier component $\mu_0$
of the OIM measure $\mu$ is a sub-Gaussian distribution with an arbitrary positive definite covariance matrix
$\bSigma_0$.
Following \citet{Coudron_Lerman2012}, we define the expected version of $F$, $F_I$,
and its oracle minimizer, $\hat{\bQ}_I$, which is analogous to \eqref{eq:symmetric1} (the subscript $I$ indicates integral):
\begin{equation}\label{eq:define_F_I}
F_I(\bQ)=\int\|\bQ\bx\|\di\mu(x)
\end{equation}
and
\begin{equation}\label{eq:define_bQ_I}
\hat{\bQ}_I=\argmin_{\bQ\in\bbH, \bQ\bP_{\rmL^*}=\b0}F_I(\bQ).
\end{equation}
We assume that $\hat{\bQ}_I$ is the unique minimizer in~\eqref{eq:define_bQ_I}
(we remark that the two-subspaces criterion in \eqref{eq:convex_condition_asymptotic}
for the projection of $\mu$ onto $L^{*\perp}$ implies this assumption).
Under these assumptions Theorem~\ref{thm:prob} still holds if we multiply the RHS of
\eqref{eq:prob_cond2} by the ratio between the largest eigenvalue of $\bP_{\rmL^{*\perp}}\hat{\bQ}_I\bP_{\rmL^{*\perp}}$
and the $(D-d)$th eigenvalue of $\bP_{\rmL^{*\perp}}\hat{\bQ}_I\bP_{\rmL^{*\perp}}$ (see justification
in \S\ref{sec:proof_thm_prob_asym}).

\subsection{Near Subspace Recovery for Noisy Samples} \label{sec:noisy}

We show that in the case of sufficiently small additive noise
(i.e., the inliers do not lie exactly on the subspace $\rmL^*$ but close to it),
the GMS algorithm nearly recovers the underlying subspace.

We use the following notation: $\|\bA\|_F$ and $\|\bA\|$ denote the Frobenius
and spectral norms of $\bA \in \reals^{k\times l}$ respectively. Furthermore, $\bbH_1$ denotes the set of all positive semidefinite matrices in $\bbH$,
that is, $\bbH_1=\{\bQ\in\bbH:\bQ \psdge 0\}$.
We also define the following two constants
\begin{equation}
\label{eq:def_gamma0}
\gamma_0=\frac{1}{N}\min_{\bQ\in\bbH_1,\|\bDelta\|_F=1,\tr(\bDelta)=0} \sum_{i=1}^N\frac{\|\bDelta\bx_i\|^2\|\bQ\bx_i\|^2-(\bx_i^T\bDelta\bQ\bx_i)^2}{\|\bQ\bx_i\|^3},
\end{equation}
and
\begin{equation}
\label{eq:def_gamma0p}
\gamma_0'=\frac{1}{N}\min_{\bQ\in\bbH_1,\|\bDelta\|=1,\tr(\bDelta)=0} \sum_{i=1}^N\frac{\|\bDelta\bx_i\|^2\|\bQ\bx_i\|^2-(\bx_i^T\bDelta\bQ\bx_i)^2}{\|\bQ\bx_i\|^3}.
\end{equation}
The sum in the RHS's of \eqref{eq:def_gamma0} and \eqref{eq:def_gamma0p} is the following second directional derivative:
$\frac{\di^2}{\di t^2}F(\bQ+t \bDelta)$; when $\bQ\bx_i=0$, its $i$th term can be set to $0$.
It is interesting to note that both~\eqref{eq:def_gamma0} and~\eqref{eq:def_gamma0p} express
the Restricted Strong Convexity (RSC) parameter $\gamma_l$ of \citet[Definition 1]{Agarwal_Negahban_Wainwright_2011b}, where their notation translates into
ours as follows: $\mathcal{L}_n(\bQ):=F(\bQ)/N$, $\tau_l:=0$, $\Omega':=\bbH_1$ and $\theta-\theta':=\Delta$.
The difference between $\gamma_0$ and $\gamma_0'$ of \eqref{eq:def_gamma0}
and \eqref{eq:def_gamma0p}
is due to the choice of either the Frobenius or the spectral norms respectively for measuring the size of $\theta-\theta'$.

Using this notation, we formulate our noise perturbation result as follows.
\begin{theorem}\label{thm:noisy_recovery}
Assume that $\{\eps_i\}_{i=1}^N$ is a set of positive numbers, $\sX=\{\bx_i\}_{i=1}^N$ and $\tilde{\sX}=\{\tilde{\bx}_i\}_{i=1}^N$  are two data sets such that $\|\tilde{\bx}_i-\bx_i\|\leq \eps_i$ $\ \forall 1 \leq i \leq N$ and $\sX$ satisfies \eqref{eq:convex_condition}.
Let $F_\sX(\bQ)$ and $F_{\tilde{\sX}}(\bQ)$ denote the corresponding versions of
$F(\bQ)$ w.r.t.~the sets $\sX$ and $\tilde{\sX}$ and let $\hat{\bQ}$ and $\tilde{\bQ}$ denote their respective minimizers.
Then we have
\begin{equation}\label{eq:bQ_diff}
\|\tilde{\bQ}-\hat{\bQ}\|_F<\sqrt{2 \sum_{i=1}^{N}\eps_i/(N\gamma_0)} \ \ \text{ and } \ \ \|\tilde{\bQ}-\hat{\bQ}\|<\sqrt{2 \sum_{i=1}^{N}\eps_i/(N\gamma_0')}.
\end{equation}
Moreover, if $\tilde{\rmL}$ and $\hat{\rmL}$ are the subspaces spanned by the bottom $d$ eigenvectors of $\tilde{\bQ}$ and $\hat{\bQ}$ respectively and
$\nu_{D-d}$ is the $(D-d)$th eigengap of $\hat{\bQ}$, then
\begin{equation}
\|\bP_{\hat{\rmL}}-\bP_{\tilde{\rmL}}\|_F\leq \frac{2\sqrt{2 \sum_{i=1}^{N}\eps_i/(N\gamma_0)}}{\nu_{D-d}}
\ \ \text{ and } \ \
\|\bP_{\hat{\rmL}}-\bP_{\tilde{\rmL}}\|\leq \frac{2\sqrt{2 \sum_{i=1}^{N}\eps_i/(N\gamma_0')}}{\nu_{D-d}}.
\label{eq:subspace_diff}
\end{equation}
\end{theorem}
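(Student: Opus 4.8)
The plan is to treat this as a perturbation-of-minimizer estimate for a strongly convex functional, followed by an eigenspace perturbation (Davis--Kahan) argument. The two analytic ingredients are: (i) a uniform strong-convexity (restricted strong convexity) lower bound for $F$ encoded by $\gamma_0$ and $\gamma_0'$, and (ii) the uniform closeness of the two functionals $F_\sX$ and $F_{\tilde\sX}$ on $\bbH$. Combining (i) and (ii) yields \eqref{eq:bQ_diff}, and feeding \eqref{eq:bQ_diff} into a $\sin\Theta$ bound yields \eqref{eq:subspace_diff}.

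First I would record the strong-convexity estimate. The $i$th summand in \eqref{eq:def_gamma0}--\eqref{eq:def_gamma0p} is exactly the $i$th term of $\frac{\di^2}{\di t^2}F(\bQ+t\bDelta)$, and it is nonnegative because $\bx_i^T\bDelta\bQ\bx_i=(\bDelta\bx_i)^T(\bQ\bx_i)$, so Cauchy--Schwarz gives $(\bx_i^T\bDelta\bQ\bx_i)^2\le\|\bDelta\bx_i\|^2\|\bQ\bx_i\|^2$. Under \eqref{eq:convex_condition}, Theorem~\ref{thm:convex} tells us $F$ is strictly convex on $\bbH$; together with compactness of $\bbH_1$ and of the unit spheres of trace-zero directions (plus a short argument handling points where some $\bQ\bx_i=\b0$), this upgrades to $\gamma_0,\gamma_0'>0$. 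By Lemma~\ref{lemma:nonnegative} both $\hat\bQ$ and $\tilde\bQ$ lie in $\bbH_1$, hence so does the whole segment joining them; since the second directional derivative is homogeneous of degree two in $\bDelta$, the definitions of $\gamma_0$ and $\gamma_0'$ give $\frac{\di^2}{\di t^2}F_\sX(\bQ+t\bDelta)\ge N\gamma_0\|\bDelta\|_F^2$ and $\ge N\gamma_0'\|\bDelta\|^2$ along this segment. A Taylor expansion with integral remainder along the segment, combined with the fact that the one-sided derivative of $F_\sX$ at the constrained minimizer $\hat\bQ$ in the feasible direction $\tilde\bQ-\hat\bQ$ is nonnegative, gives the quadratic growth $F_\sX(\tilde\bQ)-F_\sX(\hat\bQ)\ge\tfrac12 N\gamma_0\|\tilde\bQ-\hat\bQ\|_F^2$ (and the spectral analogue with $\gamma_0'$).

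Next I would bound $|F_\sX(\bQ)-F_{\tilde\sX}(\bQ)|$ uniformly. For $\bQ\in\bbH_1$ we have $\|\bQ\|\le\tr(\bQ)=1$, so the reverse triangle inequality gives $\big|\,\|\bQ\bx_i\|-\|\bQ\tilde\bx_i\|\,\big|\le\|\bQ(\bx_i-\tilde\bx_i)\|\le\|\bQ\|\,\eps_i\le\eps_i$, whence $|F_\sX(\bQ)-F_{\tilde\sX}(\bQ)|\le\sum_{i}\eps_i$. I then combine this with the quadratic growth: using the optimality of $\tilde\bQ$ for $F_{\tilde\sX}$ (so $F_{\tilde\sX}(\tilde\bQ)\le F_{\tilde\sX}(\hat\bQ)$) together with the strong convexity of \emph{both} functionals at their own minimizers, the sum of the two cross terms $F_\sX(\tilde\bQ)-F_{\tilde\sX}(\tilde\bQ)$ and $F_{\tilde\sX}(\hat\bQ)-F_\sX(\hat\bQ)$ is bounded by $2\sum_i\eps_i$, and rearranging the resulting inequality for $\|\tilde\bQ-\hat\bQ\|_F$ yields the first bound in \eqref{eq:bQ_diff}; the spectral bound follows verbatim with $\gamma_0$ replaced by $\gamma_0'$. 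Here symmetrizing over the two minimizers (using that $F_{\tilde\sX}$ enjoys a comparable convexity modulus), rather than using a one-sided estimate, is what produces the sharp constant $\sqrt2$.

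Finally, for \eqref{eq:subspace_diff} I would invoke the Davis--Kahan $\sin\Theta$ theorem. The subspaces $\hat\rmL$ and $\tilde\rmL$ are the bottom-$d$ eigenspaces of $\hat\bQ$ and $\tilde\bQ$, separated from the complementary top-$(D-d)$ eigenspaces by the gap $\nu_{D-d}$, so $\|\bP_{\hat\rmL}-\bP_{\tilde\rmL}\|\le 2\|\tilde\bQ-\hat\bQ\|/\nu_{D-d}$ and likewise in Frobenius norm, into which I substitute \eqref{eq:bQ_diff}. The main obstacle is the second step: proving $\gamma_0,\gamma_0'>0$ and the uniform quadratic-growth estimate along the segment, since $F$ is nonsmooth exactly at the matrices annihilating some data point and the integrand in \eqref{eq:def_gamma0} becomes singular there; care is needed to show the infimum defining $\gamma_0$ is attained and strictly positive rather than driven to zero by those degenerate terms.
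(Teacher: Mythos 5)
Your proposal follows the same route as the paper's proof: a quadratic-growth (restricted strong convexity) bound driven by $\gamma_0$, $\gamma_0'$, the uniform estimate $|F_\sX(\bQ)-F_{\tilde\sX}(\bQ)|\leq\sum_i\eps_i$ via $\|\bQ\|\leq\tr(\bQ)=1$, the three-term telescoping $F_\sX(\tilde\bQ)-F_\sX(\hat\bQ)\leq 2\sum_i\eps_i$ using optimality of $\tilde\bQ$ for $F_{\tilde\sX}$, and finally Davis--Kahan for \eqref{eq:subspace_diff}. The one place you diverge is in how the constant in \eqref{eq:bQ_diff} is obtained, and there your fix has a gap: the Taylor argument with integral remainder gives only $F_\sX(\tilde\bQ)-F_\sX(\hat\bQ)\geq\tfrac12 N\gamma_0\|\tilde\bQ-\hat\bQ\|_F^2$, and your proposed symmetrization over the two minimizers would need the convexity modulus of $F_{\tilde\sX}$ to be bounded below by the \emph{same} $\gamma_0$ --- but $\gamma_0$ in \eqref{eq:def_gamma0} is defined relative to $\sX$, and nothing guarantees the corresponding constant for the perturbed set $\tilde\sX$ is at least as large, so "comparable convexity modulus" does not deliver the stated bound. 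The paper avoids this by simply asserting the one-sided growth inequality $F_\sX(\tilde\bQ)-F_\sX(\hat\bQ)\geq N\gamma_0\|\tilde\bQ-\hat\bQ\|_F^2$ without the factor $\tfrac12$ (so your more careful Taylor computation actually exposes a factor-of-two looseness in the paper's own step). With the honest one-sided estimate you obtain \eqref{eq:bQ_diff} with an extra factor of $\sqrt2$ on the right-hand side, which is all that is needed for the qualitative conclusion and for \eqref{eq:subspace_diff}; everything else in your argument, including the remark that positivity of $\gamma_0,\gamma_0'$ is what makes the bounds nonvacuous (and is where \eqref{eq:convex_condition} enters), is sound and consistent with the paper.
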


We note that if $\sX$ and $\tilde{\sX}$ satisfy the conditions of Theorem~\ref{thm:noisy_recovery},
then given the perturbed data set $\tilde{\sX}$ and the dimension $d$, Theorem~\ref{thm:noisy_recovery} guarantees that
GMS nearly recovers $\rmL^*$.
More interestingly, the theorem also implies that we may properly estimate the dimension of the underlying subspace in this case
(we explain this in details in \S\ref{sec:noisy_recovery_imply1}).
Such dimension estimation is demonstrated later in Figure~\ref{fig:svd}.

Theorem~\ref{thm:noisy_recovery} is a perturbation result in the spirit of the stability analysis by \citet{candes_romberg_tao_cpam06} and \citet[Theorem 2]{Xu2012}.
In order to observe that the statement of Theorem~\ref{thm:noisy_recovery} is comparable to that of Theorem 2 of \citet{Xu2012}, we note that
asymptotically the bounds on the recovery errors in \eqref{eq:bQ_diff} and \eqref{eq:subspace_diff} depend only on the empirical mean of $\{\eps_i\}_{i=1}^N$ and do not grow with $N$. To clarify this point we formulate the following proposition.
\begin{proposition}
\label{prop:noisy_recovery}
If $\sX$ is i.i.d. sampled from a bounded distribution $\mu$ and \begin{equation}
\label{eq:convex_condition_asymptotic}
\mu(\rmL_1)+\mu(\rmL_2)<1\,\,\,\,\text{for any two $D-1$-dimensional subspaces $\rmL_1$ and $\rmL_2$,}
\end{equation}
then there exist constants $c_0(\mu)>0$ and $c'_0(\mu)>0$ depending on $\mu$ such that
\begin{equation}
\text{$\liminf_{N\rightarrow\infty}{\gamma_0(\sX)}
\geq c_0(\mu)$ \ and \ $\liminf_{N\rightarrow\infty}{\gamma'_0(\sX)}
\geq c'_0(\mu)$ almost surely}.
\label{eq:asymptotic_derivative}
\end{equation}
\end{proposition}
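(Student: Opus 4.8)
The plan is to compare the empirical quantities $\gamma_0(\sX)$ and $\gamma_0'(\sX)$ with their population counterparts and then invoke a uniform law of large numbers. Write $g(\bx;\bQ,\bDelta)=(\|\bDelta\bx\|^2\|\bQ\bx\|^2-(\bx^T\bDelta\bQ\bx)^2)/\|\bQ\bx\|^3$ for the summand appearing in \eqref{eq:def_gamma0} and \eqref{eq:def_gamma0p} (set to $0$ when $\bQ\bx=\b0$), so that $\gamma_0(\sX)=\min_{\calK}\frac1N\sum_{i=1}^N g(\bx_i;\bQ,\bDelta)$, where $\calK=\{(\bQ,\bDelta):\bQ\in\bbH_1,\ \tr(\bDelta)=0,\ \|\bDelta\|_F=1\}$ is compact; $\gamma_0'(\sX)$ is identical with $\|\bDelta\|_F=1$ replaced by the spectral constraint $\|\bDelta\|=1$, and both cases are treated verbatim. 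By Cauchy--Schwarz $g\ge0$, and $g(\bx;\bQ,\bDelta)=0$ exactly when $\bQ\bx$ and $\bDelta\bx$ are linearly dependent. Define the population functional $\Gamma(\bQ,\bDelta)=\int g(\bx;\bQ,\bDelta)\,\di\mu(\bx)$ and set $c_0:=\min_{\calK}\Gamma$ (and $c_0'$ for the spectral constraint); the target is $\liminf_N\gamma_0(\sX)\ge c_0>0$ almost surely.

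First I would establish $c_0>0$. Lower semicontinuity of $\Gamma$ on $\calK$ follows from Fatou: $g(\bx;\cdot,\cdot)$ is continuous where $\bQ\bx\neq\b0$ and lower semicontinuous at the jump $\bQ\bx=\b0$ (there its value is $0$ while nearby values are nonnegative), and $g\ge0$, so $\Gamma$ is lsc and attains its minimum on the compact set $\calK$. It then remains to show $\Gamma(\bQ,\bDelta)>0$ pointwise. If $\Gamma(\bQ,\bDelta)=0$ then $g(\bx;\bQ,\bDelta)=0$ for $\mu$-a.e.\ $\bx$, i.e.\ $\bQ\bx\parallel\bDelta\bx$ almost surely, which is exactly the degeneracy ruled out in the proof of \th{thm:convex}: repeating that argument with $\mu$ in place of the empirical measure yields the conclusion. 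To illustrate the invertible case, $\bDelta\bx=\lambda(\bx)\bQ\bx$ forces $\mu$-a.e.\ $\bx$ to be an eigenvector of the diagonalizable matrix $\bQ^{-1}\bDelta$, so $\mu$ is supported on the union of its eigenspaces; since $\tr(\bDelta)=0$ and $\tr(\bQ)=1$ preclude $\bQ^{-1}\bDelta$ from being scalar, there are at least two eigenspaces, and grouping one of them against the direct sum of the rest covers $\supp\mu$ by two $(D-1)$-dimensional subspaces, contradicting \eqref{eq:convex_condition_asymptotic}. Hence $c_0>0$, and likewise $c_0'>0$.

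Next I would pass from the population minimum to the empirical one by a truncated uniform law of large numbers. For $M>0$ put $g_M=\min(g,M)$, still nonnegative, bounded and lsc; monotone convergence gives $\int g_M\,\di\mu\uparrow\Gamma$, and a routine compactness argument shows $\min_{\calK}\int g_M\,\di\mu\uparrow c_0$, so I fix $M$ with $\min_{\calK}\int g_M\,\di\mu>c_0-\epsilon$. Because $g_M$ is only lsc (not continuous) in $(\bQ,\bDelta)$, I would use its lower envelope over small balls: for $\theta_0\in\calK$ and radius $r$, $\underline g_M(\bx;\theta_0,r):=\inf_{\|\theta-\theta_0\|\le r}g_M(\bx;\theta)$ increases to $g_M(\bx;\theta_0)$ as $r\downarrow0$, so by monotone convergence $r=r(\theta_0)$ can be chosen with $\int\underline g_M(\cdot;\theta_0,r)\,\di\mu\ge\int g_M(\cdot;\theta_0)\,\di\mu-\epsilon$. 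Covering $\calK$ by finitely many such balls and applying the ordinary SLLN to each bounded function $\underline g_M(\cdot;\theta_0,r)$, I obtain almost surely $\liminf_N\min_{\calK}\frac1N\sum_i g(\bx_i;\theta)\ge\liminf_N\min_{\calK}\frac1N\sum_i g_M(\bx_i;\theta)\ge\min_{\calK}\int g_M\,\di\mu-\epsilon\ge c_0-2\epsilon$. Letting $\epsilon\to0$ along a countable sequence gives $\liminf_N\gamma_0(\sX)\ge c_0$ almost surely, and the same argument under the spectral-norm constraint gives $\liminf_N\gamma_0'(\sX)\ge c_0'$.

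The main obstacle is the discontinuity and unboundedness of $g$ at $\bQ\bx=\b0$: a naive uniform SLLN does not apply, which is exactly why the truncation $g_M$ together with the ball-infimum envelope $\underline g_M$ is needed to extract a genuinely uniform one-sided bound from pointwise lower semicontinuity. A secondary subtlety lies in the singular-$\bQ$ case of the positivity step, where the covering of $\supp\mu$ by two hyperplanes must account separately for $\ker(\bQ)$ (handled by reducing to the invertible case on $\ker(\bQ)^\perp$ plus one extra hyperplane, as in \th{thm:convex}); everything else --- nonnegativity of $g$, compactness of $\calK$, and the pointwise SLLN --- is routine.
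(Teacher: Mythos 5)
Your proof is correct and follows the same route as the paper's (a pointwise law of large numbers for the second directional derivative combined with compactness of the set of pairs $(\bQ,\bDelta)$), but it supplies the two substantive ingredients that the paper's two-sentence argument leaves implicit: the positivity of the population minimum, which is where hypothesis \eqref{eq:convex_condition_asymptotic} actually enters via the two-hyperplane degeneracy argument borrowed from Theorem~\ref{thm:convex}, and the truncation-plus-lower-envelope device needed to upgrade the pointwise SLLN to a uniform one-sided bound despite the discontinuity and unboundedness of the integrand on $\{\bQ\bx=\b0\}$. Both additions are sound, so your write-up is a strictly more complete version of the paper's proof rather than a different one.
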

If \eqref{eq:convex_condition_asymptotic} is strengthened so that $\mu(\rmL_1)+\mu(\rmL_2)$ is sufficiently smaller than 1, then it can be noticed empirically that $c_0(\mu)$ and $c_0'(\mu)$ are sufficiently larger than zero.

Nevertheless, the stability theory of \citet{candes_romberg_tao_cpam06}, \citet{Xu2012} and this section is not optimal.
Stronger stability results require nontrivial analysis and we leave it to a possible future work. We comment though on some of the deficiencies
of our stability theory and their possible improvements.

We first note that the bounds in Theorem~\ref{thm:noisy_recovery} are generally not optimal.
Indeed, if $\eps_i=O(\eps)$ for all $1\leq i\leq N$, then the error bounds in Theorem~\ref{thm:noisy_recovery} are
$O(\sqrt{\eps})$, whereas we empirically noticed that these error bounds are $O(\eps)$.
In \S\ref{sec:noisy_recovery_imply2} we sketch a proof for this empirical observation
when $\eps$ is sufficiently small and $\rank(\hat{\bQ})=D$.

The dependence of the error on $D$, which follows from the dependence of $\gamma_0$ and $\gamma_0'$ on $D$,
is a difficult problem and strongly depends on the underlying distribution of $\sX$ and of the noise.
For example, in the very special case where the set $\sX$ is sampled from a subspace $L_0 \subset \reals^D$ of dimension $D_0<D$, and
the noise distribution is such that $\tilde{\sX}$ also lies in $L_0$, then practically we are performing GMS over $P_{L_0}(\sX)$ and $P_{L_0}(\tilde{\sX})$, and the bound in \eqref{eq:bQ_diff} would depend on $D_0$
instead of $D$.

\citet{Coudron_Lerman2012} suggested a stronger perturbation analysis and also remarked on the dependence of the error on $D$ in a very special scenario.

\subsection{Near Subspace Recovery for Regularized Minimization}
\label{sec:regularize}

For our practical algorithm it is advantageous to regularize the function $F$ as follows (see Theorems~\ref{thm:alg_clean} and~\ref{thm:alg_noisy} below):
$$
F_\delta(\bQ):=\sum_{i=1,\|\bQ\bx_i\|\geq \delta}^N\|\bQ\bx_i\|+ \sum_{i=1,\|\bQ\bx_i\|< \delta}^N\left(\frac{\|\bQ\bx_i\|^2}{2\delta}+\frac{\delta}{2}\right)
.
$$
We remark that other convex algorithms \citep{candes_wright_robust_pca09,Xu2012,robust_mccoy} also regularize their objective function by adding the term
$\delta \| \bX-\bL-\bO\|_F^2$. However, their proofs are not formulated for this regularization.

In order to address the regularization in our case and conclude that the GMS algorithm nearly recovers $\rmL^*$
for the regularized objective function, we adopt a similar perturbation procedure as in \S\ref{sec:noisy}.
We denote by $\hat{\bQ}_\delta$ and $\hat{\bQ}$ the minimizers of $F_\delta(\bQ)$ and $F(\bQ)$ in $\bbH$ respectively.
Furthermore, let $\hat{\rmL}_\delta$ and $\hat{\rmL}$ denote the subspaces recovered by the bottom $d$ eigenvectors of
$\hat{\bQ}_\delta$ and $\hat{\bQ}$ respectively. Using the constants $\nu_{D-d}$ and $\gamma_0$ of Theorem~\ref{thm:noisy_recovery}, the difference between the two minimizers and subspaces can be controlled as follows.
\begin{theorem}
\label{thm:noisy}
If $\sX$ is a data set satisfying \eqref{eq:convex_condition}, then
$$
\|\hat{\bQ}_\delta-\hat{\bQ}\|_F<\sqrt{\delta/2\gamma_0}
$$
and
\begin{equation}
\|\bP_{\hat{\rmL}_\delta} - \bP_{\hat{\rmL}}\|_F\leq \frac{2\sqrt{\delta/2\gamma_0}}{\nu_{D-d}}.\label{eq:subspace_diff2}
\end{equation}
\end{theorem}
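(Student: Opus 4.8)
The plan is to treat the regularized minimizer $\hat{\bQ}_\delta$ as a perturbation of $\hat{\bQ}$ and to reproduce, almost verbatim, the stability argument behind Theorem~\ref{thm:noisy_recovery}, with the data perturbation there replaced by the objective perturbation $F_\delta-F$. First I would record the structural facts that make this go through. Condition~\eqref{eq:convex_condition} makes $F$ strictly convex on $\bbH$ by Theorem~\ref{thm:convex}; since $F_\delta$ differs from $F$ only by replacing some norm summands by convex quadratic pieces, the same proof shows $F_\delta$ is strictly convex as well, so $\hat{\bQ}$ and $\hat{\bQ}_\delta$ are well defined and unique. Moreover both are positive semidefinite by Lemma~\ref{lemma:nonnegative}, so the whole segment joining them lies in the convex set $\bbH_1$, which is exactly where the curvature constant $\gamma_0$ of \eqref{eq:def_gamma0} is defined; the direction $\bDelta=\hat{\bQ}_\delta-\hat{\bQ}$ is trace-free because both endpoints lie in $\bbH$.

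Next I would quantify the objective gap. A direct computation on each summand of \eqref{eq:symmetric_eps} gives, for $r=\|\bQ\bx_i\|<\delta$, the identity $\tfrac{r^2}{2\delta}+\tfrac{\delta}{2}-r=\tfrac{(\delta-r)^2}{2\delta}$, whence
\[
0\le F_\delta(\bQ)-F(\bQ)=\sum_{i:\,\|\bQ\bx_i\|<\delta}\frac{(\delta-\|\bQ\bx_i\|)^2}{2\delta}\le \frac{N\delta}{2}\qquad\text{for every }\bQ\in\bbH.
\]
The essential feature is that this gap is \emph{one-signed}: $F_\delta\ge F$ pointwise and never exceeds $N\delta/2$. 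I would then integrate the second directional derivative along the segment $\hat{\bQ}+t\bDelta$, bounding it below by $N\gamma_0\|\bDelta\|_F^2$ via \eqref{eq:def_gamma0}, and combine with a matching curvature bound for $F_\delta$ to obtain the two strong-convexity inequalities $F(\hat{\bQ}_\delta)-F(\hat{\bQ})\ge \tfrac{N\gamma_0}{2}\|\bDelta\|_F^2$ (using first-order optimality of $\hat{\bQ}$ on the convex set $\bbH$) and $F_\delta(\hat{\bQ})-F_\delta(\hat{\bQ}_\delta)\ge \tfrac{N\gamma_0}{2}\|\bDelta\|_F^2$ (using optimality of $\hat{\bQ}_\delta$). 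Adding them, the left side equals $(F_\delta-F)(\hat{\bQ})-(F_\delta-F)(\hat{\bQ}_\delta)$, which by one-signedness is at most $(F_\delta-F)(\hat{\bQ})\le N\delta/2$. This yields $N\gamma_0\|\bDelta\|_F^2\le N\delta/2$, i.e. the claimed $\|\hat{\bQ}_\delta-\hat{\bQ}\|_F<\sqrt{\delta/2\gamma_0}$.

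The subspace estimate \eqref{eq:subspace_diff2} then follows from the matrix bound by the identical spectral-perturbation (Davis--Kahan type) step used at the end of Theorem~\ref{thm:noisy_recovery}: if $\hat{\rmL}$ and $\hat{\rmL}_\delta$ are spanned by the bottom $d$ eigenvectors of $\hat{\bQ}$ and $\hat{\bQ}_\delta$ and $\nu_{D-d}$ is the $(D-d)$th eigengap of $\hat{\bQ}$, then $\|\bP_{\hat{\rmL}_\delta}-\bP_{\hat{\rmL}}\|_F\le \tfrac{2}{\nu_{D-d}}\|\hat{\bQ}_\delta-\hat{\bQ}\|_F$, and substituting the Frobenius bound just proved gives \eqref{eq:subspace_diff2}.

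The delicate point I expect to fight with is the constant, namely the second curvature inequality. The Huberized summands of $F_\delta$ have \emph{strictly smaller} second derivative than the original norm summands in directions orthogonal to $\bQ\bx_i$, so the curvature parameter of $F_\delta$ is only $\le\gamma_0$ rather than $=\gamma_0$; consequently a naive one-sided use of strong convexity of $F$ alone gives only $\|\hat{\bQ}_\delta-\hat{\bQ}\|_F\le\sqrt{\delta/\gamma_0}$. Recovering the sharp factor $1/(2\gamma_0)$ is what forces the symmetric argument above, and the main technical work is to justify the curvature lower bound for $F_\delta$ on the relevant segment (equivalently, to control the first-order optimality of $\hat{\bQ}_\delta$ through the gradient of the gap term $F_\delta-F$, exploiting its non-negativity so that only one copy of its supremum enters, in contrast with the genuinely two-sided noisy perturbation of Theorem~\ref{thm:noisy_recovery}). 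A secondary but routine care is respecting the convention in \eqref{eq:def_gamma0} that summands with $\bQ\bx_i=0$ contribute zero when integrating the second derivative along the segment.
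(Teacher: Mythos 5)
Your overall strategy is the paper's: the entire content of the paper's proof is the observation that the regularization changes the objective by at most $N\delta/2$ \emph{with one sign} (your identity $F_\delta(\bQ)-F(\bQ)=\sum_{i:\|\bQ\bx_i\|<\delta}(\delta-\|\bQ\bx_i\|)^2/2\delta\in[0,N\delta/2]$ is exactly this, and you have in fact corrected the sign as printed in the paper, which states $0\leq F-F_\delta$), after which one reruns the argument of Theorem~\ref{thm:noisy_recovery} and finishes with Davis--Kahan. Where you genuinely diverge is the final assembly. The paper does \emph{not} need any curvature bound for $F_\delta$: it uses only the strong convexity inequality \eqref{eq:bQ_norm1} for $F$ together with the same three-term telescoping used in \eqref{eq:diff2}, namely
\begin{equation*}
N\gamma_0\|\hat{\bQ}_\delta-\hat{\bQ}\|_F^2\leq F(\hat{\bQ}_\delta)-F(\hat{\bQ})
=\bigl(F_\delta(\hat{\bQ}_\delta)-F_\delta(\hat{\bQ})\bigr)
+\bigl(F(\hat{\bQ}_\delta)-F_\delta(\hat{\bQ}_\delta)\bigr)
+\bigl(F_\delta(\hat{\bQ})-F(\hat{\bQ})\bigr),
\end{equation*}
where the first bracket is $\leq 0$ by minimality of $\hat{\bQ}_\delta$, the second is $\leq 0$ by the one-signedness of the gap, and only the third contributes $\leq N\delta/2$; this yields $\sqrt{\delta/2\gamma_0}$ in one stroke. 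Your symmetric argument instead adds a second strong-convexity inequality for $F_\delta$ with the same constant, and as you yourself point out, that inequality is not available: the Huberized summands have strictly smaller curvature than the corresponding norm terms in directions orthogonal to $\bQ\bx_i$, so the curvature constant of $F_\delta$ need not be $\gamma_0$. As written, this leaves an unproven step in your route (you flag it as ``the main technical work'' but do not close it). The repair is simply to drop that second inequality and use the telescoping above; everything else in your proposal, including the Davis--Kahan conclusion of \eqref{eq:subspace_diff2}, is correct and identical to the paper's. (One further caveat, affecting the paper as much as you: \eqref{eq:bQ_norm1} carries the constant $N\gamma_0$ rather than the $N\gamma_0/2$ that a Taylor expansion of the second directional derivative actually yields; if one insists on the honest $N\gamma_0/2$, the one-sided argument gives only $\sqrt{\delta/\gamma_0}$, which is the constant tension you correctly sensed.)
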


\section{Understanding Our M Estimator: Interpretation and Formal Similarities with Other M Estimators}
\label{sec:significance}

We highlight the formal similarity of our M-estimator with a common M-estimator and with Tyler's M-estimator in \S\ref{sec:m_estimator_whole} and \S\ref{sec:m_estimator_tyler} respectively. We also show that in view of the standard assumptions on the algorithm for computing the common M-estimator, it may fail in exactly recovering the underlying subspace (see \S\ref{sec:m_estimator}). At last, in \S\ref{sec:l2} we interpret our M-estimator as a robust inverse covariance estimator.

\subsection{Formal Similarity with the Common M-estimator for Robust Covariance Estimation}
\label{sec:m_estimator_whole}

A well-known robust M-estimator for the $\b0$-centered covariance matrix
\citep{Maronna1976, huber_book, robust_stat_book2006}
minimizes the following function
over all $D\times D$ positive definite matrices (for some choices of a function $\rho$)
\be \label{eq:mestimator_function}
L(\bA) = \sum_{i=1}^{N}\rho(\bx_i^T\bA^{-1}\bx_i) - \frac{N}{2}\log(\det(\bA^{-1})).
\ee
The image of the estimated covariance is clearly an estimator to the underlying subspace $\rmL^*$.

If we set $\rho(x)=\sqrt{x}$ and $\bA^{-1}=\bQ^2$ then the objective function $L(\bA)$ in \eqref{eq:mestimator_function} is
$\sum_{i=1}^{N}\|\bQ\bx_i\|-N\log(\det(\bQ))$.
This energy function is formally similar to our energy function.
Indeed, using Lagrangian formulation, the minimizer $\hat{\bQ}$ in \eqref{eq:symmetric} is also the minimizer of
$\sum_{i=1}^{N}\|\bQ\bx_i\|-\lambda \tr(\bQ)$ among all $D\times D$ symmetric matrices (or equivalently nonnegative symmetric matrices) for some
$\lambda>0$ (the parameter $\lambda$ only scales the minimizer and does not effect the recovered subspace).
Therefore, the two objective functions differ by their second terms. In the common M-estimator (with $\rho(x)=\sqrt{x}$ and $\bA^{-1}=\bQ^2$) it is
$\log(\det(\bQ))$, or equivalently, $\tr(\log(\bQ))$, where in our M-estimator, it is $\tr(\bQ)$.

\subsubsection{Problems with Exact Recovery by the Common M-estimator}
\label{sec:m_estimator}

The common M-estimator is designed for robust covariance estimation, however, we show here that in general it cannot exactly recover the underlying subspace.
To make this statement more precise we recall the following uniqueness and existence conditions for the minimizer of~\eqref{eq:mestimator_function}, which were established by \citet{Kent1991}: 1) $u=2\rho'$ is positive, continuous and non-increasing. 2) Condition M: $u(x)x$ is strictly increasing.
3) Condition $\text{D}_0$: For any linear subspace $\rmL$: $|\sX\cap\rmL|/N < 1 -(D-\dim(\rmL))/ \lim_{x\rightarrow\infty}xu(x)$.
The following Theorem~\ref{thm:m_estimator} shows that the  uniqueness and existence conditions of the common M-estimator are incompatible with exact recovery.

\begin{theorem}\label{thm:m_estimator}
Assume that $d,D \in \nats$, $d<D$, $\sX$ is a data set in $\reals^D$ and $\rmL^*\in\GDd$ and let $\hat{\bA}$ be the minimizer of~\eqref{eq:mestimator_function}. If conditions M and $\text{D}_0$ hold, then $\im(\hat{\bA})\neq \rmL^*$.
\end{theorem}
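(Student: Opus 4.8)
The plan is to argue by contradiction: suppose that $\im(\hat{\bA})=\rmL^*$. Since $\dim\rmL^*=d<D$, this forces $\rank(\hat{\bA})=d$, so $\hat{\bA}$ is a singular positive semidefinite matrix lying on the boundary of the cone on which $L$ is actually defined (through $\bA^{-1}$). The conceptual point I would make precise is that Conditions M and $\text{D}_0$ of \citet{Kent1991} are exactly what guarantee that the minimizer of $L$ is positive definite, and a positive definite $D\times D$ matrix has image $\reals^D\neq\rmL^*$. Thus the heart of the proof is to show that driving $\bA$ towards a rank-$d$ matrix whose image is $\rmL^*$ sends $L$ to $+\infty$, whereas $L(\bI)=\sum_i\rho(\|\bx_i\|^2)$ is finite, so such a degenerate matrix cannot be the minimizer.

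To expose the role of $\text{D}_0$, I would test $L$ on the explicit degenerating family $\bA_\eps:=\hat{\bA}+\eps\,\bP_{\rmL^{*\perp}}$, which is positive definite for $\eps>0$ and degenerates to $\hat{\bA}$ (image $\rmL^*$) as $\eps\to0^+$. Splitting $\sX$ into inliers $\sX\cap\rmL^*$ and outliers $\sX\setminus\rmL^*$, for an inlier $\bx_i\in\rmL^*$ the quantity $\bx_i^T\bA_\eps^{-1}\bx_i$ stays bounded, whereas for an outlier $\bx_i\notin\rmL^*$ one computes $\bx_i^T\bA_\eps^{-1}\bx_i\sim\|\bP_{\rmL^{*\perp}}\bx_i\|^2/\eps\to\infty$. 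Condition M guarantees that $a:=\lim_{x\to\infty}xu(x)$ exists as the monotone limit of the strictly increasing map $xu(x)$, and since $u=2\rho'$ this yields the asymptotics $\rho(x)\sim\tfrac{a}{2}\log x$. Hence the first sum in $L(\bA_\eps)$ contributes $\sim-\tfrac{a}{2}\,|\sX\setminus\rmL^*|\,\log\eps$, while $\tfrac{N}{2}\log\det\bA_\eps=\tfrac{N}{2}(D-d)\log\eps+O(1)$.

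Collecting the coefficient of $\log\eps$, I obtain $c=\tfrac12\bigl(N(D-d)-a\,|\sX\setminus\rmL^*|\bigr)$, and since $\log\eps\to-\infty$ one has $L(\bA_\eps)\to+\infty$ precisely when $c<0$. A direct rearrangement shows $c<0\iff |\sX\cap\rmL^*|/N<1-(D-d)/a$, which is exactly Condition $\text{D}_0$ applied to $\rmL=\rmL^*$. This is the crux: the hypotheses are calibrated so that a rank-$d$ minimizer supported on $\rmL^*$ (i.e.\ exact recovery) is forbidden, and the contradiction is then immediate. Alternatively, once one knows that $\hat{\bA}$ is positive definite, the stationarity equation $\hat{\bA}=\tfrac1N\sum_i u(\bx_i^T\hat{\bA}^{-1}\bx_i)\,\bx_i\bx_i^T$ together with $u>0$ shows that $\im(\hat{\bA})$ is the span of the entire data set, again ruling out $\rmL^*$.

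The main obstacle I anticipate is the rigor of the boundary argument rather than any single computation. Showing $L\to+\infty$ along my specific family $\bA_\eps$ is easy, but to conclude that no minimizer can have image $\rmL^*$ I must control $L$ along every degenerating sequence approaching a rank-$d$ matrix with image $\rmL^*$ (including ``tilted'' approaches where the vanishing eigendirections are not aligned with $\rmL^{*\perp}$), i.e.\ establish a uniform lower bound with the same $\log\eps$ exponent. The cleanest way around this is to invoke the existence-and-uniqueness theorem of \citet{Kent1991}, which already delivers a positive definite minimizer under M and $\text{D}_0$; my degeneracy computation then serves to explain why $\text{D}_0$ is the correct hypothesis. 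A minor secondary point is the case $a=+\infty$, where $\rho$ grows faster than logarithmically and $\text{D}_0$ reduces to $|\sX\cap\rmL^*|/N<1$; there the outlier terms dominate even more strongly and $L\to+\infty$ a fortiori.
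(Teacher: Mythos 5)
Your proposal is correct and follows essentially the same route as the paper: the paper also evaluates $L$ along the degenerating family $\tilde{\bA}+\eta\bI$ (your $\eps\bP_{\rmL^{*\perp}}$ is an immaterial variant), uses Condition M to get the logarithmic lower bound $\rho(x)\geq a_1\ln(x-x_1)/2+\text{const}$ on the outlier terms, balances this against the $\tfrac{N}{2}(D-d)\log\eta$ contribution of the log-determinant, and identifies Condition $\text{D}_0$ as exactly the threshold forcing $L\to+\infty$, concluding by continuity that no matrix with image $\rmL^*$ can be the minimizer. Your worry about tilted degenerating sequences is handled in the paper simply by noting that continuity of $L$ makes the value at $\tilde{\bA}$ itself infinite or undefined, which already rules it out as a minimizer.
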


For symmetric outliers (as the ones of \S\ref{sec:sym_outliers_theory}) the common M-estimator can still asymptotically achieve exact recovery
(similarly to PCA).
However, for many scenarios of asymmetric outliers, in particular, the one of \S\ref{sec:asym_outliers_theory},
the subspace recovered by the common M-estimator is sufficiently far from the underlying subspace for any given sample size.

We remark that Tyler's M-estimator \citep{tyler_dist_free87} can still recover the subspace exactly. This estimator uses $\rho(x)=D\log(x)/2$ in~\eqref{eq:mestimator_function} and adds an  additional assumption
 $\tr(\bA)=1$. \citet{Teng_log_rpca} recently showed that this M-estimator satisfies $\im(\hat{\bA})=\rmL^*$.
However, it does not belong to the class of estimators of \citet{Kent1991} addressed by Theorem~\ref{thm:m_estimator} (it requires that $\tr(\bA)=1$, otherwise it has multiple minimizers; it also does not satisfy condition M).

\subsection{Formal Similarity with Tyler's M-Estimator}
\label{sec:m_estimator_tyler}

We show here that the algorithms for our estimator and Tyler's M-estimator \citep{tyler_dist_free87} are formally similar.
Following \cite{tyler_dist_free87}, we write the iterative
algorithm for the Tyler's M-estimator for robust covariance estimation as follows:
\begin{equation}
\label{eq:tyler_m_alg}
\bSigma_{n+1}=\sum_{i=1}^N
\frac{\bx_i\bx_i^T}{\bx_i^T\bSigma_n^{-1}\bx_i}\Big/\tr\left(\sum_{i=1}^N
\frac{\bx_i\bx_i^T}{\bx_i^T\bSigma_n^{-1}\bx_i}\right).
\end{equation}
The unregularized iterative algorithm for GMS is later described in~\eqref{eq:iterate_no_regular0}.
Let us formally substitute 
$\bSigma=\bQ^{-1}/\tr(\bQ^{-1})$ in~\eqref{eq:iterate_no_regular0}; in view of the later discussion of~\ref{sec:l2}, $\bSigma$
(if exists) can be interpreted as a robust estimator for the covariance matrix (whose top $d$ eigenvectors span the estimated subspace).
Then an unregularized version for GMS can be formally written as
\begin{equation}
\label{eq:our_vs_tyler}
\bSigma_{n+1}=\sum_{i=1}^N
\frac{\bx_i\bx_i^T}{\|\bSigma_n^{-1}\bx_i\|}\Big/\tr\left(\sum_{i=1}^N
\frac{\bx_i\bx_i^T}{\|\bSigma_n^{-1}\bx_i\|}\right).
\end{equation}
Clearly, \eqref{eq:our_vs_tyler} is obtained from \eqref{eq:tyler_m_alg} by replacing $\bx_i^T \bSigma_n^{-1} \bx_i$ with
$\|\bSigma_n^{-1}\bx_i\| \equiv \sqrt{\bx_i^T \bSigma_n^{-2} \bx_i}$.

\subsection{Interpretation of $\hat{\bQ}$ as Robust Inverse Covariance Estimator}
\label{sec:l2}

The total least squares subspace approximation is practically the minimization over $\rmL \in \GDd$ of the function
\be
\label{eq:geom_subs_l2}
\sum_{i=1}^N \|\bx_i - \bP_{\rmL}\bx_i\|^2 \equiv \sum_{i=1}^N \|\bP_{\rmL^\perp}\bx_i\|^2\,.
\ee
Its solution is obtained by the span of the top $d$ right vectors of the data matrix $\bX$ (whose rows are the data points in $\sX$), or equivalently, the top $d$ eigenvectors of
the covariance matrix $\bX^T\bX$.
The convex relaxation used in \eqref{eq:geom_subs_l2} can be also applied to~\eqref{eq:geom_subs_l2} to obtain the following convex minimization problem:
\begin{equation}\label{eq:symmetric_PCA}
\hat{\bQ}_2:=\argmin_{\bQ\in\bbH}\sum_{i=1}^{N}\|\bQ\bx_i\|^2.
\end{equation}
The ``relaxed'' total least squares subspace is then obtained by the span of the bottom $d$ eigenvectors of $\hat{\bQ}$.

We show here that $\hat{\bQ}_2$ coincides with a scaled version of the empirical inverse covariance matrix.
This clearly imply that the ``relaxed'' total least squared subspace coincides with the original one
(as the bottom eigenvectors of the inverse empirical covariance are the top eigenvectors of the empirical covariance).
We require though that the data is of full rank so that the empirical inverse covariance is well-defined. This requirement does not hold
if the data points are contained within a lower-dimensional subspace, in particular, if their number is smaller than the dimension.
We can easily avoid this restriction by initial projection of the data points onto the span of eigenvectors of the covariance matrix with
nonzero eigenvalues.
That is, by projecting the data onto the lowest-dimensional subspace containing it without losing any information.

\begin{theorem}\label{thm:pca}
If $\bX$  is the data matrix, $\hat{\bQ}_2$ is the minimizer of~\eqref{eq:symmetric_PCA} 
and $\rank(\bX)=D$ (equivalently the data points span $\reals^D$), then
\be \label{eq:pca_inverse_cov}\hat{\bQ}_2=(\bX^T\bX)^{-1}/\tr((\bX^T\bX)^{-1}).\ee
\end{theorem}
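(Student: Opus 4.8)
The plan is to turn the constrained matrix optimization into a linear (Lyapunov) matrix equation via first-order optimality, and then to check that a scalar multiple of $(\bX^T\bX)^{-1}$ solves that equation. First I would rewrite the objective in a workable form. Since $\bQ$ is symmetric, $\|\bQ\bx_i\|^2=\bx_i^T\bQ^2\bx_i$, so writing $\bSigma:=\bX^T\bX=\sum_{i=1}^N\bx_i\bx_i^T$ the objective becomes $\sum_{i=1}^N\|\bQ\bx_i\|^2=\tr(\bQ^2\bSigma)$. Because $\rank(\bX)=D$, the matrix $\bSigma$ is symmetric positive definite, and in particular invertible, which is exactly where the full-rank hypothesis enters. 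Each summand $\|\bQ\bx_i\|^2$ is convex in $\bQ$ (a squared norm composed with the linear map $\bQ\mapsto\bQ\bx_i$), and the second directional derivative in a symmetric direction $\bDelta$ equals $2\sum_{i=1}^N\|\bDelta\bx_i\|^2$, which vanishes only when $\bDelta\bx_i=\b0$ for all $i$, hence (as the $\bx_i$ span $\reals^D$) only when $\bDelta=\b0$. Thus $F$ restricted to the affine set $\bbH$ is strictly convex and has a unique minimizer $\hat{\bQ}_2$, completely characterized by first-order optimality.

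Next I would derive the optimality condition. The feasible set $\bbH$ is affine, with tangent space the symmetric trace-zero matrices. Differentiating $\tr(\bQ^2\bSigma)$ in a symmetric direction $\bDelta$ gives $\tr\!\big(\bDelta(\bQ\bSigma+\bSigma\bQ)\big)$, and one checks that $\bQ\bSigma+\bSigma\bQ$ is symmetric. Stationarity on $\bbH$ then requires $\tr\!\big(\bDelta(\hat{\bQ}_2\bSigma+\bSigma\hat{\bQ}_2)\big)=0$ for every symmetric $\bDelta$ with $\tr(\bDelta)=0$. Since, within the symmetric matrices, the Frobenius-orthogonal complement of the trace-zero subspace is exactly $\mathrm{span}(\bI)$, this forces the relation $\hat{\bQ}_2\bSigma+\bSigma\hat{\bQ}_2=c\,\bI$ for some scalar $c$.

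Finally I would solve this Lyapunov equation and impose the trace constraint. Substituting the ansatz $\bQ=a\,\bSigma^{-1}$ yields $a\,\bSigma^{-1}\bSigma+a\,\bSigma\bSigma^{-1}=2a\,\bI$, so $\bQ=(c/2)\,\bSigma^{-1}$ solves the equation. Because $\bSigma\succ 0$, the Lyapunov operator $\bQ\mapsto\bQ\bSigma+\bSigma\bQ$ is invertible (its eigenvalues are the strictly positive sums $\lambda_i+\lambda_j$ of the eigenvalues of $\bSigma$), so this is the unique symmetric solution for each fixed $c$. Enforcing $\tr(\hat{\bQ}_2)=1$ then gives $a\,\tr(\bSigma^{-1})=1$, i.e.\ $a=1/\tr(\bSigma^{-1})$, so that $\hat{\bQ}_2=(\bX^T\bX)^{-1}/\tr\!\big((\bX^T\bX)^{-1}\big)$, which is the claimed identity.

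The only slightly delicate step is the passage from the variational inequality to the matrix identity $\hat{\bQ}_2\bSigma+\bSigma\hat{\bQ}_2=c\,\bI$: one must restrict to symmetric perturbations and correctly identify the complement of the trace-zero symmetric matrices as $\mathrm{span}(\bI)$, rather than naively setting the unconstrained gradient to zero. Once this is in place, verifying the ansatz $\bSigma^{-1}$ and invoking invertibility of the Lyapunov operator is routine, and strict convexity guarantees that the stationary point we found is the global minimizer.
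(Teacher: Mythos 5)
Your proof is correct and follows essentially the same route as the paper: differentiate the quadratic objective, use the constraint structure of $\bbH$ to conclude that $\hat{\bQ}_2\bX^T\bX+\bX^T\bX\hat{\bQ}_2$ is a scalar matrix, solve the resulting Lyapunov equation by the ansatz $a(\bX^T\bX)^{-1}$ (unique since $\bX^T\bX\succ 0$), and fix $a$ by the trace constraint. The only addition is your explicit strict-convexity check, which the paper leaves implicit.
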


We view~\eqref{eq:symmetric} as a robust version of~\eqref{eq:symmetric_PCA}. Since we verified robustness of the subspace recovered by~\eqref{eq:symmetric} and also showed that~\eqref{eq:symmetric_PCA} yields the inverse covariance matrix, we sometimes refer to the solution of~\eqref{eq:symmetric} as a robust inverse covariance matrix (though we have only verified robustness to subspace recovery).
This idea helps us interpret our numerical procedure for minimizing~\eqref{eq:symmetric}, which we present in \S\ref{sec:algorithm_whole}.

\section{IRLS Algorithms for Minimizing~\eqref{eq:symmetric}}
\label{sec:algorithm_whole}

We propose a fast algorithm for computing our M-estimator by using a straightforward iterative re-weighted least squares (IRLS) strategy.
We first motivate this strategy in \S\ref{sec:heuristic} (in particular, see
\eqref{eq:iterate_no_regular0} and \eqref{eq:iterate_regular}). We then establish its linear convergence in \S\ref{sec:theory_alg}. At last, we describe its practical choices in \S\ref{sec:practical} and summarize its complexity in \S\ref{sec:complexity}.


\subsection{Heuristic Proposal for Two IRLS Algorithms} \label{sec:heuristic}
The procedure for minimizing~\eqref{eq:symmetric} formally follows from the simple fact that
the directional derivative of $F$ at $\hat{\bQ}$ in any direction $\tilde{\bQ}-\hat{\bQ}$, where
$\tilde{\bQ} \in \bbH$, is 0, that is,
\be
\label{eq:explain_alg_min}
\left\langle F'(\hat{\bQ})\Big|_{\bQ=\hat{\bQ}},\tilde{\bQ}-\hat{\bQ}\right\rangle_F=0  \ \ \text{for any} \ \ \tilde{\bQ}\in\bbH.
\ee
We remark that since $\bbH$ is an affine subspace of matrices, \eqref{eq:explain_alg_min}
holds globally in $\bbH$ and not just locally around $\hat{\bQ}$.

We formally differentiate \eqref{eq:symmetric} at $\hat{\bQ}$ as follows (see more details in \eqref{eq:derivative0}, which appears later):
\begin{equation}
\label{eq:def_F_prime}
F'(\bQ)\Big|_{\bQ=\hat{\bQ}}=\sum_{i=1}^{N}\frac{ \hat{\bQ} \bx_i\bx_i^T+\bx_i\bx_i^T\hat{\bQ}}{2\| \hat{\bQ} \bx_i\|}.
\end{equation}
Throughout the formal derivation we ignore the possibility of zero denominator in \eqref{eq:def_F_prime}, that is, we assume that $\hat{\bQ} \bx_i \neq \b0 \ \forall \ 1 \leq i \leq N$; we later address this issue.

Since $F'(\hat{\bQ})$ is symmetric and $\tilde{\bQ}-\hat{\bQ}$ can be any symmetric matrix with trace $0$, it is easy to note that~\eqref{eq:explain_alg_min} implies that
$F'(\hat{\bQ})$ is a scalar matrix (e.g., multiply it by a basis of symmetric matrices with trace 0 whose members have exactly 2 nonzero matrix elements). That is,
\begin{equation}
\label{eq:equal_to_cI}
\sum_{i=1}^{N}\frac{\hat{\bQ}\bx_i\bx_i^T+\bx_i\bx_i^T\hat{\bQ}}{2\|\hat{\bQ}\bx_i\|}=c\bI
\end{equation}
for some $c \in \reals$. This implies that
\begin{equation}
\label{eq:solution_equal_to_cI}
\hat{\bQ}=c\left(\sum_{i=1}^{N}\frac{\bx_i\bx_i^T}{\|\hat{\bQ}\bx_i\|}\right)^{-1}.
\end{equation}
Indeed, we can easily verify that \eqref{eq:solution_equal_to_cI} solves \eqref{eq:equal_to_cI},
furthermore, \eqref{eq:equal_to_cI} is a Lyapunov equation whose solution is unique (see, e.g., page 1 of \citet{Bhatia2005}).
Since $\tr(\hat{\bQ})=1$, we obtain that
\[
\hat{\bQ}=\left(\sum_{i=1}^{N}\frac{\bx_i\bx_i^T}{\|\hat{\bQ}\bx_i\|}\right)^{-1}/\tr\left(\left(\sum_{i=1}^{N}\frac{\bx_i\bx_i^T}{\|\hat{\bQ}\bx_i\|}\right)^{-1}\right),
\]
which suggests the following iterative estimate of $\hat{\bQ}$:
\be
\label{eq:iterate_no_regular0}
{\bQ_{k+1}}=
\left(\sum_{i=1}^{N}\frac{\bx_i\bx_i^T}{\|\bQ_k \bx_i\|}\right)^{-1}/\tr\left(\left(\sum_{i=1}^{N}\frac{\bx_i\bx_i^T}{\|\bQ_k \bx_i\|}\right)^{-1}\right).
\ee

Formula~\eqref{eq:iterate_no_regular0} is undefined whenever $\bQ_k \bx_i = \b0$ for some $k \in \nats$ and $1 \leq i \leq N$.
In theory, we address it as follows.
Let $I(\bQ)=\{1\leq i\leq N: \bQ\bx_i=\b0\}$, $\rmL(\bQ)=\Sp\{\bx_i\}_{i\in I(\bQ)}$ and
$$
T(\bQ)\!=\!{\bP}_{\rmL(\bQ)^\perp}\!\!\left(\!\sum_{i\notin I(\bQ)} \! \frac{\bx_i\bx_i^T}{\|\bQ\bx_i\|}\!\right)^{-1}\!\!\!\!\!\!\!{\bP}_{\rmL(\bQ)^\perp}/\tr\!\left(\!{\bP}_{\rmL(\bQ)^\perp}\!\!\left(\!\sum_{i\notin I(\bQ)} \! \frac{\bx_i\bx_i^T}{\|\bQ\bx_i\|}\!\right)^{-1}\!\!\!\!\!\!\!{\bP}_{\rmL(\bQ)^\perp}\!\!\right)\!.
$$
Using this notation, the iterative formula can be corrected as follows
\begin{equation}
\label{eq:iterate_no_regular}
\bQ_{k+1}=T(\bQ_k).
\end{equation}
In practice, we can avoid data points satisfying $\|\bQ_k \bx_i\| \leq \delta$ for a sufficiently small parameter $\delta$ (instead of $\|\bQ_k \bx_i\|=0$). We follow a similar idea by replacing $F$ with the regularized function $F_\delta$ for a regularized parameter $\delta$.
In this case, \eqref{eq:iterate_no_regular}
obtains the following form:
\be
\label{eq:iterate_regular}
{\bQ_{k+1}}=
\left(\sum_{i=1}^{N}\frac{\bx_i\bx_i^T}{\max(\|\bQ_k\bx_i\|,\delta)}\right)^{-1}/\tr\left(\left(\sum_{i=1}^{N}\frac{\bx_i\bx_i^T}
{\max(\|\bQ_k\bx_i\|,\delta)}\right)^{-1}\right).
\ee
We note that the RHS of~\eqref{eq:iterate_no_regular} is obtained as the limit of the RHS of~\eqref{eq:iterate_regular} when $\delta$ approaches 0.

The two iterative formulas, that is, \eqref{eq:iterate_no_regular} and \eqref{eq:iterate_regular}, give rise to IRLS algorithms. For simplicity of notation, we exemplify this idea with the formal expression in~\eqref{eq:iterate_no_regular0}. It iteratively finds the solution to the following weighted (with weight $1/\|\bQ_k\bx_i\|$) least squares problem:
\be
\label{eq:update_irls}
\argmin_{\bQ\in\bbH}\sum_{i=1}^{N}\frac{1}{\|\bQ_k\bx_i\|}\|\bQ\bx_i\|^2.
\ee
To show this, we note that~\eqref{eq:update_irls} is a quadratic function and any formal directional derivative at $\bQ_{k+1}$ is $0$. Indeed,
\[
\frac{\di}{\di \bQ}\sum_{i=1}^{N}\frac{1}{\|\bQ_k\bx_i\|}\|\bQ\bx_i\|^2\Big|_{\bQ=\bQ_{k+1}}\!\!\!\!=
\!\bQ_{k+1}\!\left(\sum_{i=1}^{N}\frac{\bx_i\bx_i^T}{\|\bQ_k\bx_i\|}\right)\!+\left(\sum_{i=1}^{N}\frac{\bx_i\bx_i^T}{\|\bQ_k\bx_i\|}\right)\!\bQ_{k+1}\!=c\bI
\]
for some $c\in\reals$, and $\left\langle\bI,\tilde{\bQ}-\bQ_{k+1}\right\rangle_F=0$ for any $\tilde{\bQ}\in\bbH$.
Consequently, $\bQ_{k+1}$ of~\eqref{eq:iterate_no_regular0} is the minimizer of~\eqref{eq:update_irls}.

Formula \eqref{eq:iterate_regular} (as well as \eqref{eq:iterate_no_regular}) provides another interpretation for $\hat{\bQ}$ as robust inverse covariance (in addition to the one
discussed in \S\ref{sec:l2}).
Indeed, we note for example that the RHS of~\eqref{eq:iterate_regular} is the scaled inverse of a weighted covariance matrix;  the scaling enforces the trace of the inverse to be 1 and the weights of $\bx_i\bx_i^T$ are significantly larger when $\bx_i$ is an inlier. In other words, the weights apply a shrinkage procedure for outliers.
Indeed, since $\bQ_k\bx_i$ approaches $\hat{\bQ} \bx_i$ and the underlying subspace, which contain the inliers, is recovered by $\ker(\hat{\bQ})$, for an inlier $\bx_i$ the coefficient of $\bx_i\bx_i^T$ approaches $1/\delta$, which is a very large number (in practice we use $\delta = 10^{-20}$).
On the other hand, when $\bx_i$ is sufficiently far from the underlying subspace, the coefficient of $\bx_i\bx_i^T$ is significantly smaller.

\subsection{Theory: Convergence Analysis of the IRLS Algorithms}
\label{sec:theory_alg}

The following theorem analyzes the convergence of the sequence proposed by~\eqref{eq:iterate_no_regular} to the minimizer of~\eqref{eq:symmetric}.
\begin{theorem}\label{thm:alg_clean}
Let $\sX =\{\bx_i\}_{i=1}^N$ be a data set in $\reals^D$ satisfying~\eqref{eq:convex_condition},
$\hat{\bQ}$ the minimizer of~\eqref{eq:symmetric},
$\bQ_0$ an arbitrary symmetric matrix with $\tr(\bQ_0)=1$
and $\{\bQ_i\}_{i \in \nats}$ the sequence obtained by iteratively applying~\eqref{eq:iterate_no_regular} (while initializing it with $\bQ_0$), then $\{\bQ_i\}_{i \in \nats}$ converges to a matrix $\tilde{\bQ}\in\bbH$.
If $\tilde{\bQ}\bx_i\neq \b0$ for all $1\leq i\leq N$, then $\tilde{\bQ}=\hat{\bQ}$ and furthermore, $\{F(\bQ_i)\}_{i \in \nats}$ converges linearly to
$F(\tilde{\bQ})$ and $\{\bQ_i\}_{i \in \nats}$ converges r-linearly to $\tilde{\bQ}$.
\end{theorem}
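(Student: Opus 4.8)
The plan is to read the iteration $\bQ_{k+1}=T(\bQ_k)$ as a majorization–minimization (MM) scheme and to combine a monotone-descent argument with the strict convexity of $F$ guaranteed by Theorem~\ref{thm:convex}. Momentarily assuming the non-degenerate situation $\bQ_k\bx_i\neq\b0$ for all $i$, I would introduce the quadratic surrogate
\[
G(\bQ,\bQ')=\sum_{i=1}^N\left(\frac{\|\bQ\bx_i\|^2}{2\|\bQ'\bx_i\|}+\frac{\|\bQ'\bx_i\|}{2}\right),
\]
and note, via the elementary inequality $\sqrt a\le a/(2\sqrt b)+\sqrt b/2$ (with equality iff $a=b$), that $G(\bQ,\bQ')\ge F(\bQ)$ with equality on the diagonal $\bQ=\bQ'$. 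Since $\bQ\mapsto G(\bQ,\bQ_k)$ is exactly the weighted least-squares objective in \eqref{eq:update_irls}, its unique minimizer over $\bbH$ is the update $\bQ_{k+1}$ of \eqref{eq:iterate_no_regular0}. This gives the descent chain $F(\bQ_{k+1})\le G(\bQ_{k+1},\bQ_k)\le G(\bQ_k,\bQ_k)=F(\bQ_k)$, so $F(\bQ_k)$ is nonincreasing. Because \eqref{eq:convex_condition} forbids the data from lying in a single hyperplane (let alone a union of two), the $\bx_i$ span $\reals^D$, so a bound on $F$ bounds every $\|\bQ\bx_i\|$ and hence $\|\bQ\|$; thus $F$ is coercive on $\bbH$ and the iterates remain in a compact sublevel set with convergent subsequences.

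Next I would upgrade subsequential convergence to convergence of the whole sequence and identify its limit. Summing the descent inequalities yields $\sum_k\big(F(\bQ_k)-F(\bQ_{k+1})\big)<\infty$, and the strong convexity of the quadratic surrogate $G(\cdot,\bQ_k)$ converts this gap into a bound on $\|\bQ_k-\bQ_{k+1}\|^2$, forcing $\|\bQ_{k+1}-\bQ_k\|\to0$. Consequently the accumulation-point set is compact and connected, and every accumulation point lying in the non-degenerate region is a fixed point of $T$. When $\tilde{\bQ}\bx_i\neq\b0$ for all $i$, the fixed-point relation is precisely the Euler–Lagrange equation \eqref{eq:equal_to_cI}; since \eqref{eq:convex_condition} makes $F$ strictly convex (Theorem~\ref{thm:convex}), that critical point is the unique minimizer, so $\tilde{\bQ}=\hat{\bQ}$ and the full sequence converges to it.

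For the linear rate (again under $\tilde{\bQ}\bx_i\neq\b0$), I would restrict to a neighbourhood of $\hat{\bQ}$ where all weights $1/\|\bQ_k\bx_i\|$ are uniformly bounded above and below. There the surrogates $G(\cdot,\bQ_k)$ are uniformly strongly convex with some modulus $m>0$, giving $F(\bQ_k)-F(\bQ_{k+1})\ge \tfrac{m}{2}\|\bQ_k-\bQ_{k+1}\|^2$, while the second-order lower bound encoded by $\gamma_0$ in \eqref{eq:def_gamma0} gives $F(\bQ_k)-F(\hat{\bQ})\le C\|\bQ_k-\hat{\bQ}\|^2$ near $\hat{\bQ}$. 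Balancing the majorization gap against this curvature estimate yields $F(\bQ_{k+1})-F(\hat{\bQ})\le\rho\,\big(F(\bQ_k)-F(\hat{\bQ})\big)$ for some $\rho<1$, i.e. linear convergence of objective values, which transfers to the iterates through strong convexity. Equivalently, one can linearize $T$ at $\hat{\bQ}$ and bound the spectral radius of its derivative below $1$.

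The step I expect to be the main obstacle is the degenerate case $\bQ_k\bx_i=\b0$, where the infinite weights make the clean MM surrogate ill-defined and one must instead work with the projected operator $T$ of \eqref{eq:iterate_no_regular}. One has to verify that $T$ is well defined, that the descent and compactness arguments survive across the stratification in which residuals vanish (the index sets $I(\bQ_k)$ may change along the iteration), and that the whole sequence still converges even when the limit is degenerate. The hypothesis $\tilde{\bQ}\bx_i\neq\b0$ is exactly what excludes a spurious fixed point trapped on such a stratum and restores the continuity of $T$ at the limit; without it $\tilde{\bQ}$ need not coincide with $\hat{\bQ}$, which is why the theorem states the two conclusions separately. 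Controlling the weights uniformly in the passage to the linear-rate estimate is the second delicate point, and it too relies on this non-vanishing-residual assumption.
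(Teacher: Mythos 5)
Your majorization--minimization skeleton (quadratic surrogate, descent chain $F(\bQ_{k+1})\le G(\bQ_{k+1},\bQ_k)\le G(\bQ_k,\bQ_k)=F(\bQ_k)$, boundedness of the iterates, and $\|\bQ_{k+1}-\bQ_k\|\to 0$ from the strong convexity of the surrogate) is exactly the paper's argument, and your identification of the non-degenerate fixed points of $T$ with the unique minimizer $\hat{\bQ}$ via Theorem~\ref{thm:convex} is also how the paper concludes $\tilde{\bQ}=\hat{\bQ}$ in the second claim. Your route to the linear rate (uniform strong convexity of the surrogates near $\hat{\bQ}$ balanced against a local quadratic upper bound on $F-F(\hat{\bQ})$) differs from the paper, which instead verifies the hypotheses of Chan and Mulet's convergence theorem for generalized Weiszfeld methods; your sketch is plausible but would need the details of the balancing step written out, whereas the citation route is essentially bookkeeping.

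The genuine gap is the one you flag at the end but do not close: the first claim of the theorem is \emph{unconditional} convergence of the whole sequence to some $\tilde{\bQ}\in\bbH$, including the case where the limit is degenerate ($\tilde{\bQ}\bx_i=\b0$ for some $i$), and your argument only identifies accumulation points ``lying in the non-degenerate region.'' Connectedness of the accumulation set does not by itself rule out a continuum of degenerate accumulation points, so you cannot conclude full-sequence convergence from it. The missing idea in the paper is a finiteness argument: one shows that every subsequential limit $\tilde{\bQ}$ satisfies $\tilde{\bQ}=\argmin_{\bQ\in\bbH_0}F(\bQ)$ with $\bbH_0:=\{\bQ\in\bbH:\ker(\bQ)\supseteq\rmL(\tilde{\bQ})\}$ (by showing $T(\tilde{\bQ})=\tilde{\bQ}$ and that the constrained first-order conditions hold on that stratum), so that each limit is uniquely determined by the index set $I(\tilde{\bQ})\subseteq\{1,\dots,N\}$; since there are only finitely many such index sets, the set of possible limits is finite, and combined with $\|\bQ_{k+1}-\bQ_k\|\to 0$ (Ostrowski's theorem) the accumulation set must be a single point. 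Note also that strict convexity of $F$ on $\bbH$ must be transferred to each stratum $\bbH_0$ to get uniqueness there; this is where \eqref{eq:convex_condition} is used again. Without this step the second claim is also affected, since its hypothesis is phrased in terms of the limit $\tilde{\bQ}$ whose existence is exactly what is at stake.
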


The condition for the linear convergence to $\hat{\bQ}$ in Theorem~\ref{thm:alg_clean} (i.e., $\hat{\bQ}\bx_i\neq \b0$ for all $1\leq i\leq N$) usually does not occur for noiseless data.
This condition is common in IRLS algorithms whose objective functions are $\ell_1$-type and are not twice differentiable at $\b0$. For example, Weiszfeld's Algorithm \citep{Weiszfeld1937} may not converge to the geometric median but to one of the data points \citep[\S3.4]{Kuhn73}.
On the other hand, regularized IRLS algorithms often converge linearly to the minimizer of the regularized function.
We demonstrate this principle in our case as follows.

\begin{theorem}\label{thm:alg_noisy}
Let $\sX =\{\bx_i\}_{i=1}^N$ be a data set in $\reals^D$ satisfying~\eqref{eq:convex_condition},
$\bQ_0$ an arbitrary symmetric matrix with $\tr(\bQ_0)=1$
and $\{\bQ_i\}_{i \in \nats}$ the sequence obtained by iteratively applying~\eqref{eq:iterate_regular} (while initializing it with $\bQ_0$).
Then, the sequence $\{F_\delta(\bQ_i)\}_{i \in \nats}$ converges linearly to
the unique minimum of $F_\delta(\bQ)$, and $\{\bQ_i\}_{i \in \nats}$ converges r-linearly to
the unique minimizer of $F_\delta(\bQ)$.
\end{theorem}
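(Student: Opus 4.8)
The plan is to recognize iteration~\eqref{eq:iterate_regular} as a majorize–minimize (MM) scheme for the smooth convex objective $F_\delta$ of~\eqref{eq:symmetric_eps}, obtain global convergence from a descent property together with compactness, and then derive the linear rate from a contraction estimate at the unique minimizer. First I would record the basic structure. Because $\delta>0$, the weighted matrix $M(\bQ):=\sum_{i=1}^N \bx_i\bx_i^T/\max(\|\bQ\bx_i\|,\delta)$ has strictly positive weights, and since~\eqref{eq:convex_condition} forces the data to span $\reals^D$, we have $M(\bQ)\succ 0$ for every $\bQ$; hence the map $T_\delta(\bQ):=M(\bQ)^{-1}/\tr(M(\bQ)^{-1})$ is well defined on all of $\bbH$ with no division issues, which is precisely the purpose of the regularization. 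For a fixed current iterate $\bQ_k$ I would introduce the quadratic surrogate $G(\bQ;\bQ_k):=\sum_i \|\bQ\bx_i\|^2/(2\max(\|\bQ_k\bx_i\|,\delta))+c_i(\bQ_k)$, with constants $c_i$ chosen so that $G(\bQ_k;\bQ_k)=F_\delta(\bQ_k)$. Using the scalar AM–GM/tangent-parabola bound for the Huber profile underlying $F_\delta$, one checks $F_\delta(\bQ)\le G(\bQ;\bQ_k)$ on $\bbH$ with equality and matching gradients at $\bQ=\bQ_k$, and that $\argmin_{\bQ\in\bbH}G(\bQ;\bQ_k)$ is exactly~\eqref{eq:iterate_regular} (the Lyapunov/Lagrange computation already performed for~\eqref{eq:update_irls} gives $T_\delta(\bQ_k)$). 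This yields the descent chain $F_\delta(\bQ_{k+1})\le G(\bQ_{k+1};\bQ_k)\le G(\bQ_k;\bQ_k)=F_\delta(\bQ_k)$, strict unless $\bQ_k$ is a fixed point of $T_\delta$ since $G(\cdot;\bQ_k)$ is strictly convex.

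Next I would settle existence, uniqueness, and global convergence. Under~\eqref{eq:convex_condition} the data lie in no single hyperplane, so $F$, and hence $F_\delta\ge F-N\delta/2$, is coercive on the affine set $\bbH$; its sublevel sets are therefore compact and a minimizer exists. Strict convexity of $F_\delta$ on $\bbH$ (by an argument parallel to \th{thm:convex}, only easier because the regularization adds strictly convex quadratic pieces) makes this minimizer unique; call it $\hat{\bQ}_\delta$. Matching gradients of $G$ and $F_\delta$ along the diagonal shows the fixed points of $T_\delta$ are exactly the stationary points of $F_\delta$ on $\bbH$, namely $\hat{\bQ}_\delta$ alone. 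The iterates remain in the compact sublevel set $\{F_\delta\le F_\delta(\bQ_0)\}$, and the standard MM argument (continuity of $T_\delta$ together with $F_\delta(\bQ_{k+1})-F_\delta(\bQ_k)\to 0$) shows every limit point is a fixed point, so the whole sequence converges to $\hat{\bQ}_\delta$.

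The linear rate is the crux, and I expect it to be the main obstacle. Near $\hat{\bQ}_\delta$ the map $T_\delta$ is differentiable, and I would linearize it on the tangent space $\calT$ of trace-$0$ symmetric matrices. Writing $\tilde{\mathcal H}_G$ and $\tilde{\mathcal H}_F$ for the restrictions to $\calT$ of the Hessians of $G(\cdot;\hat{\bQ}_\delta)$ and of $F_\delta$ at $\hat{\bQ}_\delta$, the Jacobian equals $\mathcal I-\tilde{\mathcal H}_G^{-1}\tilde{\mathcal H}_F$. The majorization delivers the decisive Hessian comparison $0\prec\tilde{\mathcal H}_F\preceq\tilde{\mathcal H}_G$: for each $i$ in the linear regime the difference of the two quadratic forms is $(\bx_i^T\bDelta\hat{\bQ}_\delta\bx_i)^2/\|\hat{\bQ}_\delta\bx_i\|^3\ge 0$ and vanishes in the quadratic regime, while the strict lower bound is the local strong convexity of $F_\delta$, i.e.\ positivity of the form appearing in~\eqref{eq:def_gamma0p}. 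Consequently the generalized eigenvalues of $(\tilde{\mathcal H}_F,\tilde{\mathcal H}_G)$ lie in $(0,1]$, the Jacobian has spectral radius $<1$, and local—hence, via the established global convergence, global—linear convergence follows.

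Two points inside this last step deserve care. The constraint $\tr(\bQ)=1$ produces a Lagrange multiplier when differentiating $T_\delta$; I would handle it by working intrinsically on $\calT$ and applying the implicit function theorem to the tangential stationarity map, which is exactly how $\mathcal I-\tilde{\mathcal H}_G^{-1}\tilde{\mathcal H}_F$ arises. More delicate is that $F_\delta$ is only $C^1$, failing to be $C^2$ at any data point with $\|\hat{\bQ}_\delta\bx_i\|=\delta$ (the Huber kink). I would circumvent this either by noting that for all but finitely many $\delta$ no optimal residual equals $\delta$, so $F_\delta$ is $C^2$ near $\hat{\bQ}_\delta$ and the linearization is literal, or—more robustly—by replacing the linearization with a direct contraction estimate $\|\bQ_{k+1}-\hat{\bQ}_\delta\|\le c\,\|\bQ_k-\hat{\bQ}_\delta\|$ with $c<1$, assembled from the uniform strong convexity of $F_\delta$ on sublevel sets and the smoothness of the surrogate $G$; this uses only one-sided curvature bounds and therefore tolerates the kink.
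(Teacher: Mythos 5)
Your proposal is correct and follows essentially the same route as the paper: the paper's proof simply reruns the second half of the proof of Theorem~\ref{thm:alg_clean} with the regularized surrogate $G_\delta(\bQ,\bQ^*)$ in place of $\tilde{G}$, i.e., the same majorize--minimize descent, compactness, and fixed-point argument, and then invokes the generalized Weiszfeld analysis of \citet{Chan99}, whose Theorem~6.1 yields exactly your spectral-radius bound for $\mathcal{I}-\tilde{\mathcal{H}}_G^{-1}\tilde{\mathcal{H}}_F$ (this is the rate $r(\delta)$ quoted after the theorem). Your treatment is somewhat more self-contained and more careful than the paper about the failure of twice-differentiability of $F_\delta$ at residuals equal to $\delta$, a point the paper's citation of \citet{Chan99} silently glosses over.
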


The convergence rate of the iterative application of~\eqref{eq:iterate_regular} depends on $\delta$. Following Theorem 6.1 of \citet{Chan99}, this rate is at most
\[
r(\delta)=
\sqrt{\max_{\bDelta=\bDelta^T,\mathrm{tr}(\bDelta)=0}\frac{\sum_{i=1,\|\bQ_*\bx_i\|>\delta}^N
\frac{(\bx_i^T\bDelta\bQ_*\bx_i)^2}{\|\bQ_*\bx_i\|^3}}{\sum_{i=1}^N\frac{\|\bDelta\bx_i\|^2}{\max(\|\bQ_*\bx_i,\delta\|)}}}.
\]
That is, $\|\bQ_k-\hat{\bQ}\|<C\cdot r(\delta)^k$ for some constant $C>0$. If \eqref{eq:convex_condition} holds, then $r(\delta)<1$ for all $\delta>0$ and $r(\delta)$ is a non-increasing function. Furthermore, if $\{\bx_i \in \sX : \|\hat{\bQ}\bx_i\|\neq 0\}$ satisfies assumption \eqref{eq:convex_condition}, then $\lim_{\delta\rightarrow 0}r(\delta)<1$.

\subsection{The Practical Choices for the IRLS Algorithm}
\label{sec:practical}

Following the theoretical discussion in \S\ref{sec:theory_alg} we prefer using the regularized version of the IRLS algorithm.
We fix the regularization parameter
to be smaller than the rounding error, that is, $\delta=10^{-20}$,
so that the regularization is very close to the original problem
(even without regularization the iterative process is stable, but may have few warnings on badly scaled or close to singular matrices).
The idea of the algorithm is to iteratively apply~\eqref{eq:iterate_regular} with an arbitrary initialization (symmetric with trace 1).
We note that in theory $\{F_\delta(\bQ_k)\}_{k \in \nats}$ is non-increasing (see, e.g., the proof of Theorem~\ref{thm:alg_noisy}).
However, empirically the sequence decreases when it is within the rounding error to the minimizer. Therefore, we check $F_\delta(\bQ_k)$ every four
iterations and stop our algorithm when we detect an increase
(we noticed empirically that checking every four iterations, instead of every iteration,
improves the accuracy of the algorithm). Algorithm~\ref{alg:practical}
summarizes our practical procedure for minimizing~\eqref{eq:symmetric}.

\begin{algorithm}[htbp]
\caption{Practical and Regularized Minimization of \eqref{eq:symmetric}} \label{alg:practical}
\begin{algorithmic}
\REQUIRE $\sX=\{\bx_1,\bx_2,\cdots,\bx_N\} \subseteq
\mathbb{R}^{D}$: data
\ENSURE $\hat{\bQ}$: a symmetric matrix in $\reals^{D\times D}$ with $\tr(\hat{\bQ})=1$.\\
\textbf{Steps}:
\STATE
     $\bullet$ $\delta=10^{-20}$\\
     $\bullet$ Arbitrarily initialize $\bQ_0$ to be a symmetric matrix with $\tr(\bQ_0)=1$\\
     $\bullet$ $k=-1$
     \REPEAT \STATE
     $\bullet$ k=k+1\\
     $\bullet$  $\bQ_{k+1}=\left(\sum_{i=1}^{N}\frac{\bx_i\bx_i^T}{\max(\|\bQ_k\bx_i\|,\delta)}\right)^{-1}/\tr\left(\left(\sum_{i=1}^{N}\frac{\bx_i\bx_i^T}{\max(\|\bQ_k\bx_i\|,\delta)}\right)^{-1}\right)$.\\
     \UNTIL $F(\bQ_{k+1}) > F(\bQ_{k-3})$ and $\mod(k+1,4)=0$\\
    $\bullet$ Output $\hat{\bQ} := \bQ_{k}$
\end{algorithmic}
\end{algorithm}

\subsection{Complexity of Algorithm~\ref{alg:practical}}
\label{sec:complexity}
Each update of Algorithm~\ref{alg:practical} requires a complexity of order $O(N\cdot D^2)$, due to the sum of $N$ $D\times D$ matrices. Therefore, for $n_s$ iterations the total running time of  Algorithm~\ref{alg:practical} is of order $O(n_s\cdot N\cdot D^2)$. In most of our numerical experiments $n_s$ was less than $40$. The storage of this algorithm is $O(N\times D)$, which amounts to storing $\sX$. Thus, Algorithm~\ref{alg:practical} has the same order of storage and complexity as PCA. In practice, it might be a bit slower due to a larger constant for the actual complexity.

\section{Subspace Recovery in Practice}
\label{sec:prac_solutions}

We view the GMS algorithm as a prototype for various subspace recovery algorithms.
We discuss here modifications and extensions of this procedure in order to make it even more practical.
Sections \ref{sec:theory_no_dim} and \ref{sec:theory_know_dim} discuss the cases where $d$ is unknown and known respectively; in particular,
\S\ref{sec:egms} proposes the EGMS algorithm when $d$ is known. At last, \S\ref{sec:egms_complexity} concludes with the computational
complexity of the GMS and EGMS algorithms.

\subsection{Subspace Recovery without Knowledge of $d$}
\label{sec:theory_no_dim}

In theory, the subspace recovery described here can work without knowing the dimension $d$.
In the noiseless case, one may use~\eqref{eq:gms_no_noise} to estimate the subspace as guaranteed by Theorem~\ref{thm:recovery}.
In the case of small noise one can estimate $d$ from the eigenvalues of $\hat{\bQ}$ and then apply the GMS algorithm.
This strategy is theoretically justified by Theorems~\ref{thm:recovery} and~\ref{thm:noisy_recovery} as well as the
discussion following~\eqref{eq:noisy_recovery}. The problem is that condition~\eqref{eq:condition3}
for guaranteeing exact recovery by GMS is restrictive;
in particular, it requires the number of outliers to be larger than at least $D-d$
(according to our numerical experiments it is safe to use the lower bound $1.5 \, (D-d)$).
For practitioners, this is a failure mode of GMS, especially when the dimension of the data set is large (for example, $D>N$).

While this seems to be a strong restriction,
we remark that the problem of exact subspace recovery without knowledge of the intrinsic dimension is rather hard and some
assumptions on data sets or some knowledge of data
parameters would be expected.
Other algorithms for this problem, such as
\citet{Chandrasekaran_Sanghavi_Parrilo_Willsky_2009}, \citet{candes_wright_robust_pca09}, \citet{Xu2010} and \citet{robust_mccoy}, require estimates of unknown
regularization parameters (which often depend on various properties of the data, in particular,
the unknown intrinsic dimension) or strong assumptions on the underlying distribution of the outliers or corrupted elements.

We first note that if only conditions~\eqref{eq:condition1} and~\eqref{eq:condition2} hold,
then Theorem~\ref{thm:recovery} still guarantees that the GMS algorithm outputs a subspace containing the underlying subspace.
Using some information on the data one may recover the underlying subspace from the outputted subspace containing it, even when
dealing with the failure mode.

In the rest of this section we describe several practical solutions for dealing with the failure mode, in particular, with
small number of outliers. We later demonstrate them
numerically  in \S\ref{sec:check_practical} for artificial data and in \S\ref{sec:yale_face} and \S\ref{sec:background_subtraction} for real data.

Our first practical solution is to reduce the ambient dimension of the data. When the reduction is not too aggressive, it can be performed via PCA. In \S\ref{sec:egms} we also propose a robust dimensionality reduction which can be used instead.
There are two problems with this strategy. First of all, the reduced dimension is another parameter that requires tuning.
Second of all, some information may be lost by the dimensionality reduction and thus exact recovery of the underlying subspace is generally impossible.

A second practical solution is to add artificial outliers. The number of added outliers should not be too large
(otherwise \eqref{eq:condition1} and \eqref{eq:condition2} will be violated),
but they should sufficiently permeate through $\reals^D$ so that~\eqref{eq:condition3} holds.
In practice, the number of outliers can be $2D$, since empirically \eqref{eq:condition3} holds with
high probability when $N_0=2(D-d)$. To overcome the possible impact of outliers with arbitrarily large magnitude,
we project the data with artificial outliers onto the sphere (following \citealt{LMTZ2012}).
Furthermore, if the original data matrix does not have full rank (in particular if $N<D$)
we reduce the dimension of the data (by PCA) to be the rank of the data matrix.
This dimensionality reduction clearly does not result in any loss of information.
We refer to the whole process of initial ``lossless dimensionality reduction'' (if necessary),
addition of $2D$ artificial Gaussian outliers, normalization onto the sphere and application of GMS
(with optional estimation of $d$ by the eigenvalues of $\hat{\bQ}$) as the GMS2 algorithm.
We believe that it is the best practical solution to avoid condition~\eqref{eq:condition3}
when $d$ is unknown.

A third solution is to regularize our M estimator, that is, to minimize the following objective function with the regularization parameter $\lambda$:
\begin{equation}\label{eq:regu}
\hat{\bQ}=\argmin_{\tr(\bQ)=1,\bQ=\bQ^T}\sum_{i=1}^N\|\bQ\bx_i\|+\lambda \|\bQ\|_F^2.
\end{equation}
The IRLS algorithm then becomes
\[
\bQ_{k+1}=\left(\sum_{i=1}^{N}\frac{\bx_i\bx_i^T}{\max(\|\bQ_k\bx_i\|,\delta)}+2\lambda\bI\right)^{-1}/\tr\left(\left(\sum_{i=1}^{N}\frac{\bx_i\bx_i^T}{\max(\|\bQ_k\bx_i\|,\delta)}\right)^{-1}+2\lambda\bI\right).
\]
We note that if $\lambda=0$ and there are only few outliers, then in the noiseless case $\dim(\ker(\hat{\bQ}))>d$ and in the small noise case the number of significantly small eigenvalues is bigger than $d$. On the other hand when $\lambda\rightarrow\infty$, $\hat{\bQ}\rightarrow \bI/D$, whose kernel is degenerate (similarly, it has no significantly small eigenvalues). Therefore, there exists an appropriate $\lambda$ for which $\dim(\ker(\hat{\bQ}))$ (or the number of significantly small eigenvalues of $\hat{\bQ}$) is $d$. This formulation transforms the estimation of $d$ into estimation of $\lambda$. This strategy is in line with other common regularized solutions to this problem (see, e.g., \citealt{Chandrasekaran_Sanghavi_Parrilo_Willsky_2009, candes_wright_robust_pca09, Xu2010, robust_mccoy}), however, we find it undesirable to estimate a regularization parameter that is hard to interpret in terms of the data.

\subsection{Subspace Recovery with Knowledge of $d$} \label{sec:theory_know_dim}

Knowledge of the intrinsic dimension $d$ can help improve the performance of GMS or suggest completely new variants (especially as GMS always finds a subspace containing the underlying subspace).
For example, knowledge of $d$ can be used to carefully estimate the parameter $\lambda$ of~\eqref{eq:regu}, for example, by finding $\lambda$ yielding exactly a $d$-dimensional subspace via a bisection procedure.

\citet{LMTZ2012} modified the strategy described in here by requiring an additional constraint on the maximal eigenvalue of $\bQ$ in \eqref{eq:mestimator_function}: $\lambda_{\max}(\bQ)\leq \frac{1}{D-d}$ (where $\lambda_{\max}(\bQ)$ is the largest eigenvalue of $\bQ$).
This approach has theoretical guarantees, but it comes with the price of additional SVD in each iteration, which makes the algorithm slightly more expensive. Besides, in practice
(i.e., noisy setting) this approach requires
tuning the upper bound on $\lambda_{\max}(\bQ)$. Indeed, the solution $\bQ'$ to their minimization problem (with $\lambda_{\max}(\bQ') \leq 1/(D-d)$ and $\tr(\bQ')=1$) satisfies that
$\dim(\ker(\bQ')$ is at most $d$ and equals $d$ when $\bQ'$ is a scaled projector operator. They proved that $\dim(\ker(\bQ') = d$ for the setting of pure inliers (lying exactly on a subspace) under some conditions avoiding the three types of enemies. However, in practice (especially in noisy cases) the actual subspace often has dimension smaller than $d$ and thus the bound on $\lambda_{\max}(\bQ)$ has to be tuned as an additional parameter. In some cases, one may take $\lambda_{\max}(\bQ)>\frac{1}{D-d}$ and find the subspace according to the bottom $d$ eigenvectors. In other cases, a bisection method on the bound of $\lambda_{\max}(\bQ)$ provide more accurate results (see related discussion in \citet[\S6.1.6]{LMTZ2012}).

\subsubsection{The EGMS Algorithm} \label{sec:egms}

We  formulate in Algorithm~\ref{alg:rmL} the Extended Geometric Median Subspace (EGMS) algorithm for subspace recovery with known intrinsic dimension.
\begin{algorithm}[htbp]
\caption{The Extended Geometric Median Subspace Algorithm} \label{alg:rmL}
\begin{algorithmic}
\REQUIRE $\sX=\{\bx_i\}_{i=1}^N \subseteq
\mathbb{R}^{D}$: data, $d$: dimension of $\rmL^*$, an algorithm for minimizing~\eqref{eq:symmetric}\\
\ENSURE $\hat{\rmL}$: a $d$-dimensional linear subspace in $\reals^D$.\\
\textbf{Steps}:
\STATE
     $\bullet$  $\hat{\rmL}=\reals^D$\\
    \REPEAT \STATE
     $\bullet$  $\hat{\bQ}=\argmin_{\bQ\in\bbH,\bQ\bP_{\hat{\rmL}^\perp}=\b0}F(\bQ)$\\
     $\bullet$  $\bu=$ \ the top eigenvector of $\hat{\bQ}$\\
     $\bullet$ $\hat{\rmL}=\hat{\rmL}\cap \Sp(\bu^\perp)$\\
     \UNTIL $\dim(\hat{\rmL})=d$\\
\end{algorithmic}
\end{algorithm}

We justify this basic procedure in the noiseless case without requiring~\eqref{eq:condition3} as follows.
\begin{theorem}\label{thm:rmL}
Assume that $d,D \in \nats$, $d<D$, $\sX$ is a data set in $\reals^D$ and $\rmL^*\in\GDd$.
If only conditions~\eqref{eq:condition1} and \eqref{eq:condition2} hold,
then the EGMS Algorithm exactly recovers $\rmL^*$.
\end{theorem}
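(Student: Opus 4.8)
The plan is to run an induction on the passes through the EGMS loop, maintaining the invariant that after $k$ passes the current estimate $\hat{\rmL}_k$ satisfies $\rmL^*\subseteq\hat{\rmL}_k$ and $\dim\hat{\rmL}_k=D-k$. The base case $k=0$ is immediate since $\hat{\rmL}_0=\reals^D$. Each pass removes a single eigenvector direction $\bu\in\hat{\rmL}_k$ from $\hat{\rmL}_k$, so the dimension drops by exactly one; hence after $D-d$ passes we reach $\dim\hat{\rmL}_{D-d}=d$, and combined with $\rmL^*\subseteq\hat{\rmL}_{D-d}$ and $\dim\rmL^*=d$ this forces $\hat{\rmL}_{D-d}=\rmL^*$, which is the claimed exact recovery, with the loop halting precisely at this point.

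The heart of the argument is the single-step claim: if $\rmL^*\subseteq\hat{\rmL}_k$, then the constrained minimizer $\hat{\bQ}_k=\argmin_{\bQ\in\bbH,\,\bQ\bP_{\hat{\rmL}_k^\perp}=\b0}F(\bQ)$ satisfies $\ker(\hat{\bQ}_k)\supseteq\rmL^*$. Granting this, the inductive step closes quickly: $\hat{\bQ}_k$ is positive semidefinite (as in Lemma~\ref{lemma:nonnegative}) and nonzero with $\tr(\hat{\bQ}_k)=1$, so its largest eigenvalue is strictly positive and its top eigenvector $\bu$ is orthogonal to $\ker(\hat{\bQ}_k)\supseteq\rmL^*$. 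Thus $\bu\perp\rmL^*$, and $\hat{\rmL}_{k+1}=\hat{\rmL}_k\cap\Sp(\bu^\perp)$ still contains $\rmL^*$. Since the entire top eigenspace lies in $\rmL^{*\perp}$, the step is insensitive to how ties among top eigenvectors are broken.

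To prove the single-step claim I would recast the constrained minimization as an unconstrained GMS problem inside $\hat{\rmL}_k$: because $\bQ$ is supported on $\hat{\rmL}_k$ one has $\bQ\bx_i=\bQ\bP_{\hat{\rmL}_k}\bx_i$, so $\hat{\bQ}_k$ is exactly the minimizer of $F$ for the projected data $\{\bP_{\hat{\rmL}_k}\bx_i\}$ in ambient space $\hat{\rmL}_k$, whose inliers $\sX_1\subseteq\rmL^*$ are unchanged and whose underlying subspace is still $\rmL^*$. One then wants to invoke the second conclusion of Theorem~\ref{thm:recovery} ($\ker\supseteq\rmL^*$) for this reduced problem. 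Condition~\eqref{eq:condition2} transfers verbatim: every unit $\bv\in\rmL^*\subseteq\hat{\rmL}_k$ obeys $\bv^T\bP_{\hat{\rmL}_k}\bx=\bv^T\bx$, so the reduced right-hand side equals the original one, while the common left-hand side is also unchanged, since a symmetric trace-one matrix supported on $\rmL^*$ is the same object whether read in $\reals^D$ or in $\hat{\rmL}_k$.

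The main obstacle is condition~\eqref{eq:condition1}. Its right-hand side is a minimum of $\sum_{\bx\in\sX_0}|\bv^T\bx|$ over unit directions $\bv$ in the orthogonal complement of $\rmL^*$, and in the reduced problem this minimum is taken only over $\hat{\rmL}_k\cap\rmL^{*\perp}\subseteq\rmL^{*\perp}$; a minimum over a smaller set can only grow, so the reduced version of~\eqref{eq:condition1} is strictly stronger and is not implied by the hypothesis. The resolution I would pursue is to observe that the containment $\rmL^*\subseteq\ker(\hat{\bQ}_k)$ does not actually require~\eqref{eq:condition1}: in Theorem~\ref{thm:recovery}, condition~\eqref{eq:condition1} governs the reverse inclusion $\ker\subseteq\rmL^*$ (jointly with~\eqref{eq:condition3}), whereas $\rmL^*\subseteq\ker$ is driven by~\eqref{eq:condition2} alone. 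I would therefore isolate from the proof of Theorem~\ref{thm:recovery} the implication that~\eqref{eq:condition2} forces $\ker\supseteq\rmL^*$ and apply it in $\hat{\rmL}_k$, where~\eqref{eq:condition2} holds. Equivalently, one can argue directly: if $\rmL^*\not\subseteq\ker(\hat{\bQ}_k)$, form a competitor by removing the $\rmL^*$-interaction of $\hat{\bQ}_k$ and renormalizing the trace to one; the inlier contributions can only decrease, while the change in the outlier contributions is controlled by the quantity on the right of~\eqref{eq:condition2}, so~\eqref{eq:condition2} makes the competitor strictly better and contradicts minimality. Making this competitor inequality precise within the constraint set $\{\bQ\bP_{\hat{\rmL}_k^\perp}=\b0\}$ is the step that demands the most care.
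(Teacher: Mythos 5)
Your overall strategy matches the paper's: induct on the loop, recast the constrained minimization as an unconstrained GMS problem on the projected data $\bP_{\hat{\rmL}_k}(\sX)$ inside $\hat{\rmL}_k$, and invoke the second conclusion of Theorem~\ref{thm:recovery} to get $\ker(\hat{\bQ}_k)\supseteq\rmL^*$ and hence $\bu\perp\rmL^*$. You are also right --- and more careful than the paper's own two-line justification --- that condition~\eqref{eq:condition2} transfers verbatim to the projected data, while the reduced version of~\eqref{eq:condition1} takes its minimum over the smaller sphere $S^{D-1}\cap\hat{\rmL}_k\cap\rmL^{*\perp}$ and is therefore \emph{not} implied by the original hypothesis.

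The gap is in your resolution of that obstacle. It is not true that \eqref{eq:condition2} alone drives $\rmL^*\subseteq\ker(\hat{\bQ})$ while \eqref{eq:condition1} only serves the reverse inclusion: Theorem~\ref{thm:recovery} explicitly requires \emph{both} \eqref{eq:condition1} and \eqref{eq:condition2} for the conclusion $\ker(\hat{\bQ})\supseteq\rmL^*$, and the reverse inclusion is handled entirely by \eqref{eq:condition3}. In the proof of \eqref{eq:symmetric2}, condition~\eqref{eq:condition1} enters through \eqref{eq:result1} to control the term $\bDelta_0=\tr(\bP_{\rmL^*}\bDelta\bP_{\rmL^*})\bv_0\bv_0^T$ that compensates for the trace constraint --- exactly the renormalization step in your ``competitor'' argument, whose cost is measured by the RHS of \eqref{eq:condition1}, not of \eqref{eq:condition2}. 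A concrete counterexample to your claim: take well-spread inliers in $\rmL^*$ and outliers lying in $\rmL^{*\perp}$, well spread there and of very large magnitude. The RHS of \eqref{eq:condition2} is then zero, so \eqref{eq:condition2} holds, yet the minimizer prefers to annihilate the huge outliers, so $\ker(\hat{\bQ})\supseteq\rmL^{*\perp}$ and $\ker(\hat{\bQ})\not\supseteq\rmL^*$; your competitor (drop the $\rmL^*$-interaction and renormalize) degenerates to the zero matrix in this case. So the single-step claim still needs a version of \eqref{eq:condition1} for the projected data, and that transfer is precisely what your argument (and, admittedly, the paper's terse assertion) leaves unestablished.
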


In \S\ref{sec:info_evd} we show how the vectors obtained by EGMS at each iteration can be used to form robust principal components (in reverse order), even when $\hat{\bQ}$ is degenerate.

\subsection{Computational Complexity of GMS and EGMS}
\label{sec:egms_complexity}

The computational complexity of GMS is of the same order as that of Algorithm~\ref{alg:practical},
that is, $O(n_s\cdot N\cdot D^2)$ (where $n_s$ is the number of required iterations for Algorithm~\ref{alg:practical}).
Indeed, after obtaining $\hat{\bQ}$, computing $\rmL^*$ by its smallest $d$ eigenvectors takes an order of $O(d\cdot D^2)$ operations.

EGMS on the other hand repeats Algorithm~\ref{alg:practical} $D-d$ times; therefore it adds an order of $O((D-d)\cdot n_s\cdot N\cdot D^2)$ operations,
where $n_s$ denotes the total number of iterations for Algorithm~\ref{alg:practical}.
In implementation, we can speed up the EGMS algorithm by excluding the span of some of the top eigenvectors
of $\hat{\bQ}$ from $\hat{\rmL}$ (instead of excluding only the top eigenvector in the third step of Algorithm~\ref{alg:rmL}).
We demonstrate this modified procedure on artificial setting in \S\ref{sec:check_practical}.

\section{Numerical Experiments}\label{sec:numerics}

We compare our proposed estimator to other algorithms, while using both synthetic and real data. We also demonstrate the effectiveness of some of our
practical proposals.
In \S\ref{sec:synthetic} we describe a model for generating synthetic data. Using this model, we respectively demonstrate in
\S\ref{sec:check_practical}-\S\ref{sec:test_delta} the effectiveness of the following strategies: the practical solutions of \S\ref{sec:theory_no_dim} and \S\ref{sec:theory_know_dim}, our estimation of the subspace dimension, and our regularization (more precisely, its effect on the recovery error).
In \S\ref{sec:info_evd} we demonstrate the use of our M estimator for robust estimation of eigenvectors of the covariance (or the inverse covariance) matrix.
At last, actual comparisons are demonstrated in \S\ref{sec:detailed_synthetic}-\S\ref{sec:background_subtraction} for synthetic data, face data and video surveillance data respectively.


\subsection{Model for Synthetic Data}\label{sec:synthetic}
In \S\ref{sec:check_practical}-\S\ref{sec:test_delta} and \S\ref{sec:detailed_synthetic} we generate data from the following model.
We randomly choose $\rmL^* \in \GDd$, sample $N_1$ inliers from the $d$-dimensional Multivariate Normal
distribution $N(\b0,\bI_{d \times d})$ on $\rmL^*$ and add $N_0$ outliers sampled from a uniform distribution on $[0,1]^D$.
The outliers are strongly asymmetric around the subspace to make the subspace recovery problem more difficult \citep{lp_recovery_part1_11}.
In some experiments below additional Gaussian noise is considered.
When referring to this synthetic data we only need to specify its parameters $N_1$, $N_0$, $D$, $d$
and possibly the standard deviation for the additive noise.
For any subspace recovery  algorithm (or heuristics), we denote by  $\tilde{\rmL}$ its output
(i.e., the estimator for $\rmL^*$) and measure the corresponding recovery error by $e_{\tilde{\rmL}} = \|\bP_{\tilde{\rmL}}-\bP_{\rmL^*}\|_F$.

\subsection{Demonstration of Practical Solutions of \S\ref{sec:theory_no_dim} and \S\ref{sec:theory_know_dim}}
\label{sec:check_practical}

We present two different artificial cases, where
in one of them condition~\eqref{eq:condition3} holds and in the other one it does not hold and test the
practical solutions of \S\ref{sec:theory_no_dim} and \S\ref{sec:theory_know_dim}
in the second case.

The two cases are the following instances
of the synthetic model of \S\ref{sec:synthetic}: (a) $(N_1,N_0,D,d)$ $=(100$ , $100$, $100$, $20)$ and (b) $(N_1,N_0,D,d)=(100,20,100,20)$.
The GMS algorithm estimates the underlying subspace $\rmL^*$ given $d=20$ with recovery errors $2.1\times 10^{-10}$ and $3.4$ in cases (a) and (b) respectively. In case (a) there are sufficiently many outliers (with respect to $D-d$) and the GMS algorithm is successful.
We later show in \S\ref{sec:dim} that the underlying dimension ($d=20$) can be easily estimated by the eigenvalues of $\hat{\bQ}$.
In case (b) $N_0 = 0.25*(D-d)$, therefore, condition~\eqref{eq:condition3} is violated
and the GMS algorithm completely fails.

We demonstrate the success of the practical solutions of \S\ref{sec:theory_no_dim} and \S\ref{sec:theory_know_dim}
in case (b). We assume that the dimension $d$ is known, though in \S\ref{sec:dim} we estimate $d$ correctly for the non-regularized solutions of \S\ref{sec:theory_no_dim}. Therefore, these solutions can be also applied without knowing the dimension.
If we reduce the dimension of the data set in case (b) from $D=100$ to $D=35$ (via PCA; though one can also use EGMS), then GMS (with $d=20$) achieves a recovery error of $0.23$, which indicates that GMS almost recovers the subspace correctly.
We remark though that if we reduce the dimension to, for example, $D=55$, then the GMS algorithm will still fail. We also note that the recovery error is not as attractive as the ones below;
this observation probably indicates that some information was lost during the dimension reduction.

The GMS2 algorithm with $d=20$ recovers the underlying subspace in case (b) with error $1.2\times 10^{-10}$.
This is the method we advocated for when possibly not knowing the intrinsic dimension.

The regularized minimization of~\eqref{eq:regu} with $\lambda=100$ works well for case (b).
In fact, it recovers the subspace as $\ker{\hat{\bQ}}$ (without using its underlying dimension)
with error $3.3\times 10^{-13}$.
The only issue is how to determine the value of $\lambda$.
We claimed in \S\ref{sec:theory_know_dim} that if $d$ is known, then $\lambda$ can be carefully estimated by the bisection method.
This is true for this example, in fact, we initially chose $\lambda$ this way.

We remark that the REAPER algorithm of \citet{LMTZ2012} did not perform well for this particular data, though in general it is a very successful solution.
The recovery error of the direct REAPER algorithm was 3.725 (and 3.394 for S-REAPER) and the error for its modified version via bisection
(relaxing the bound on the largest eigenvalue so that $\dim(\ker(\hat{\bQ}))=20$) was 3.734 (and 3.175 for S-REAPER).

At last we demonstrate the performance of EGMS and its faster heuristic with $d=20$. The recovery error of the original EGMS for case (b) is only $0.095$. We suggested in \S\ref{sec:egms_complexity} a faster heuristic for EGMS, which can be reformulated as follows: In the third step of Algorithm~\ref{alg:rmL}, we replace $\bu$ (the top eigenvector of $\hat{\bQ}$) with $\bU$, the subspace spanned by several top eigenvectors. In the noiseless case, we could let $\bU$ be the span of the nonzero eigenvectors of $\hat{\bQ}$. This modification of EGMS (for the noiseless case) required only two repetitions of
Algorithm~\ref{alg:practical} and its recovery error was $2.2\times 10^{-13}$. In real data sets with noise we need to determine the number of top eigenvectors spanning $\bU$, which makes this modification of EGMS less automatic.

\subsection{Demonstration of Dimension Estimation}
\label{sec:dim}

We test dimension estimation by eigenvalues of $\hat{\bQ}$ for cases (a) and (b) of \S\ref{sec:check_practical}.
The eigenvalues of $\hat{\bQ}$ obtained by Algorithm~\ref{alg:practical} for the two cases are shown in Figure~\ref{fig:svd}.
\begin{figure}
\begin{center}
\includegraphics[width=.45\textwidth,height=.3\textwidth]{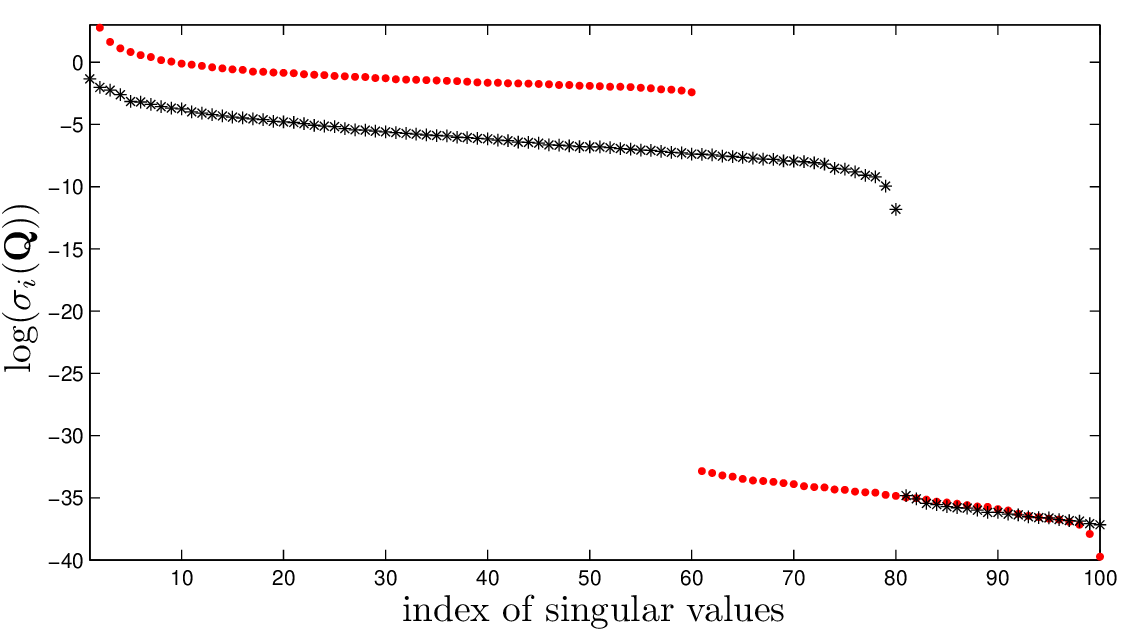}
\includegraphics[width=.45\textwidth,height=.3\textwidth]{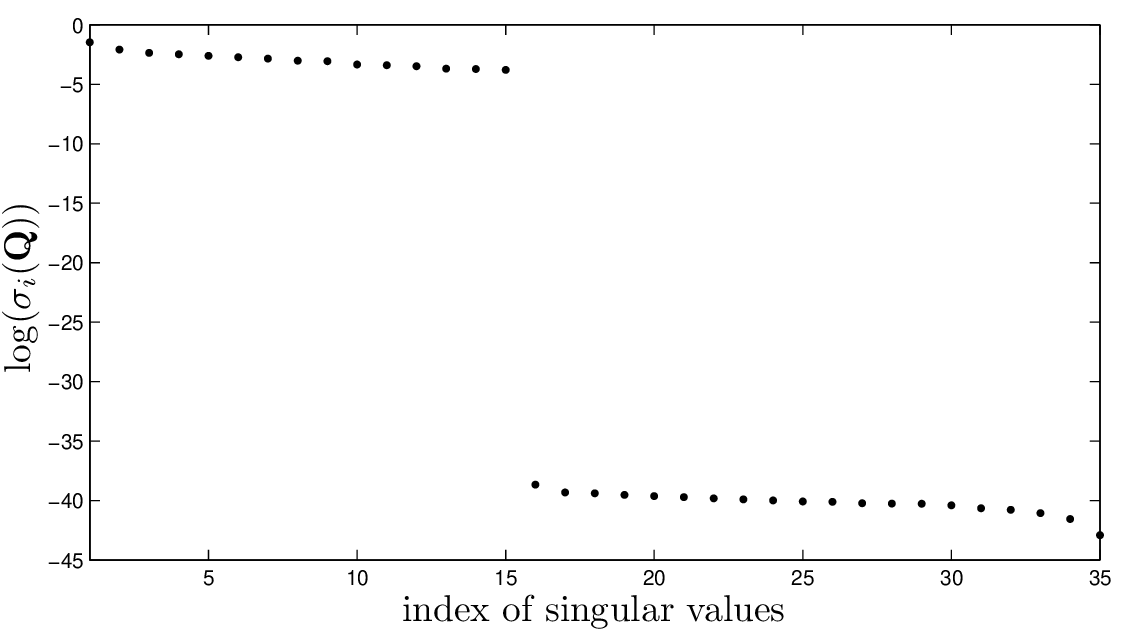}
\caption{Dimension estimation: In the left figure, the starred points and the dotted point represent log-scaled eigenvalues of the output of Algorithm~\ref{alg:practical} for cases (a) and (b) respectively (see \S\ref{sec:dim}). The right figure corresponds to case (b) with dimension reduced to $35$. \label{fig:svd}}
\end{center}
\end{figure}
In case (a), the largest logarithmic eigengap (i.e., the largest gap in logarithms of eigenvalues) occurs at $80$, so we can correctly estimate that $d= D-80 = 20$ (the eigenvalues are not zero since Algorithm~\ref{alg:practical} uses the $\delta$-regularized objective function). However, in case (b) the largest eigengap occurs at $60$ and thus mistakenly predicts $d=40$.

As we discussed in \S\ref{sec:check_practical}, the dimension estimation
fails here since condition~\eqref{eq:condition3} is not satisfied. However, we have verified that if we try any of the solutions
proposed in \S\ref{sec:theory_no_dim} then we can correctly recover that $d=20$ by the logarithmic eigengap. For example, in Figure~\ref{fig:svd} we demonstrate the logarithms of eigenvalues of $\hat{\bQ}$ in case (b) after dimensionality reduction (via PCA) onto dimension $D=35$ and it is clear that the largest gap is at $d=20$ (or $D-d=80$). We obtained similar graphs when using $2 D$ artificial outliers (more precisely, the GMS2 algorithm without the final application of the GMS algorithm) or the regularization of \eqref{eq:regu} with $\lambda = 100$.

\subsection{The Effect of the Regularization Parameter $\delta$}
\label{sec:test_delta}
We assume a synthetic data set sampled according to the model of \S\ref{sec:synthetic} with $(N_1,N_0,D,d)=(250,250,100,10)$.
We use the GMS algorithm with $d=10$ and different values of the regularization parameter $\delta$ and record the recovery error in Figure~\ref{fig:F_delta}.
For $10^{-14}\leq \delta \leq 10^{-2}$, $\log(\text{error})-\log(\delta)$ is constant. We thus empirically obtain that the error is of order
$O(\delta)$ in this range.
On the other hand, \eqref{eq:subspace_diff2} only obtained an order of $O(\sqrt{\delta})$. It is possible that methods similar to those of
\citet{Coudron_Lerman2012} can obtain sharper error bounds.
We also expect that for $\delta$ sufficiently small (here smaller than $10^{-14}$), the rounding error becomes dominant.
On the other hand, perturbation results are often not valid for sufficiently large $\delta$ (here this is the case for $\delta\ > 10^{-2}$).

\begin{figure}[htbp]
\begin{center}
\includegraphics[width=.4\textwidth,height=.3\textwidth]{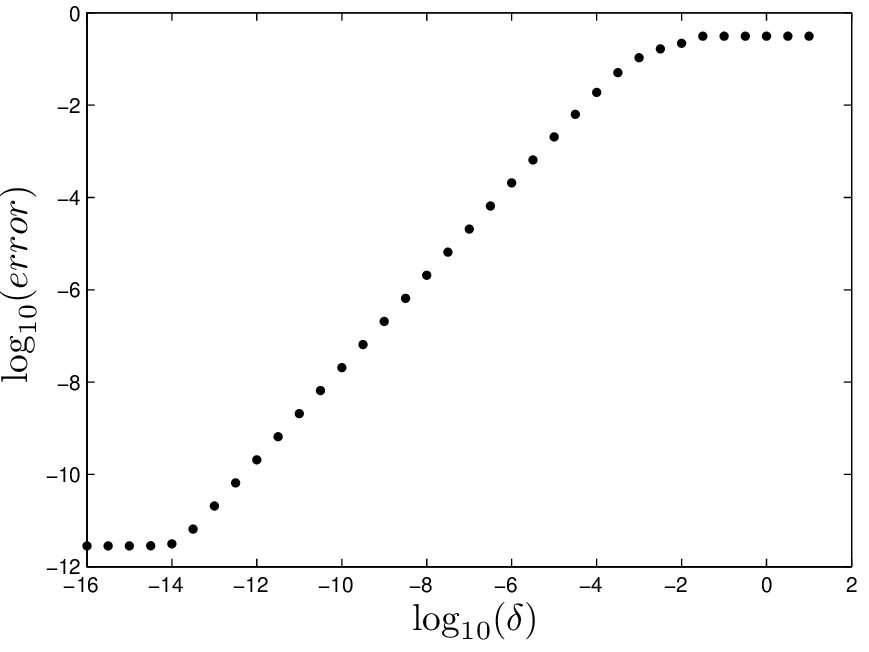}
\caption{The recovery errors and the regularization parameters $\delta$\label{fig:F_delta}}
\end{center}
\end{figure}

\subsection{Information Obtained from Eigenvectors}
\label{sec:info_evd}
Throughout the paper we emphasized the subspace recovery problem, but did not discuss at all the information that can be inferred from the eigenvectors of our robust PCA strategy. Since in standard PCA these vectors have significant importance, we exemplify the information obtained from our robust PCA and compare it to that obtained from PCA and some other robust PCA algorithms.

We create a sample from a mixture of two Gaussian distributions with the same mean and same eigenvalues of the covariance matrices, but different eigenvectors of the covariance matrices. The mixture percentages are $25\%$ and $75\%$. We expect the eigenvectors of any good robust PCA algorithm
(robust to outliers as perceived in this paper) to be close to that of the covariance of the main component (with $75\%$).

More precisely, we sample $300$ points from $N(\b0,\bSigma_1)$, where $\bSigma_1$ is a $10 \times 10$ diagonal matrix with elements $1,2^{-1},2^{-2},\cdots,2^{-9}$ and $100$ points from $N(\b0,\bSigma_2)$, where $\bSigma_2 = \bU \bSigma_1 \bU^T$, where $\bU$ is randomly chosen from the set of all orthogonal matrices in $\mathbb{R}^{10\times 10}$. The goal is to estimate the eigenvectors of $\bSigma_1$
(i.e., the standard basis vectors in $\reals^{10}$) in the presence of $25 \%$ ``outliers''. Unlike the subspace recovery problem, where we can expect to exactly recover a linear structure among many outliers, here the covariance structure is more complex and we cannot exactly recover it with $25 \%$ outliers.

We estimated the eigenvectors of $\bSigma_1$ by the
the eigenvectors of $\hat{\bQ}$ of Algorithm~\ref{alg:practical} in reverse order (recall that $\hat{\bQ}$  is a scaled and robust version of the inverse covariance). We refer to this procedure as ``EVs (eigenvectors) of $\hat{\bQ}^{-1}$''. We also estimated these eigenvectors by standard PCA, LLD \citep{robust_mccoy} with $\lambda = 0.8\sqrt{D/N}$  and PCP \citep{candes_wright_robust_pca09} with $\lambda = 1/\sqrt{\max(D,N)}$.
We repeated the random simulation (with different samples for the random orthogonal matrix $\bU$) 100 times and reported  in Table~\ref{tab:eigen1} the average angles between the estimated and actual top two eigenvectors of $\bSigma_1$ according to the different methods. We note that the ``EVs  of $\hat{\bQ}^{-1}$'' outperforms PCA, LLD (or OP) and PCP in terms of estimation of the top two eigenvectors of $\bSigma_1$. We remark though that PCP does not suit for robust estimation of the empirical covariance and thus the comparison is unfair for PCP.
\begin{table*}[htbp]
\centering
\begin{tabular}{|c|c|c|c|c|}
\hline
&EVs of $\hat{\bQ}^{-1}$ &LLD & PCP & PCA\\
\hline
Eigenvector 1& $3.0^\circ$ &$5.5^\circ$&$45.7^\circ$&$14.8^\circ$\\
Eigenvector 2& $3.0^\circ$ &$5.5^\circ$&$47.4^\circ$&$40.3^\circ$\\
\hline
\end{tabular}
\caption{{Angles (in degrees) between the estimated and actual top two eigenvectors of $\bSigma_1$. \label{tab:eigen1}}}
\end{table*}

When the covariance matrix $\bSigma_1$ (and consequently also $\bSigma_2$) is degenerate, $\hat{\bQ}$ might be singular and therefore $\hat{\bQ}$ cannot be directly used to robustly estimate eigenvectors of the covariance matrix.
For this case, EGMS (Algorithm~\ref{alg:rmL}) can be used, where the vector $\bu$ obtained in the $i$th iteration of Algorithm~\ref{alg:rmL} can be considered as the $(D-i+1)$st robust eigenvector (that is, we reverse the order again).
To test the performance of this method, we modify $\bSigma_1$ in the above model as follows:
$\bSigma_1$=$\diag(1,0.5,0.25,0,0,\cdots,0)$.
We repeated the random simulations of this modified model 100 times and reported  in Table~\ref{tab:eigen1} the average angles between the estimated and actual top two eigenvectors of $\bSigma_1$ according to the different methods.
Here LLD did slightly better than EGMS and they both outperformed PCA (and PCP).
\begin{table*}[htbp]
\centering
\begin{tabular}{|c|c|c|c|c|}
\hline
&EGMS&LLD  & PCP & PCA\\
\hline
Eigenvector 1& $5.2^\circ$ &$3.4^\circ$&$42.6^\circ$&$8.2^\circ$\\
Eigenvector 2& $5.2^\circ$ &$3.4^\circ$&$47.3^\circ$&$16.1^\circ$\\
\hline

\end{tabular}
\caption{{Angles (in degrees) between the estimated and actual top two eigenvectors of $\bSigma_1$. \label{tab:eigen2}}}
\end{table*}

\subsection{Detailed Comparison with Other Algorithms for Synthetic Data}\label{sec:detailed_synthetic}

Using the synthetic data of \S\ref{sec:synthetic}, we compared the GMS algorithm with the following algorithms:
MDR (Mean Absolute Deviation Rounding) of \citet{robust_mccoy}, LLD (Low-Leverage Decomposition) of \citet{robust_mccoy},
OP (Outlier Pursuit) of  \citet{Xu2010}, PCP (Principal Component Pursuit) of \citet{candes_wright_robust_pca09},
MKF (Median $K$-flats with $K=1$) of  \citet{MKF_workshop09}, HR-PCA (High-dimensional Robust PCA)
of \citet{Xu2010_highdimensional}, a common M-estimator \citep[see, e.g.,]{huber_book} and $R_1$-PCA of \citet{Ding+06}.
The codes of OP and HR-PCA were obtained from \url{http://guppy.mpe.nus.edu.sg/~mpexuh},
the code of MKF from \url{http://www.math.umn.edu/~zhang620/mkf}, the code of PCP
from \url{http://perception.csl.illinois.edu/matrix-rank/sample_code.html}
with the Accelerated Proximal Gradient and full SVD version, the codes of MDR and LLD from
\url{http://www.acm.caltech.edu/~mccoy/code/} and the codes of the common M-estimator, $R_1$-PCA and GMS will appear in a supplemental webpage.
We also record the output of standard PCA, where we recover the subspace by the span of the top $d$ eigenvectors.
We ran the experiments on a computer with Intel Core 2 CPU at 2.66GHz and 2 GB
memory.

We remark that since the basic GMS algorithm already performed very well on these artificial instances, we did not test its extensions and modifications described in \S\ref{sec:prac_solutions} (e.g., GMS2 and EGMS).

For all of our experiments with synthetic data,
we could correctly estimate $d$ by the largest logarithmic eigengap of the output of Algorithm~\ref{alg:practical}.
Nevertheless, we used the knowledge of $d$ for all algorithms for the sake of fair comparison.

For LLD, OP and PCP we estimated $\rmL^*$ by the span of the top $d$ eigenvectors of the low-rank matrix.
Similarly, for the common M-estimator we used the span of the top $d$ eigenvectors of the estimated covariance $\bA$.
For the HR-PCA algorithm we also used the true percentage of outliers ($50\%$ in our experiments).
For LLD, OP and PCP we set the mixture parameter $\lambda$ as $0.8\sqrt{D/N}, 0.8\sqrt{D/N}, 1/\sqrt{\max(D,N)}$ respectively (following the suggestions
of \citet{robust_mccoy} for LLD/OP and \citet{candes_wright_robust_pca09} for PCP).
These choices of parameters are also used in experiments with real data sets in \S\ref{sec:yale_face} and \S\ref{sec:background_subtraction}.

For the common M-estimator, we used $u(x)=2\max(\ln(x)/x,10^{30})$ and the algorithm discussed by  \citet{Kent1991}. Considering the conditions in \S\ref{sec:m_estimator}, we also tried other functions:  $u(x)=\max(x^{-0.5},10^{30})$ had a significantly larger recovery error and $u(x)=\max(x^{-0.9},10^{30})$ resulted in a similar recovery error as $\max(\ln(x)/x,10^{30})$ but a double running time.

We used the syntectic data with different values of $(N_1,N_0,D,d)$. In some instances we also add noise from the Gaussian distribution $N(0,\eta^2\bI)$ with $\eta=0.1$ or $0.01$.  We repeated each experiment 20 times (due to the random generation of data).
We record in Table~\ref{tab:synthetic} the mean running time, the mean recovery error and their standard deviations.

We remark that PCP is designed for uniformly corrupted coordinates of data, instead of corrupted data points (i.e., outliers), therefore,
the comparison with PCP is somewhat unfair for this kind of data. On the other hand, the applications in~\S\ref{sec:yale_face} and~\S\ref{sec:background_subtraction} are tailored for the PCP model (though the other algorithms still apply successfully to them).

From Table~\ref{tab:synthetic} we can see that GMS is the fastest robust algorithm. Indeed, its running time is comparable to that of PCA. We note that this is due to its linear convergence rate (usually it converges in less than $40$ iterations). The common M-estimator is the closest algorithm in terms of running time to GMS, since it also has the linear convergence rate. In contrast, PCP, OP and LLD need a longer running time since their convergence rates are much slower.
Overall, GMS performs best in terms of exact recovery. The PCP, OP and LLD algorithms cannot approach exact recovery even by tuning the parameter $\lambda$.
For example, in the case where $(N_1,N_0,D,d) = (125,125,10,5)$ with $\eta=0$, we checked a geometric sequence of $101$ $\lambda$ values from $0.01$ to $1$, and the smallest recovery errors for LLD, OP and PCP are $0.17$, $0.16$ and $0.22$ respectively.
The common M-estimator performed very well for many cases (sometimes slightly better than GMS), but its performance deteriorates as the density of outliers increases (e.g.,  poor performance for the case where $(N_1,N_0,D,d) = (125,125,10,5)$). Indeed, Theorem~\ref{thm:m_estimator} indicates problems with the exact recovery of the common M-estimator.

At last, we note that the empirical recovery error of the GMS algorithm for noisy data sets is in the order of $\sqrt{\eta}$, where $\eta$ is the size of noise.

\begin{table*}[htbp]
\centering
\footnotesize
\resizebox{!}{3.6in}{
\begin{tabular}{|@{}c@{}|@{}c@{}||@{}c@{}|@{}c@{}|@{}c@{}|@{}c@{}|@{}c@{}|@{}c@{}|@{}c@{}|@{}c@{}|@{}c@{}|@{}c@{}||}
\hline
{$(N_1,N_0,D,d)$ }& & \,\,GMS\,\, & MDR & LLD &  OP & PCP & HR-PCA&\,\,MKF\,\,&\,\,PCA\,\,&\,M-est.\,&$R_1$-PCA  \\ \hline

{}& $e$ &6e-11&0.275&1.277&0.880&0.605&0.210&0.054&0.193&0.102&0.121\\
{$(125,125,10,5)$}& $std.e$&4e-11&0.052&0.344&0.561&0.106&0.049&0.030&0.050&0.037&0.048\\
{$\eta=0$}& $t(s)$ & 0.008&0.371&0.052&0.300&0.056&0.378&0.514&0.001&0.035&0.020\\
&$std.t $ &0.002&0.120&0.005&0.054&0.002&0.001&0.262&8e-06&4e-04&0.014\\
\hline

{}& $e$ &0.011&0.292&1.260&1.061&0.567&0.233&0.069&0.213&0.115&0.139\\
{$(125,125,10,5)$}& $std.e$&0.004&0.063&0.316&0.491&0.127&0.075&0.036&0.073&0.054&0.073\\
{$\eta=0.01$}& $t(s)$ & 0.008&0.340&0.053&0.287&0.056&0.380&0.722&0.001&0.035&0.052\\
&$std.t $ &0.001&0.075&0.007&0.033&0.001&0.009&0.364&1e-05&4e-04&0.069\\
\hline

{}& $e$ &0.076&0.264&1.352&0.719&0.549&0.200&0.099&0.185&0.122&0.128\\
{$(125,125,10,5)$}& $std.e$&0.023&0.035&0.161&0.522&0.102&0.051&0.033&0.048&0.041&0.050\\
{$\eta=0.1$}& $t(s)$ & 0.007&0.332&0.055&0.301&0.056&0.378&0.614&0.001&0.035&0.032\\
&$std.t $ &0.001&0.083&0.004&0.044&0.001&0.001&0.349&7e-06&4e-04&0.037\\
\hline
\hline
{}& $e$ &2e-11&0.652&0.258&0.256&0.261&0.350&0.175&0.350&1e-12&0.307\\
{$(125,125,50,5)$}& $std.e$&3e-11&0.042&0.030&0.032&0.033&0.023&0.028&0.025&5e-12&0.029\\
{$\eta=0$}& $t(s)$ & 0.015&0.420&0.780&1.180&3.164&0.503&0.719&0.006&0.204&0.020\\
&$std.t $ &0.001&0.128&0.978&0.047&0.008&0.055&0.356&9e-05&0.001&0.011\\
\hline

{}& $e$ &0.061&0.655&0.274&0.271&0.273&0.355&0.196&0.359&0.007&0.321\\
{$(125,125,50,5)$}& $std.e$&0.009&0.027&0.039&0.038&0.040&0.038&0.038&0.033&0.001&0.038\\
{$\eta=0.01$}& $t(s)$ & 0.023&0.401&4.155&1.506&0.499&0.653&0.656&0.006&0.191&0.028\\
&$std.t $ &0.002&0.079&0.065&0.197&0.006&0.044&0.377&8e-05&0.001&0.022\\
\hline

{}& $e$ &0.252&0.658&0.292&0.290&0.296&0.358&0.264&0.363&0.106&0.326\\
{$(125,125,50,5)$}& $std.e$&0.027&0.033&0.032&0.032&0.033&0.027&0.031&0.032&0.014&0.032\\
{$\eta=0.1$}& $t(s)$ & 0.021&0.363&0.923&1.726&0.501&0.638&0.641&0.006&0.191&0.025\\
&$std.t $ &0.001&0.063&0.033&0.470&0.009&0.051&0.240&1e-04&0.001&0.012\\
\hline
\hline
{}& $e$ &3e-12&0.880&0.214&0.214&0.215&0.332&0.161&0.330&2e-12&0.259\\
{$(250,250,100,10)$}& $std.e$&2e-12&0.018&0.019&0.019&0.019&0.014&0.024&0.012&9e-12&0.016\\
{$\eta=0$}& $t(s)$ & 0.062&1.902&3.143&7.740&2.882&1.780&1.509&0.039&0.819&1.344\\
&$std.t $ &0.006&0.354&4.300&0.038&0.014&0.041&1.041&3e-04&0.023&0.708\\
\hline

{}& $e$ &0.077&0.885&0.217&0.216&0.219&0.334&0.164&0.335&0.009&0.263\\
{$(250,250,100,10)$}& $std.e$&0.006&0.031&0.019&0.018&0.020&0.019&0.019&0.017&3e-04&0.018\\
{$\eta=0.01$}& $t(s)$ & 0.084&1.907&21.768&11.319&2.923&1.785&1.412&0.039&0.400&1.086\\
&$std.t $ &0.010&0.266&0.261&0.291&0.014&0.041&0.988&3e-04&0.002&0.738\\
\hline

{}& $e$ &0.225&0.888&0.238&0.237&0.262&0.342&0.231&0.345&0.136&0.276\\
{$(250,250,100,10)$}& $std.e$&0.016&0.020&0.019&0.019&0.019&0.019&0.018&0.015&0.010&0.019\\
{$\eta=0.1$}& $t(s)$ & 0.076&1.917&4.430&16.649&2.876&1.781&1.555&0.039&0.413&1.135\\
&$std.t $ &0.007&0.299&0.069&1.184&0.014&0.025&0.756&4e-04&0.011&0.817\\
\hline
\hline
{}& $e$ &4e-11&1.246&0.162&0.164&0.167&0.381&0.136&0.381&3e-13&0.239\\
{$(500,500,200,20)$}& $std.e$&1e-10&0.018&0.011&0.011&0.011&0.010&0.009&0.008&6e-14&0.009\\
{$\eta=0$}& $t(s)$ & 0.464&23.332&16.778&89.090&16.604&8.602&5.557&0.347&6.517&15.300\\
&$std.t $ &0.024&2.991&0.878&1.836&0.100&0.216&4.810&0.009&0.126&3.509\\
\hline

{}& $e$ &0.082&1.247&0.160&0.162&0.166&0.374&0.139&0.378&0.012&0.236\\
{$(500,500,200,20)$}& $std.e$&0.003&0.018&0.007&0.007&0.008&0.011&0.010&0.006&2e-04&0.007\\
{$\eta=0.01$}& $t(s)$ &0.592&23.214&128.51&122.61&16.823&8.541&6.134&0.354&2.361&15.165\\
&$std.t $ &0.060&3.679&1.155&6.500&0.036&0.219&4.318&0.019&0.064&3.485\\
\hline
{}& $e$ &0.203&1.262&0.204&0.204&0.250&0.391&0.275&0.398&0.166&0.270\\
{$(500,500,200,20)$}& $std.e$&0.007&0.012&0.007&0.007&0.007&0.012&0.272&0.009&0.005&0.008\\
{$\eta=0.1$}& $t(s)$ &0.563&24.112&24.312&202.22&16.473&8.552&8.745&0.348&2.192&15.150\\
&$std.t $ &0.061&2.362&0.226&8.362&0.050&0.155&3.408&0.010&0.064&3.420\\
\hline
\hline

\end{tabular}
}
\caption{{Mean running times, recovery errors and their standard deviations for synthetic data.}}\label{tab:synthetic}
\end{table*}

\subsection{Yale Face data}\label{sec:yale_face}
Following \citet{candes_wright_robust_pca09}, we apply our algorithm to face images.
It has been shown that face images from the same person lie in a low-dimensional linear subspace of dimension at most $9$ \citep{Basri03}. However, cast shadows, specular reflections and saturations could possibly distort this low-rank modeling. Therefore, one can use a good robust PCA algorithm to remove these errors if one has many images from the same face.

We used the images of the first two persons in the extended Yale face database B \citep{KCLee05}, where each of them has $65$ images of size $192\times 168$ under different illumination conditions. Therefore we represent each person by $65$ vectors of length $32256$.
Following \citet{Basri03} we applied GMS, GMS2 and EGMS with $d=9$ and we also reduced the $65\times 32256$ matrix to $65 \times 65$ (in fact, we only reduced the representation of the column space) by rejecting left vectors with zero singular values.
We also applied the GMS algorithm after initial dimensionality reduction (via PCA) to $D=20$. The running times of EGMS and GMS (without dimensionality reduction) are $13$ and $0.16$ seconds respectively on average for each face (we used the same computer as in \S\ref{sec:detailed_synthetic}).
On the other hand, the running times of PCP and LLD are $193$ and $2.7$ seconds respectively. Moreover, OP ran out of memory.  The recovered images are shown in Figure~\ref{fig:yale_face}, where the shadow of the nose and the parallel lines were removed best by EGMS. The GMS algorithm without dimension reduction did not perform well, due to the difficulty explained in \S\ref{sec:prac_solutions} and demonstrated in \S\ref{sec:check_practical}.
The GMS2 algorithm turns out to work well, except for the second image of face 2. However, other algorithms such as PCP and GMS with dimension reduction ($D = 20$) performed even worse on this image and LLD did not remove any shadow at all; the only good algorithm for this image is EGMS.

\begin{figure}[htbp]
\begin{center}
\includegraphics[width=.7\textwidth,height=.35\textwidth]{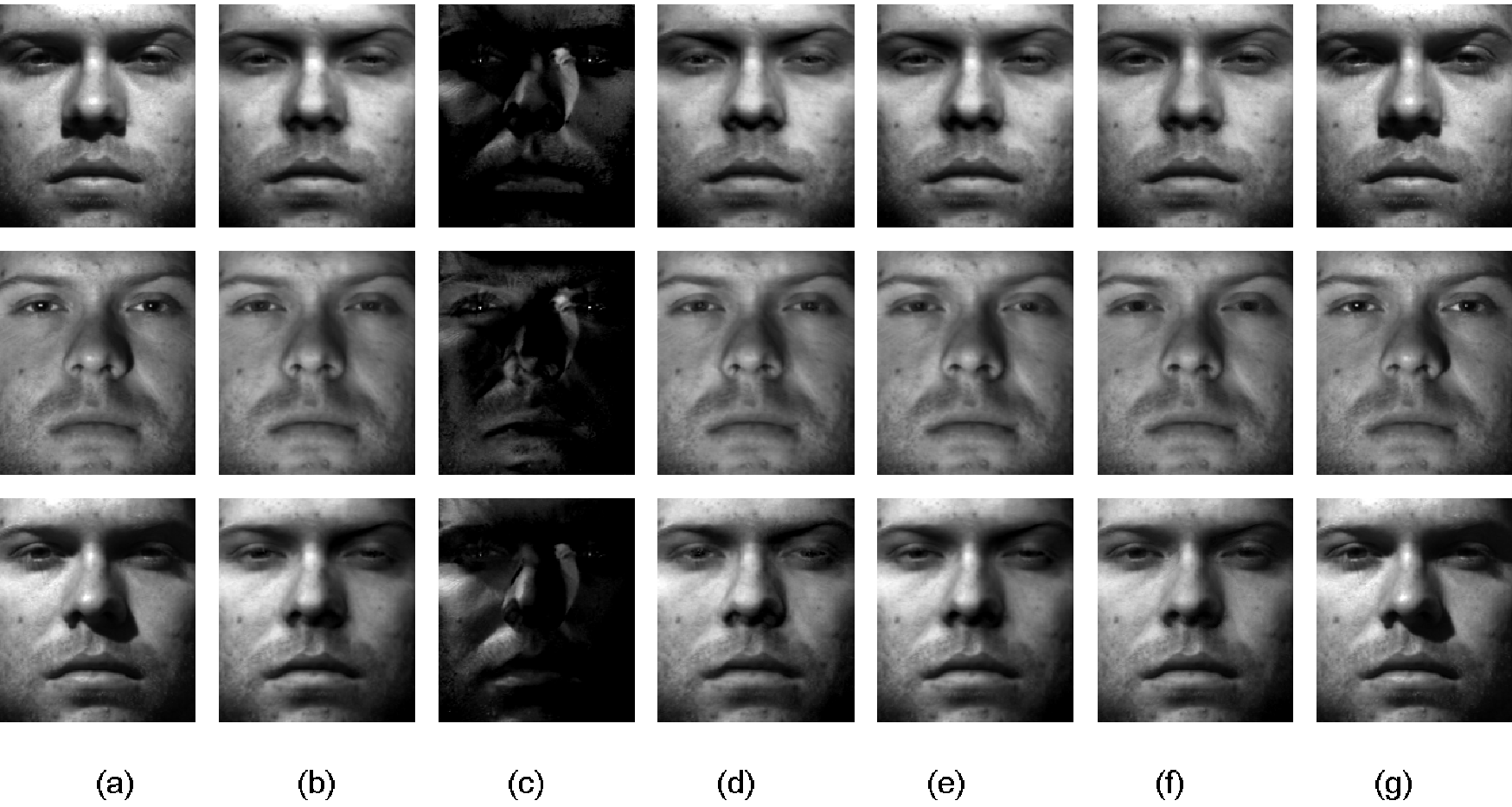}
\\
\includegraphics[width=.7\textwidth,height=.35\textwidth]{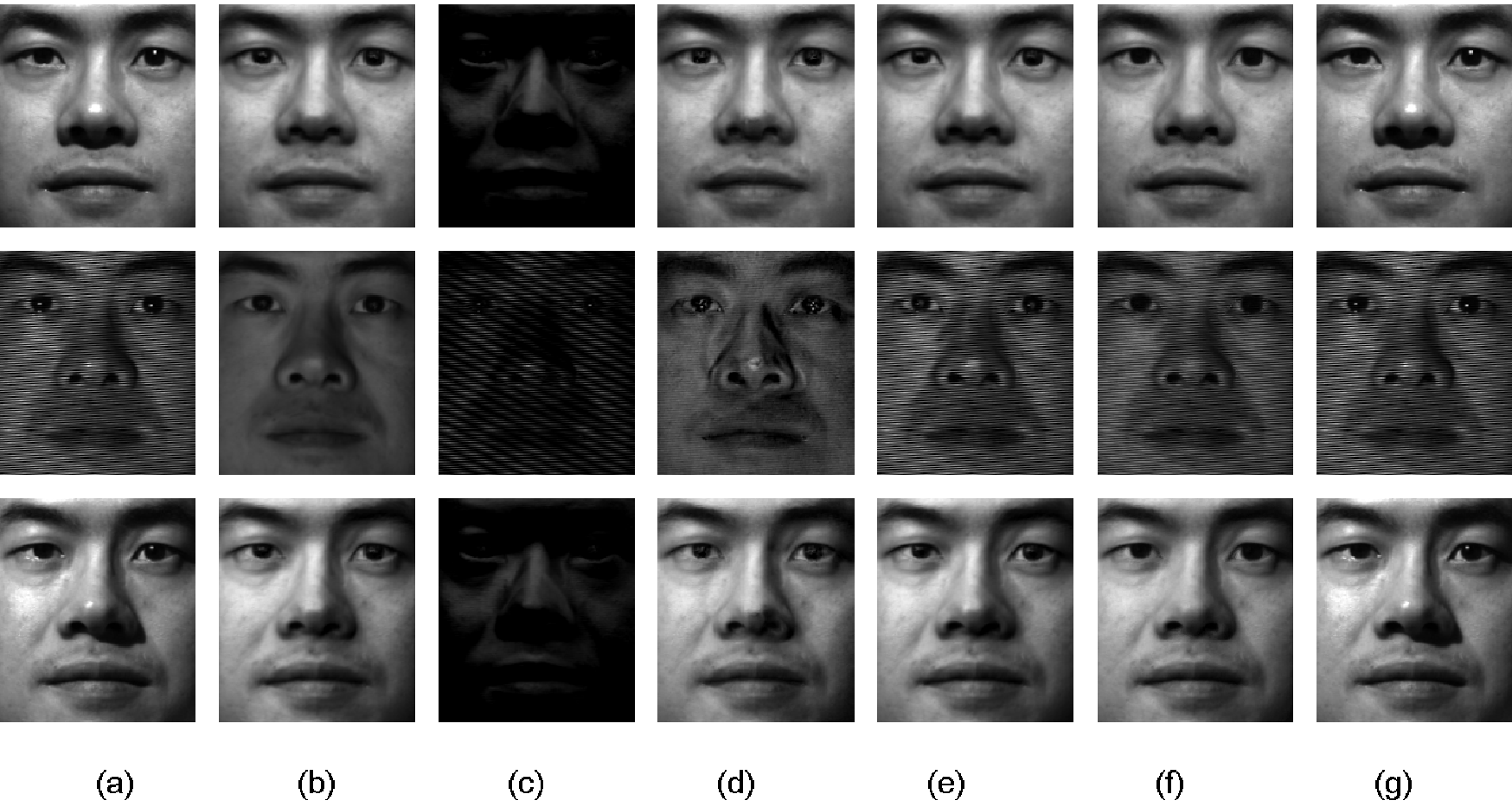}
\caption{Recovering faces: (a) given images, (b)-(f) the recovered images by EGMS, GMS without dimension reduction, GMS2, GMS with dimension reduced to $20$, PCP and LLD respectively\label{fig:yale_face}}
\end{center}
\end{figure}

\subsection{Video Surveillance}\label{sec:background_subtraction}

For background subtraction in surveillance videos \citep{Li_backgroundsubtraction}, we consider the following two videos used by \citet{candes_wright_robust_pca09}: ``Lobby in an office building with switching on / off lights'' and
``Shopping center'' from \url{http://perception.i2r.a-star.edu.sg/bk_model/bk_index.html}. In the first video, the resolution is $160 \times 128$ and we used $1546$ frames from `SwitchLight1000.bmp' to `SwitchLight2545.bmp'. In the second video, the resolution is $320 \times 256$ and we use $1000$ frames from `ShoppingMall1001.bmp' to `ShoppingMall2000.bmp'. Therefore, the data matrices are of size $1546\times 20480$ and $1001\times 81920$. We used a computer with Intel Core 2 Quad Q6600 2.4GHz and 8 GB memory due to the large size of these data.

We applied GMS, GMS2 and EGMS with $d=3$ and with initial dimensionality reduction to $200$ to reduce running time. 
For this data we are unaware of a standard choice of $d$; though we noticed empirically that the outputs of our algorithms as well as other algorithms
are very stable to changes in $d$ within the range $2\leq d\leq 5$.
We obtain the foreground by the orthogonal projection to the recovered $3$-dimensional subspace.
Figure~\ref{fig:lobby} demonstrates foregrounds detected by EGMS, GMS, GMS2, PCP and LLD, where PCP and LLD used  $\lambda=1/\sqrt{\max(D,N)},0.8\sqrt{D/N}$. We remark that OP ran out of memory. Using truth labels provided in the data, we also form ROC curves for GMS, GMS2, EGMS and PCP  in Figure~\ref{fig:lobby_error} (LLD is not included since it performed poorly for any value of $\lambda$ we tried).
We note that PCP performs better than both GMS and EGMS in the `Shoppingmall' video, whereas the latter algorithms perform better than PCP in the `SwitchLight' video. Furthermore, GMS is significantly faster than EGMS and PCP. Indeed, the running times (on average) of GMS, EGMS and PCP are 91.2, 1018.8 and 1209.4 seconds respectively.
\begin{figure}[htbp]
\begin{centering}
\includegraphics[width=.85\textwidth,height=.4\textwidth]{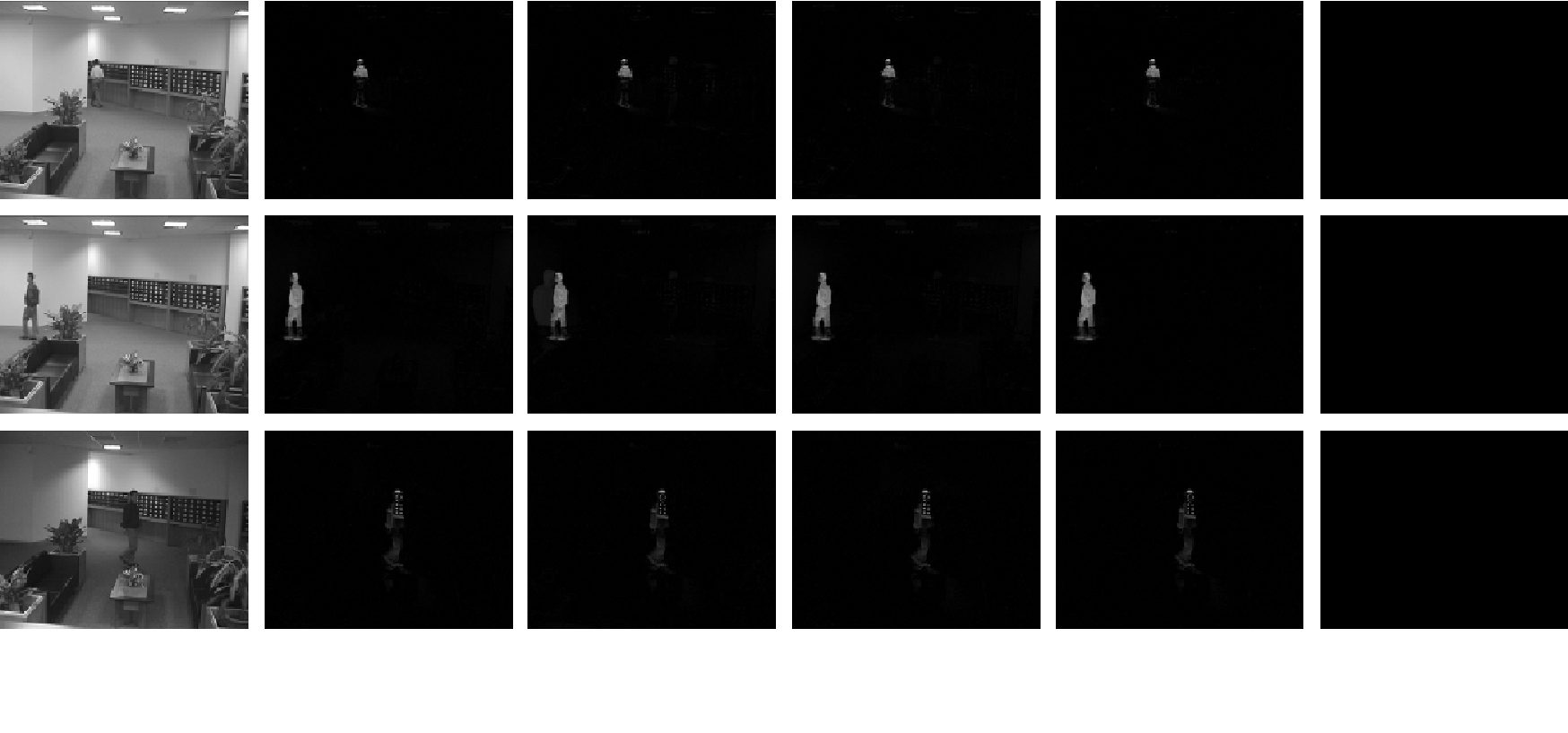}
\includegraphics[width=.85\textwidth,height=.4\textwidth]{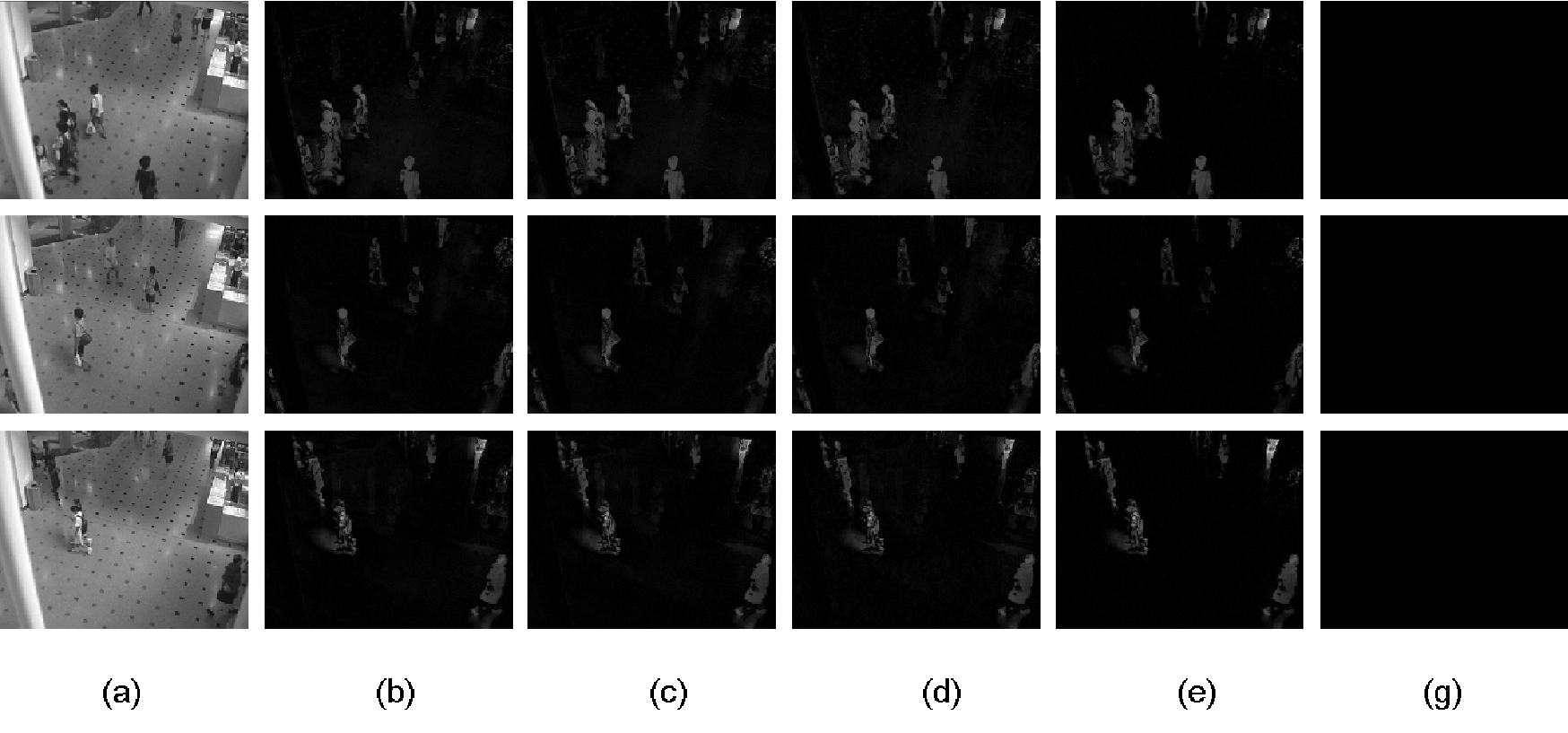}
\caption{Video surveillance: (a) the given frames (b)-(e) the detected foreground by EGMS, GMS, GMS2, PCP, LLD respectively\label{fig:lobby}}
\end{centering}
\end{figure}

\begin{figure}[htbp]
\begin{center}
\includegraphics[width=.4\textwidth,height=.3\textwidth]{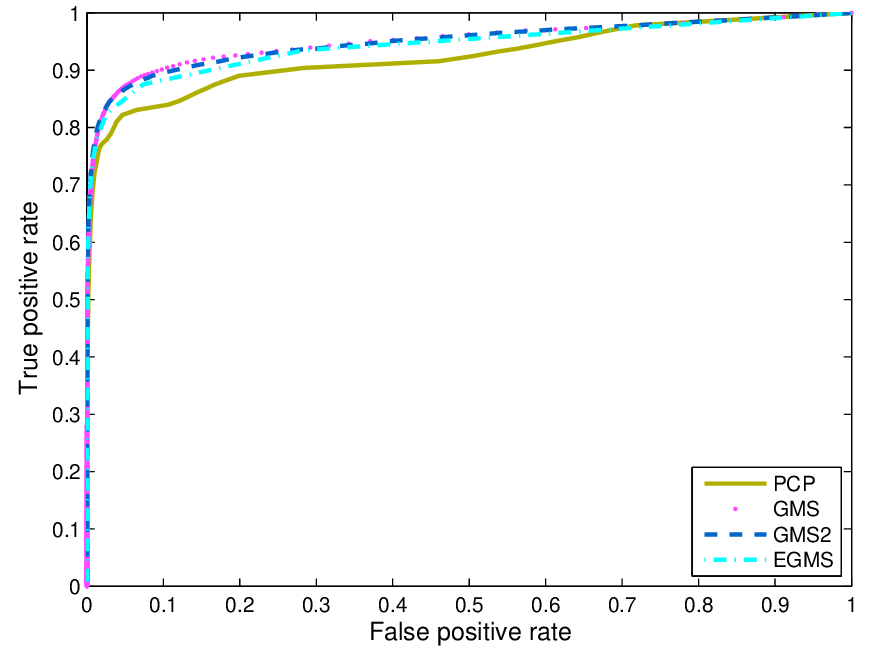}
\includegraphics[width=.4\textwidth,height=.3\textwidth]{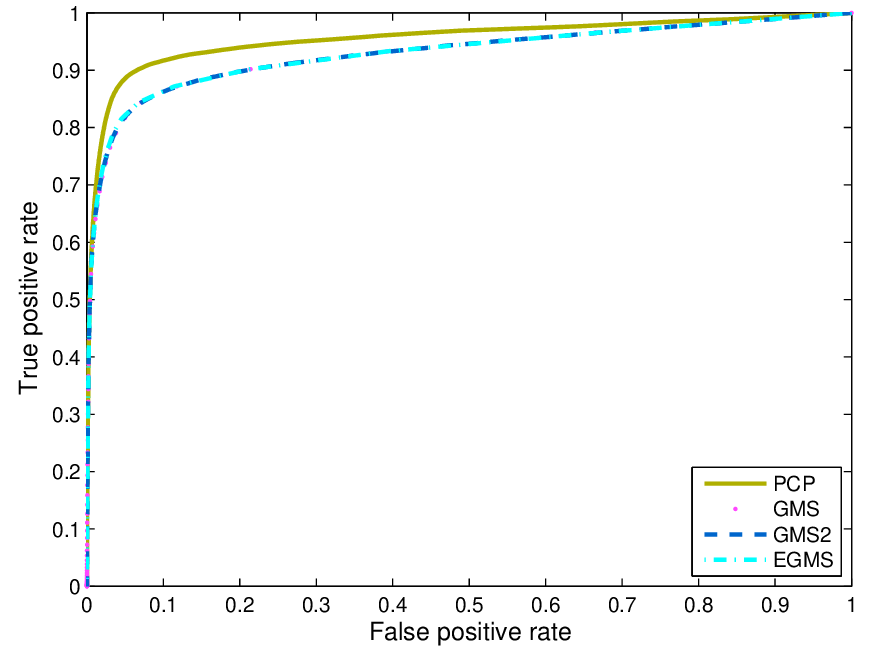}
\caption{ROC curves for EGMS, GMS, GMS2 and PCP in the 'SwitchLight' video (the left figure) and the 'Shoppingmall' video (the right figure)\label{fig:lobby_error}}
\end{center}
\end{figure}
\section{Proofs of Theorems}
\label{sec:proofs}

We present the technical proofs of the theoretical statements of this paper according to their order of appearance.


\subsection{Proof of Theorem~\ref{thm:recovery}}

We will prove that if conditions \eqref{eq:condition1} and \eqref{eq:condition2} hold, then the set of all minimizers satisfying~\eqref{eq:symmetric} coincides with the set of all minimizers satisfying~\eqref{eq:symmetric1}. This clearly implies that if conditions \eqref{eq:condition1} and \eqref{eq:condition2} hold, then any minimizer $\hat{\bQ}$ of~\eqref{eq:symmetric} satisfies  $\ker(\hat{\bQ}) \supseteq \rmL^*$ (indeed, this condition is equivalent with the condition
$\bQ \bP_{\rmL^*}=\b0$, which appears in the formulation of~\eqref{eq:symmetric1}). If condition~\eqref{eq:condition3} also holds, then $\ker(\hat{\bQ})= \rmL^*$ and the theorem is concluded.

We assume that conditions \eqref{eq:condition1} and \eqref{eq:condition2} hold and arbitrarily fix a minimizer $\hat{\bQ}_0$ of the oracle problem~\eqref{eq:symmetric1}. We claim that in order to establish the equivalence of the sets of solutions of~\eqref{eq:symmetric} and~\eqref{eq:symmetric1}, it is sufficient to prove that
\begin{equation}
\label{eq:symmetric2}
\text{$F(\hat{\bQ}_0+\bDelta)-F(\hat{\bQ}_0)> 0$ for any symmetric $\bDelta$ with $\tr(\bDelta)=0$ and $\bDelta\bP_{\rmL^*}\neq\b0$.}
\end{equation}
Indeed, we first note that~\eqref{eq:symmetric2} implies that $\hat{\bQ}_0$ is also a minimizer of~\eqref{eq:symmetric}.
This observation follows from combining~\eqref{eq:symmetric2} with the following equation:
$$
\text{$F(\hat{\bQ}_0+\bDelta)-F(\hat{\bQ}_0) \geq 0$ for any symmetric $\bDelta$ with $\tr(\bDelta)=0$ and $\bDelta\bP_{\rmL^*}=\b0$,}
$$
which is an immediate consequence of the definition of~\eqref{eq:symmetric1}.
To conclude the equivalence, we assume on the contrary that there exists $\hat{\bQ}_0$, which is a minimizer of~\eqref{eq:symmetric1} but not a minimizer of~\eqref{eq:symmetric}.
We denote by $\hat{\bQ}'_0$ a minimizer of~\eqref{eq:symmetric1}, which is also a minimizer of~\eqref{eq:symmetric} and let $\bDelta := \hat{\bQ}'_0-\hat{\bQ}_0$. Then by the definitions of $\hat{\bQ}_0$, $\hat{\bQ}'_0$ and $\bDelta$:
$\tr(\bDelta)=0$, $\bDelta\bP_{\rmL^*}\neq\b0$ and $F(\hat{\bQ}'_0)=F(\hat{\bQ}_0)$. This contradicts
\eqref{eq:symmetric2} and thus concludes the proof.

In order to conclude~\eqref{eq:symmetric2} (and thus the theorem) we first differentiate $\|\bQ\bx\|$ at $\bQ=\bQ_0$ when $\bx \in \ker(\bQ_0)^\perp$ as follows:
\begin{equation}\label{eq:derivative0}
\frac{\di}{\di \bQ}\|\bQ\bx\|\Big|_{\bQ=\bQ_0}=\frac{\di}{\di \bQ}\sqrt{\|\bQ\bx\|^2}\Big|_{\bQ=\bQ_0}=\frac{\di}{\di \bQ}\frac{\bQ\bx\bx^T\bQ^T}{2\|\bQ_0\bx\|}\Big|_{\bQ=\bQ_0}=\frac{\bQ_0\bx\bx^T+\bx\bx^T\bQ_0}{2\|\bQ_0\bx\|}.
\end{equation}
We note that for any $\bx \in \reals^D \setminus \{\b0\}$ satisfying
 $\hat{\bQ}_0\bx \neq \b0$ and $\bDelta \in \reals^{D \times D}$ symmetric:
\begin{equation}\label{eq:derivative1}
\| (\hat{\bQ}_0+\bDelta) \bx \| -  \|\hat{\bQ}_0 \bx\| \geq  \left\langle\bDelta,(\hat{\bQ}_0\bx\bx^T+\bx\bx^T\hat{\bQ}_0) / 2\| \hat{\bQ}_0 \bx \|\right\rangle_F=\left\langle\bDelta,\hat{\bQ}_0\bx\bx^T / \| \hat{\bQ}_0 \bx \|\right\rangle_F.
\end{equation}
Indeed, the first equality follows from \eqref{eq:derivative0} and the convexity of $\|\bQ \bx\|$ in $\bQ$ and the second equality follows from
the symmetry of $\bDelta$ and $\hat{\bQ}_0$ as well as the definition of the Frobenius dot product.

If on the other hand $\hat{\bQ}_0\bx = \b0$, then clearly
\begin{equation}\label{eq:derivative2p}
\|(\hat{\bQ}_0+\bDelta) \bx \| -  \|\hat{\bQ}_0 \bx\| = \|\bDelta\bx\|.
\end{equation}

For simplicity of our presentation, we use~\eqref{eq:derivative2p} only for  $\bx\in\sX_1$
(where obviously $\hat{\bQ}_0\bx = \b0$ since $\hat{\bQ}_0\bP_{\rmL^*}=0$).
On the other hand, we use~\eqref{eq:derivative1}
for all $\bx\in\sX_0$. One can easily check that if $\bx\in\sX_0$ and $\hat{\bQ}_0\bx = \b0$,
then replacing~\eqref{eq:derivative1}
with~\eqref{eq:derivative2p} does not change the analysis below.
Using these observations we note that
\begin{equation}
F(\hat{\bQ}_0+\bDelta)-F(\hat{\bQ}_0) \geq \sum_{\bx\in\sX_1} \|\bDelta\bx\|+\sum_{\bx\in\sX_0}\left\langle\bDelta,\hat{\bQ}_0\bx\bx^T/\|\hat{\bQ}_0\bx\|\right\rangle_F.
\label{eq:derivative}
\end{equation}

We assume first that $\bDelta\bP_{\rmL^*}=\b0$. In this case, $\hat{\bQ}_0+\bDelta\in\bbH$ and $(\hat{\bQ}_0+\bDelta)\bP_{\rmL^*}=\b0$.
Since $\hat{\bQ}_0$ is the minimizer of \eqref{eq:symmetric1}, 
we obtain the following identity (which is analogous to \eqref{eq:explain_alg_min}):
\begin{equation}\label{eq:bDelta}
\sum_{\bx\in\sX_0} \left\langle\bDelta,\frac{\hat{\bQ}_0\bx \bx^T}{\|\hat{\bQ}_0\bx\|}\right\rangle_F
\geq 0\, \ \ \forall \ \bDelta \in \reals^{D \times D} \text{ s.t.~} \tr(\bDelta)=0 \ , \bDelta\bP_{\rmL^*}=\b0.
\end{equation}

We will prove \eqref{eq:symmetric2} by showing that the RHS of~\eqref{eq:derivative} is positive
for any symmetric $\bDelta$ with $\tr(\bDelta)=0$ and $\bDelta\bP_{\rmL^*}\neq\b0$.
Using~\eqref{eq:derivative} and the facts that $\sX_1\subset\rmL^*$ and $\hat{\bQ}_0=\bP_{\rmL^{*\perp}}\hat{\bQ}_0$ (since $\bP_{\rmL^*} \hat{\bQ}_0 = \hat{\bQ}_0 \bP_{\rmL^*}=\b0$), we establish the following inequality:
\begin{align}
&F(\hat{\bQ}_0+\bDelta)-F(\hat{\bQ}_0) \geq \sum_{\bx\in\sX_1} \|\bDelta\bx\|+\sum_{\bx\in\sX_0}\left\langle\bDelta,\hat{\bQ}_0\bx\bx^T/\|\hat{\bQ}_0\bx\|\right\rangle_F
\nonumber\\=& \sum_{\bx\in\sX_1} \|\bDelta\bP_{\rmL^*}\bx\|+\sum_{\bx\in\sX_0}\left\langle(\bDelta\bP_{\rmL^*}+\bDelta\bP_{\rmL^{*\perp}}),\bP_{\rmL^{*\perp}}\hat{\bQ}_0\bx\bx^T/\|\hat{\bQ}_0\bx\|\right\rangle_F
\nonumber\\\geq &\sum_{\bx\in\sX_1} \left(\|\bP_{\rmL^*}\bDelta\bP_{\rmL^*}\bx\|+\|\bP_{\rmL^{*\perp}}\bDelta\bP_{\rmL^*}\bx\|\right)/\sqrt{2}
\nonumber\\+&\sum_{\bx\in\sX_0}\left\langle(\bP_{\rmL^{*\perp}}\bDelta\bP_{\rmL^*}+\bP_{\rmL^{*\perp}}\bDelta\bP_{\rmL^{*\perp}}),\hat{\bQ}_0\bx\bx^T/\|\hat{\bQ}_0\bx\|\right\rangle_F.
\label{eq:derivative2}\end{align}

For ease of notation we denote $\bDelta_0=\tr(\bP_{\rmL^*}\bDelta\bP_{\rmL^*})\bv_0\bv_0^T$, where $\bv_0$ is the minimizer of the RHS of \eqref{eq:condition1}. Combining the following two facts: $\tr(\bDelta_0)-\tr(\bP_{\rmL^*}\bDelta\bP_{\rmL^*})=0$ and $\tr(\bP_{\rmL^*}\bDelta\bP_{\rmL^*})+\tr(\bP_{\rmL^{*\perp}}\bDelta\bP_{\rmL^{*\perp}})=\tr(\bDelta)=0$, we obtain that
$$
\tr(\bDelta_0+\bP_{\rmL^{*\perp}}\bDelta\bP_{\rmL^{*\perp}})=0.
$$
Further application of~\eqref{eq:bDelta} implies that
\begin{equation}\label{eq:bDelta1}
\sum_{\bx\in\sX_0} \left\langle\bDelta_0+\bP_{\rmL^{*\perp}}\bDelta\bP_{\rmL^{*\perp}},\hat{\bQ}_0\bx\bx^T/\|\hat{\bQ}_0\bx\|\right\rangle_F\geq 0.
\end{equation}
We note that
\begin{align}
\nonumber&
\left\langle\bP_{\rmL^{*\perp}}\bDelta\bP_{\rmL^{*\perp}},\hat{\bQ}_0\bx\bx^T/\|\hat{\bQ}_0\bx\|\right\rangle_F=
\left\langle\bP_{\rmL^{*\perp}}\bDelta\bP_{\rmL^{*\perp}}\bP_{\rmL^{*\perp}},\hat{\bQ}_0\bx\bx^T/\|\hat{\bQ}_0\bx\|\right\rangle_F\\
=&\left\langle\bP_{\rmL^{*\perp}}\bDelta\bP_{\rmL^{*\perp}},\hat{\bQ}_0\bx\bx^T\bP_{\rmL^{*\perp}}/\|\hat{\bQ}_0\bx\|\right\rangle_F.
\label{eq:derivative3}
\end{align}
Combining~\eqref{eq:bDelta1} and~\eqref{eq:derivative3} we conclude that
\begin{align}
\nonumber&
- \sum_{\bx\in\sX_0} \left\langle\bP_{\rmL^{*\perp}}\bDelta\bP_{\rmL^{*\perp}},\hat{\bQ}_0\bx\bx^T/\|\hat{\bQ}_0\bx\|\right\rangle_F\leq
\sum_{\bx\in\sX_0} \left\langle\bDelta_0,\hat{\bQ}_0\bx\bx^T\bP_{\rmL^{*\perp}}/\|\hat{\bQ}_0\bx\|\right\rangle_F \nonumber\\=& \sum_{\bx\in\sX_0} \tr(\bP_{\rmL^*}\bDelta\bP_{\rmL^*}) (\bv_0^T\hat{\bQ}_0\bx/\|\hat{\bQ}_0\bx\|) (\bv_0^T\bP_{\rmL^{*\perp}}\bx)
\leq |\tr(\bP_{\rmL^*}\bDelta\bP_{\rmL^*})| \sum_{\bx\in\sX_0}|\bv_0^T\bx|.
\label{eq:derivative4}
\end{align}
We apply~\eqref{eq:derivative4} and then use~\eqref{eq:condition1} with $\bQ=\bP_{\rmL^*}\bDelta\bP_{\rmL^*}/\tr(\bP_{\rmL^*}\bDelta\bP_{\rmL^*})$ to obtain the inequality:
\begin{align}
&\sum_{\bx\in\sX_1} \|\bP_{\rmL^*}\bDelta\bP_{\rmL^*}\bx\|/\sqrt{2}
+\sum_{\bx\in\sX_0}\left\langle\bP_{\rmL^{*\perp}}\bDelta\bP_{\rmL^{*\perp}},\hat{\bQ}_0\bx\bx^T/\|\hat{\bQ}_0\bx\|\right\rangle_F
\nonumber\\\geq &
\sum_{\bx\in\sX_1} \|\bP_{\rmL^*}\bDelta\bP_{\rmL^*}\bx\|/\sqrt{2}-|\tr(\bP_{\rmL^*}\bDelta\bP_{\rmL^*})|\sum_{\bx\in\sX_0}|\bv_0^T\bx|> 0.\label{eq:result1}
\end{align}

We define $\bbH_1 = \{\bQ\in\bbH: \ \bQ\bP_{\rmL^{*\perp}}=\b0\}$ and claim that~\eqref{eq:condition2} leads to the following inequality:
\begin{equation}\label{eq:condition2_old}
\sum_{\bx\in\sX_1}\|\bQ({\bP}_{\rmL^*}\bx)\|> \sqrt{2}\,\sum_{\bx\in\sX_0}\|\bQ({\bP}_{\rmL^*}\bx)\|\,\,
\ \forall \bQ\in\bbH_1.
\end{equation}
Indeed, since the RHS of \eqref{eq:condition2_old} is a convex function of $\bQ$, its maximum is achieved at the set of all extreme points of $\bbH_1$, which is $\{\bQ\in\reals^{D\times D}: \bQ=\bv\bv^T, \text{where } \bv\in\rmL^*, \|\bv\|=1\}$. Therefore the maximum of the RHS of \eqref{eq:condition2_old} is the RHS of \eqref{eq:condition2}. Since the minimum of the LHS of \eqref{eq:condition2_old} is also the LHS of \eqref{eq:condition2}, \eqref{eq:condition2_old} is proved.

We also claim that \eqref{eq:condition2_old} can be extended from $\bbH_1$ to all $\bQ\in\reals^{D\times D}$ such that $\bQ\bP_{\rmL^{*\perp}}=\b0$.
Indeed, for any $\bQ\in\reals^{D\times D}$ satisfying $\bQ\bP_{\rmL^{*\perp}}=\b0$ and having the SVD decomposition $\bQ=\bU\bSigma\bV^T$, we can assign the following
matrix $\bQ' = \bQ'(\bQ) \in \bbH_1$: $\bQ' := \bV\bSigma\bV^T/\tr(\bV\bSigma\bV^T)$.
It is not hard to note that the inequality in \eqref{eq:condition2_old} holds for $\bQ$ if and only if it holds for $\bQ'$.

By first applying Cauchy's inequality, then using the defining property of projections and at last applying~\eqref{eq:condition2_old}
with $\bQ={\bP}_{\rmL^{*\perp}}\bDelta{\bP}_{\rmL^*}$ (while using its latter extension beyond  $\bbH_1$), we obtain the inequality:
\begin{align}
&\sum_{\bx\in\sX_1} \|\bP_{\rmL^{*\perp}}\bDelta\bP_{\rmL^*}\bx\|/\sqrt{2}
+\sum_{\bx\in\sX_0}\left\langle\bP_{\rmL^{*\perp}}\bDelta \bP_{\rmL{^*}},
\hat{\bQ}_0\bx\bx^T/\|\hat{\bQ}_0\bx\|\right\rangle_F
\nonumber\\\geq &
\sum_{\bx\in\sX_1} \|{\bP}_{\rmL^{*\perp}}\bDelta\bP_{\rmL^*}\bx\|/\sqrt{2}
-\sum_{\bx\in\sX_0}\|{\bP}_{\rmL^{*\perp}}\bDelta\bP_{\rmL^*}\bx\|
\nonumber\\=&
\sum_{\bx\in\sX_1} \|{\bP}_{\rmL^{*\perp}}\bDelta{\bP}_{\rmL^*}({\bP}_{\rmL^*}\bx)\|/\sqrt{2}
-\sum_{\bx\in\sX_0}\|{\bP}_{\rmL^{*\perp}}\bDelta{\bP}_{\rmL^*}({\bP}_{\rmL^*}\bx)\|
> 0 \label{eq:result2}.
\end{align}
Finally, we combine \eqref{eq:result1} and \eqref{eq:result2}
and conclude that the RHS of~\eqref{eq:derivative2} is nonnegative and consequently~\eqref{eq:symmetric2} holds.

\subsection{Proof of Theorem~\ref{thm:convex}}

Assume on the contrary that $F$ is not strictly convex, in particular, there exists $0<t_0<1$ such that
$$
t_0 \cdot F(\bQ_1)+(1-t_0) \cdot F(\bQ_2)= F(t_0 \cdot \bQ_1+ (1-t_0) \cdot \bQ_2) \ \text{ for } \ \bQ_1\neq \bQ_2,
$$
or equivalently,
\begin{equation}
t_0 \cdot \sum_{i=1}^{N}\|\bQ_1\bx_i\|+ (1-t_0) \cdot \sum_{i=1}^{N}\|\bQ_2\bx_i\|
=\sum_{i=1}^{N}\|(t_0 \cdot \bQ_1+ (1-t_0) \cdot \bQ_2)\bx_i\|.\label{eq:convex_contradiction1}
\end{equation}
Combining \eqref{eq:convex_contradiction1} with the fact that $\|\bQ_1\bx_i\|+\|\bQ_2\bx_i\|\geq \|(\bQ_1+\bQ_2)\bx_i\|$, we obtain that
$t_0 \cdot \|\bQ_1\bx_i\|+ (1-t_0) \cdot \|\bQ_2\bx_i\|=
\|(t_0 \cdot \bQ_1+ (1-t_0) \cdot \bQ_2)\bx_i\|$ for any $1\leq i \leq N$ and therefore there exists a sequence $\{c_i\}_{i=1}^{N}\subset\reals$ such that
\begin{equation}
\text{$\bQ_2\bx_i=\b0$ \ or \ $\bQ_1\bx_i=c_i\,\bQ_2\bx_i$ \ for all $1\leq i\leq N$.}\label{eq:convex_proof}
\end{equation}

We conclude Theorem~\ref{thm:convex} by considering two different cases.
We first assume that $\ker(\bQ_1)=\ker(\bQ_2)$. We denote
\[\text{$\tilde{\bQ}_1=\bP_{\ker(\bQ_1)^\perp}\bQ_1\bP_{\ker(\bQ_1)^\perp}$ and $\tilde{\bQ}_2=\bP_{\ker(\bQ_1)^\perp}\bQ_2\bP_{\ker(\bQ_1)^\perp}$.}\]
It follows from~\eqref{eq:convex_proof} that
\[\tilde{\bQ}_1(\bP_{\ker(\bQ_1)^\perp}\bx_i)=c_i\,\tilde{\bQ}_2(\bP_{\ker(\bQ_1)^\perp}\bx_i)\]
and consequently that $\bP_{\ker(\bQ_1)^\perp}\bx_i$ lies in one of the eigenspaces of $\tilde{\bQ}_1^{-1}\tilde{\bQ}_2$.
We claim that $\tilde{\bQ}_1^{-1}\tilde{\bQ}_2$ is a scalar matrix. Indeed, if on the contrary $\tilde{\bQ}_1^{-1}\tilde{\bQ}_2$ is not a scalar matrix, then  $\{\bP_{\ker(\bQ_1)^\perp}\bx_i\}_{i=1}^{N}$ lies in a union of several eigenspaces with dimensions summing to $\dim(\bP_{\ker(\bQ_1)^\perp})$ and this contradicts~\eqref{eq:convex_condition}.
In view of this property of $\tilde{\bQ}_1^{-1}\tilde{\bQ}_2$ and the fact that $\tr(\tilde{\bQ}_1)=\tr(\hat{\bQ}_1)=1$ we have that $\tilde{\bQ}_1=\tilde{\bQ}_2$ and $\bQ_1=\bQ_2$, which contradicts our current assumption.

Next, assume that $\ker(\bQ_1)\neq\ker(\bQ_2)$. We will first show that if $1 \leq i \leq N$ is arbitrarily fixed, then
$\bx_i \in \ker(\bQ_2) \cup \ker(\bP_{\ker(\bQ_1)}\bQ_2)$. Indeed, if $\bx_i\notin\ker(\bQ_2)$, then using \eqref{eq:convex_proof} we have $\bQ_1\bx_i=c_i\,\bQ_2\bx_i$. This implies that $c_i\bP_{\ker(\bQ_1)}\bQ_2\bx_i=\bP_{\ker(\bQ_1)}\bQ_1\bx_i=\b0$ and thus $\bx_i\in\ker(\bP_{\ker(\bQ_1)}\bQ_2)$.
That is, $\sX$ is contained in the union of the 2 subspaces $\ker(\bQ_2)$ and $\ker(\bP_{\ker(\bQ_1)}\bQ_2)$. The dimensions of both spaces are less than $D$. This obvious for $\ker(\bQ_2)$, since $\tr(\bQ_2)=1$. For $\ker(\bP_{\ker(\bQ_1)}\bQ_2)$ it follows from the fact that $\ker(\bQ_1)\neq\ker(\bQ_2)$ and thus $\bP_{\ker(\bQ_1)}\bQ_2\neq\b0$. We thus obtained a contradiction to~\eqref{eq:convex_condition}.

\subsection{Verification of \eqref{eq:condition1b} and \eqref{eq:condition2b} as Sufficient Conditions and \eqref{eq:nec_cond1} and \eqref{eq:nec_cond2} as Necessary Ones}
\label{sec:verifiability_proofs}

We revisit the proof of Theorem~\ref{thm:recovery} and first show that \eqref{eq:condition1b} and \eqref{eq:condition2b} can replace
\eqref{eq:condition1} and \eqref{eq:condition2} in the first part of Theorem~\ref{thm:recovery}.
We only deal with the first part of Theorem~\ref{thm:recovery}, which assumes that \eqref{eq:condition3} holds,
since \eqref{eq:condition3} guarantees that \eqref{eq:condition1b} and \eqref{eq:condition2b} are well-defined (see the discussion in \S\ref{sec:verifiability}).

To show that \eqref{eq:condition2b} can replace \eqref{eq:condition2}, we prove the inequality in \eqref{eq:result2} using \eqref{eq:condition2b} as follows. Assuming that the SVD of $\bP_{\rmL^{*\perp}}\bDelta\bP_{\rmL^*}$ is $\bU\bSigma\bV^T$, then $\bQ':=\bV\bSigma\bV^T/\tr(\bSigma)$ satisfies $\bQ'\in\bbH, \bQ'\bP_{\rmL^{*\perp}}=\b0$ and $\|\bQ'\bx\|=\|\bP_{\rmL^{*\perp}}\bDelta\bP_{\rmL^*}\bx\|/\tr(\bSigma)=\|\bP_{\rmL^{*\perp}}\bDelta\bP_{\rmL^*}\bx\|/\|\bP_{\rmL^{*\perp}}\bDelta\bP_{\rmL^*}\|_*.$
Using this fact, we obtain that
\begin{align}
\label{eq:aux_alt_cond1}
&\sum_{\bx\in\sX_1} \|\bP_{\rmL^{*\perp}}\bDelta\bP_{\rmL^*}\bx\|
\geq \|\bP_{\rmL^{*\perp}}\bDelta\bP_{\rmL^*}\|_* \min_{\bQ'\in\bbH,\bQ'\bP_{\rmL^{*\perp}}=\b0}\sum_{\bx\in\sX_1}\|\bQ'\bx\|.
\end{align}
We also note that
\begin{align}
&\sum_{\bx\in\sX_0}\!\!\left\langle\bP_{\rmL^{*\perp}}\bDelta \bP_{\rmL{^*}},
\hat{\bQ}_0\bx\bx^T/\|\hat{\bQ}_0\bx\|\right\rangle_F
=\sum_{\bx\in\sX_0}\!\!\left\langle\bP_{\rmL^{*\perp}}\bDelta \bP_{\rmL{^*}},
\hat{\bQ}_0\bx\bx^T\bP_{\rmL{^*}}/\|\hat{\bQ}_0\bx\|\right\rangle_F
\nonumber\\\label{eq:aux_alt_cond2}\geq& - \|\bP_{\rmL^{*\perp}}\bDelta\bP_{\rmL^*}\|_* \left\|\sum_{\bx\in\sX_0}\!\!\hat{\bQ}_0\bx\bx^T\bP_{\rmL^*}/\|\hat{\bQ}_0\bx\|\right\|.
\end{align}
Therefore  \eqref{eq:result2} follows from \eqref{eq:condition2b}, \eqref{eq:aux_alt_cond1} and \eqref{eq:aux_alt_cond2}.
Similarly, one can show that \eqref{eq:condition1b} may replace \eqref{eq:condition1}.

One can also verify that \eqref{eq:nec_cond1} and \eqref{eq:nec_cond2} are necessary conditions for exact recovery
by revisiting the proof of Theorem~\ref{thm:recovery} and reversing inequalities.

\subsection{Proof of Lemma~\ref{lemma:prob_cond}}
We first note by symmetry that the minimizer of the LHS of~\eqref{eq:prob_cond2} for the needle-haystack model is $\bQ = \bP_{\rmL^*}/d$.
We can thus rewrite~\eqref{eq:prob_cond2} in this case as $\alpha_1 \Expect r_1/d > 2\sqrt{2} \alpha_0 \Expect r_0 /(D-d)$, where the ``radii'' $r_1$ and $r_0$ are the norms of the normal distributions with covariances $\sigma_1^2 d^{-1} \bP_{\rmL^*}$ and $\sigma_0^2 D^{-1} \bP_{\rmL^{*\perp}}$  respectively. Let $\tilde{r}_1$ and $\tilde{r}_2$ be the $\chi$-distributed random variables with $d$ and $D-d$ degrees of freedoms, then~\eqref{eq:prob_cond2} obtains the form
$$ \frac{\alpha_1 \sigma_1}{d\sqrt{d}} \Expect \tilde{r}_1 > \frac{2\sqrt{2} \alpha_0 \sigma_0}{(D-d)\sqrt{D}} \Expect \tilde{r}_0. $$

Applying (B.7) of \citet{LMTZ2012}, $\Expect \tilde{r}_1\geq \sqrt{d/2}$ and $\Expect \tilde{r}_0\leq \sqrt{D-d}$. Therefore ~\eqref{eq:prob_cond2} follows from \eqref{eq:prob_cond2a}.

\subsection{Proof of Theorem~\ref{thm:prob}}
\label{sec:proof_thm_prob}
For simplicity of the proof we first assume that the supports of $\mu_0$ and $\mu_1$ are contained in a ball centered at the origin of radius $M$.

We start with the proof of \eqref{eq:condition3} ``in expectation'' and then extend it to hold with high probability.
We use the notation $F_I(\bQ)$ and $\hat{\bQ}_I$ defined in \eqref{eq:define_F_I}
and \eqref{eq:define_bQ_I} respectively.
The spherical symmetry of $\mu_{0,\rmL^{*\perp}}$ implies that
\begin{equation}
\label{eq:bqi_sym}
\hat{\bQ}_I=\frac{1}{D-d}\bP_{\rmL^{*\perp}}\bP_{\rmL^{*\perp}}^T
\end{equation}
is the unique
minimizer of \eqref{eq:define_bQ_I}. To see this formally, we first note that $\mu_{0,\rmL^{*\perp}}$ satisfies the two-subspaces
criterion of \citet{Coudron_Lerman2012} for any $0<\gamma\leq 1$ (this criterion generalizes~\eqref{eq:convex_condition} of this paper
to continuous measures)
and thus by Theorem 2.1 of \citet{Coudron_Lerman2012} (whose proof follows directly the one of Theorem 2 here)
the solution of this minimization must be unique.
On the other hand,  any application of an arbitrary rotation of $\rmL^*$ (within $\reals^D$) to
the minimizer expressed in the RHS of~\eqref{eq:define_bQ_I} should also be a minimizer of the RHS of~\eqref{eq:define_bQ_I}. We note
that $\frac{1}{D-d}\bP_{\rmL^{*\perp}}\bP_{\rmL^{*\perp}}^T$ is the only element in the domain of this minimization that is preserved under any
rotation of $\rmL^*$. Therefore, due to uniqueness, this can be the only solution of this minimization problem.

Let
\begin{equation}
\label{eq:bbH2}
\bbH_2=\{\bQ\in\bbH  :  \ \bQ\bP_{\rmL^*}=\b0, \ \bQ \succeq \b0  \text{ and }
\text{cond}(\bP_{\rmL^{*\perp}} \bQ \bP_{\rmL^{*\perp}}) \geq 2 \},
\end{equation}
where $\bQ \succeq \b0$ denotes the positive semidefiniteness of $\bQ$
and $\text{cond}(\bP_{\rmL^{*\perp}} \bQ \bP_{\rmL^{*\perp}})$ denotes the condition number of this matrix,
that is, the ratio between the largest and lowest eigenvalues of
$\bP_{\rmL^{*\perp}} \bQ \bP_{\rmL^{*\perp}}$, or equivalently, the ratio
between the top eigenvalue and the $(D-d)$th eigenvalue of $\bQ$.
Since $\hat{\bQ}_I$ is the unique minimizer of \eqref{eq:define_bQ_I} and $\hat{\bQ}_I \nin \bbH_2$, then
\begin{equation}
\label{eq:c1}
c_1:=\min_{\bQ\in\bbH_2}
(F_I(\bQ) - F_I(\hat{\bQ}_I))>0.
\end{equation}
We note that if $\bx$ is a random variable sampled from $\mu$ and $\bQ \in \bbH$ (so that $\|\bQ\| \leq \|\bQ\|_*=1$), then $\|\bQ\bx\|\leq M$. Applying this fact, \eqref{eq:c1} and Hoeffding's inequality,
we conclude that for any fixed $\bQ \in \bbH_2$
\begin{equation}
\label{eq:hoeff_1}
F(\bQ)-F(\hat{\bQ}_I)>c_1 N /2 \ \text{ w.p.~}1-\exp(-c_1^2N/2M^2).
\end{equation}

We also observe that
\begin{equation}\label{eq:eps_net}F(\bQ_1)-F(\bQ_2)\leq \|\bQ_1-\bQ_2\|\sum_{i=1}^N\|\bx\|\leq \|\bQ_1-\bQ_2\|N\,M\,.
\end{equation}
Combining~\eqref{eq:hoeff_1} and~\eqref{eq:eps_net}, we obtain that
for all $\bQ$ in a ball of radius $r_1:=c_1/2M$ centered around a fixed element in $\bbH_2$: $F(\bQ)-F(\hat{\bQ}_I)>0$
w.p.~$1-\exp(-c_1^2N/2M^2)$.

We thus cover the compact space ${\bbH}_2$ by an $r_1$-net. Denoting the corresponding covering number by $N({\bbH}_2,r_1)$ and using the above observation we note that w.p. \linebreak[4] $1-N({\bbH}_2,r_1)\exp(-c_1^2N/2M^2)$
\begin{equation}\label{eq:bQ_I}
F(\bQ)-F(\hat{\bQ}_I)> 0\,\,\,\,\text{ for all $\bQ \in \bbH_2$.}
\end{equation}

The definition of $\hat{\bQ}_0$ (that is, \eqref{eq:symmetric1}) implies that $F(\hat{\bQ}_0) \leq F(\hat{\bQ}_I)$. Combining this observation with~\eqref{eq:bQ_I}, we conclude that
w.h.p.~$\hat{\bQ}_0 \nin \bbH_2$. We also claim that $\hat{\bQ}_0 \succeq \b0$ (see, e.g., the proof of Lemma~\ref{lemma:nonnegative}, which appears later). Since $\hat{\bQ}_0 \nin \bbH_2$ and $\hat{\bQ}_0 \succeq \b0$, $\hat{\bQ}_0$ satisfies the following property w.h.p.:
\begin{equation}
\label{eq:help1}
\text{cond}(\bP_{\rmL^{*\perp}}^T\hat{\bQ}_0\bP_{\rmL^{*\perp}}^T)<2.
\end{equation}
Consequently, \eqref{eq:condition3} holds w.h.p.~(more precisely, w.p.~$1-N({\bbH}_2,c_1/2M)\exp(-c_1^2N/2M^2)$).

Next, we verify \eqref{eq:condition1b} w.h.p. as follows.
Since $\hat{\bQ}_0$ is symmetric and $\hat{\bQ}_0 \bP_{\rmL^*}=\b0$ (see \eqref{eq:symmetric1}), then
\begin{equation}
\label{eq:help2}
\hat{\bQ}_0 = \bP_{\rmL^{*\perp}} \hat{\bQ}_0 \bP_{\rmL^{*\perp}}.
\end{equation}
Applying \eqref{eq:help2}, basic inequalities of operators' norms and \eqref{eq:help1},
we bound the RHS of \eqref{eq:condition1b} from above as follows:
\begin{align}
\nonumber
&\sqrt{2}\,\left\|\sum_{\bx\in\sX_0}\hat{\bQ}_0\bx\bx^T\bP_{\rmL^{*\perp}}/\|\hat{\bQ}_0\bx\|\right\|
=
\sqrt{2}\,\left\|\bP_{\rmL^{*\perp}}\hat{\bQ}_0\bP_{\rmL^{*\perp}}\cdot\sum_{\bx\in\sX_0}\bP_{\rmL^{*\perp}}\bx\bx^T\bP_{\rmL^{*\perp}}/\|\hat{\bQ}_0\bx\|\right\|
\\
\label{eq:help3}
\leq& \sqrt{2} \cdot \left\|\bP_{\rmL^{*\perp}}\hat{\bQ}_0\bP_{\rmL^{*\perp}}\right\|\cdot\left\|\sum_{\bx\in\sX_0}\bP_{\rmL^{*\perp}}\bx\bx^T\bP_{\rmL^{*\perp}}/\|\hat{\bQ}_0\bx\|\right\|
\\
\nonumber
\leq &\sqrt{2}
\cdot \lambda_{\max}(\bP_{\rmL^{*\perp}}\hat{\bQ}_0\bP_{\rmL^{*\perp}})\cdot \|\sum_{\bx\in\sX_0}\bP_{\rmL^{*\perp}}\bx\bx^T\bP_{\rmL^{*\perp}}/
\|\lambda_{\min}(\bP_{\rmL^{*\perp}}\hat{\bQ}_0\bP_{\rmL^{*\perp}})\bP_{\rmL^{*\perp}}\bx\|\|
\\
\nonumber < &\sqrt{8}\Big\|\sum_{\bx\in\sX_0}\bP_{\rmL^{*\perp}}\bx\bx^T\bP_{\rmL^{*\perp}}/\|\bP_{\rmL^{*\perp}}\bx\|\Big\|=\max_{\bu\in S^{D-1}\cap \rmL^{*\perp}} \sqrt{8} \bu^T( \sum_{\bx\in\sX_0}\bP_{\rmL^{*\perp}}\bx\bx^T\bP_{\rmL^{*\perp}}/\|\bP_{\rmL^{*\perp}}\bx\|)\bu.
\end{align}
Therefore to prove \eqref{eq:condition1b}, we only need to prove that with high probability
\begin{equation}\label{eq:condition1b2}
\min_{\bQ\in\bbH,\bQ\bP_{\rmL^{*\perp}}=\b0}\sum_{\bx\in\sX_1}\|\bQ\bx\| >
\max_{\bu\in S^{D-1}\cap \rmL^{*\perp}} \sqrt{8} \bu^T( \sum_{\bx\in\sX_0}\bP_{\rmL^{*\perp}}\bx\bx^T\bP_{\rmL^{*\perp}}/\|\bP_{\rmL^{*\perp}}\bx\|)\bu.
\end{equation}

We will prove that the
LHS and RHS of \eqref{eq:condition1b2} concentrates w.h.p.~around the LHS and RHS of \eqref{eq:prob_cond2} respectively and consequently verify \eqref{eq:condition1b2} w.h.p.
Let $\eps_1$ be the difference between the RHS and LHS of \eqref{eq:condition1b2}.
Theorem 1 of \citet{Coudron_Lerman2012} implies that the LHS of \eqref{eq:condition1b2} is within distance $\eps_1/4$ to the RHS of \eqref{eq:prob_cond2}
with probability $1-C \exp(-N/C)$ (where $C$ is a constant depending on $\eps_1$, $\mu$ and its parameters).

The concentration of the RHS of \eqref{eq:prob_cond2} can be concluded as follows.
The spherical symmetry of $\mu_{0,\rmL^{*\perp}}$ implies that the expectation (w.r.t.~$\mu_0$) of
$\sum_{\bx\in\sX_0}\bP_{\rmL^{*\perp}}\bx\bx^T\bP_{\rmL^{*\perp}}/\|\bP_{\rmL^{*\perp}}\bx\|$
is a scalar matrix within $\rmL^{*\perp}$, that is, it equals
$\rho_{\mu} \, {\bP_{\rmL^{*\perp}}\bx\bx^T\bP_{\rmL^{*\perp}}/\|\bP_{\rmL^{*\perp}}\bx\|}$ for some $\rho_{\mu} \in \reals$.
We observe that
$$
\Expect_{\mu_0}{\tr(\bP_{\rmL^{*\perp}}\bx\bx^T\bP_{\rmL^{*\perp}}/\|\bP_{\rmL^{*\perp}}\bx\|)}
=\Expect_{\mu_{0}}{\|\bP_{\rmL^{*\perp}}\bx\|}
$$
and thus conclude that $\rho_{\mu}=\Expect_{\mu_{0}}{\|\bP_{\rmL^{*\perp}}\bx\|}/(D-d)$.
Therefore, for any $\bu\in S^{D-1}\cap \rmL^{*\perp}$
\begin{equation}
\label{eq:expected_u}
\Expect_{\mu_0}{\bu^T(\bP_{\rmL^{*\perp}}\bx\bx^T\bP_{\rmL^{*\perp}}/\|\bP_{\rmL^{*\perp}}\bx\|)\bu}
=\Expect_{\mu_{0}}{\|\bP_{\rmL^{*\perp}}\bx\|}/(D-d)=\int\|\bP_{\rmL^{*\perp}}\bx\|\di\mu_{0}(\bx)/(D-d).
\end{equation}

We thus conclude from \eqref{eq:expected_u} and Hoeffding's inequality that for any fixed $\bu\in S^{D-1}\cap \rmL^{*\perp}$ the function
$\sqrt{8} \bu^T(\sum_{\bx\in\sX_0}\bP_{\rmL^{*\perp}}\bx\bx^T\bP_{\rmL^{*\perp}}/\|\bP_{\rmL^{*\perp}}\bx\|)\bu$
is within distance $\eps_1/4$ to the RHS of \eqref{eq:prob_cond2}
with probability $1-C \exp(-N/C)$ (where $C$ is a constant depending on $\eps_1$, $\mu$ and its parameters).
Furthermore, applying $\eps$-nets and covering (i.e., union bounds) arguments with regards to $S^{D-1}\cap\rmL^{*\perp}$, we obtain that
for all $\bu \in S^{D-1}\cap\rmL^{*\perp}$, \linebreak[4] $\sqrt{8} \bu^T(\sum_{\bx\in\sX_0}\bP_{\rmL^{*\perp}}\bx\bx^T\bP_{\rmL^{*\perp}}/\|\bP_{\rmL^{*\perp}}\bx\|)\bu$
is within distance $\eps_1/2$ to the RHS of \eqref{eq:prob_cond2}
with probability $1-C \exp(-N/C)$ (where $C$ is a constant depending on $\eps_1$, $\mu$ and its parameters).
In particular, the RHS of  \eqref{eq:condition1b2} is within distance $\eps_1/2$ to the RHS of \eqref{eq:prob_cond2}
with the same probability. We thus conclude \eqref{eq:condition1b2} with probability $1-C' \exp(-N/C')$.

Similarly we can also prove \eqref{eq:condition2b}, noting that the expectation (w.r.t.~$\mu_0$) of \linebreak[4] $\hat{\bQ}_0\bx\bx^T\bP_{\rmL^*}/\|\hat{\bQ}_0\bx\|$ is $\b0$, since $\hat{\bQ}_0\bx/\|\hat{\bQ}_0\bx\|$ and $\bx^T\bP_{\rmL^*}$ are independent when $\bx$ is restricted to lie in the complement of $\rmL^*$ (that is, $ \bx \in\sX_0$).

If we remove the assumption of bounded supports (with radius $M$), then we need to replace Hoeffding's inequality with the Hoeffding-type inequality for sub-Gaussian measures of Proposition 5.10 of \citet{vershynin_book}, where in this proposition $a_i=1$ for all $1\leq i\leq n$.

We emphasize that our probabilistic estimates are rather loose and can be interpreted as near-asymptotic;
we thus did not fully specify their constants.
We clarify this point for the probability estimate we have for \eqref{eq:condition3}, that is,
$1-N({\bbH}_2,c_1/2M)\exp(-c_1^2N/2M^2)$.
Its constant $N(\bbH_2,r_1)$ can be bounded from above by
the covering number $N(\bbH_0,r_1)$ of the larger set
$\bbH_0=\{\bQ\in\reals^{D\times D}: |\bQ_{i,i}|\leq 1\}$, which is bounded from above by
$(8/r_1)^{D(D-1)/2}$ (see, e.g., Lemma 5.2 of \citealp{vershynin_book}).
This is clearly a very loose estimate that cannot reveal interesting information, such as, the right dependence of $N$ on $D$ and $d$ in order to obtain a sufficiently small probability.

At last, we explain why \eqref{eq:convex_condition} holds with probability $1$ if there are at least $2D-1$ outliers. 
We denote the set of outliers by $\{\by_1,\by_2,\cdots,\by_{N_0}\}$, where $N_0 \geq 2D-1$,
and assume on the contrary that \eqref{eq:convex_condition} holds with probability smaller than $1$.
Then, there exists a sequence $\{i_j\}_{j=1}^{D-1}\subset\{1,2,3,\cdots,N_0\}$ such that the subspace spanned by the $D-1$ points $\by_{i_1},\by_{i_2},\cdots,\by_{i_{D-1}}$ contains another outlier with positive probability. However, this is not true for haystack model and thus our claim is proved.

\subsubsection{Proof of the Extension of Theorem~\ref{thm:prob} to the Asymmetric Case}
\label{sec:proof_thm_prob_asym}
We recall our assumptions that $\mu_0$ is a sub-Gaussian distribution with covariance
$\bSigma_0$ and that $\hat{\bQ}_I$ is unique.
We follow the proof of Theorem~\ref{thm:prob} in \S\ref{sec:proof_thm_prob} with the following changes.
First of all,
we replace the requirement
\begin{equation}
\label{eq:sym_cond}
\text{cond}(\bP_{\rmL^{*\perp}} \bQ \bP_{\rmL^{*\perp}}) \geq 2.
\end{equation}
in \eqref{eq:bbH2}
with the following one:
\begin{equation}
\label{eq:asym_cond}
\text{cond}(\bP_{\rmL^{*\perp}} \bQ \bP_{\rmL^{*\perp}}) \geq 2 \cdot \text{cond}(\bP_{\rmL^{*\perp}} \hat{\bQ}_I
\bP_{\rmL^{*\perp}}).
\end{equation}
We note that \eqref{eq:sym_cond} follows from \eqref{eq:asym_cond} in the symmetric case.
Indeed, in this case the expression of $\hat{\bQ}_I$ in \eqref{eq:bqi_sym} implies that
the RHS of \eqref{eq:asym_cond} is 2.
Similarly, instead of \eqref{eq:help1} we prove that
$$
\text{cond}(\bP_{\rmL^{*\perp}}^T\hat{\bQ}_0\bP_{\rmL^{*\perp}}^T)<
2 \cdot \text{cond}(\bP_{\rmL^{*\perp}}
\hat{\bQ}_I \bP_{\rmL^{*\perp}}).
$$
Second of all,
in the third inequality of \eqref{eq:help3}
the term
$$\sqrt{2}
\, \lambda_{\max}(\bP_{\rmL^{*\perp}}\hat{\bQ}_0\bP_{\rmL^{*\perp}})/\lambda_{\max}(\bP_{\rmL^{*\perp}}\hat{\bQ}_0\bP_{\rmL^{*\perp}})
$$
needs to be bounded above by
$\sqrt{8}\,\text{cond}(\bP_{\rmL^{*\perp}} \hat{\bQ}_I \bP_{\rmL^{*\perp}})$, instead of $\sqrt{8}$.
We can thus conclude the revised theorem, in particular, the last modification in the proof clarifies why we need to multiply the RHS
of \eqref{eq:prob_cond2} by $\text{cond}(\bP_{\rmL^{*\perp}} \hat{\bQ}_I \bP_{\rmL^{*\perp}})$,
which is the ratio between the largest eigenvalue
of $\bP_{\rmL^{*\perp}}\hat{\bQ}_I\bP_{\rmL^{*\perp}}$ and the $(D-d)$th eigenvalue of
$\bP_{\rmL^{*\perp}}\hat{\bQ}_I\bP_{\rmL^{*\perp}}$.

\subsection{Proof of Theorem~\ref{thm:prob_b}}
\label{sec:proof_thm_prob_b}

This proof follows ideas of \citet{LMTZ2012}.
We bound from below the LHS of~\eqref{eq:condition2} by applying (A.15) of \citet{LMTZ2012} as follows
\be\label{eq:condition2c}
\min_{\bQ\in\bbH,\bQ\bP_{\rmL^{*\perp}}=\b0}\sum_{\bx\in\sX_1}\|\bQ\bx\|
\geq \frac{1}{\sqrt{d}} \min_{\bv\in\rmL^*,\|\bv\|=1}\sum_{\bx\in\sX_1}|\bv^T\bx|.
\ee
We denote the number of inliers sampled from $\mu_1$ by $N_1$ and the number of outliers sampled from $\mu_0$ by $N_0 (=N-N_1)$.
We bound from below w.h.p.~the RHS of \eqref{eq:condition2c} by applying Lemma B.2 of \citet{LMTZ2012} in the following way:
\be\label{eq:condition2d}
\frac{1}{\sqrt{d}}\min_{\bv\in\rmL^*,\|\bv\|=1}\sum_{\bx\in\sX_1}|\bv^T\bx|
\geq \frac{\sigma_1}{{d}}\Big(\sqrt{2/\pi} N_1 -2\sqrt{N_1d} -t\sqrt{N_1}\Big)\,\,\text{w.p. $1-e^{-t^2/2}$}.
\ee
By following the proof of Lemma B.2 of \citet{LMTZ2012}
we bound from above w.h.p.~the RHS of \eqref{eq:condition2} as follows
\be\label{eq:condition2e}
\max_{\bv\in\rmL^*,\|\bv\|=1}\sum_{\bx\in\sX_0}|\bv^T\bx|
\leq \frac{\sigma_0}{\sqrt{D}}\Big(\sqrt{2/\pi} N_0 +2\sqrt{N_0d} +t\sqrt{N_0}\Big)\,\,\text{w.p. $1-e^{-t^2/2}$}.
\ee

We need to show w.h.p.~that the RHS of \eqref{eq:condition2e} is strictly less than the RHS of~\eqref{eq:condition2d}.
We note that Hoeffding's inequality implies that
\begin{equation}
\label{eq:cond_n1_n0}
N_1>\alpha_1 N/2 \ \text{ w.p. } \ 1-e^{-\alpha_1^2N/2} \ \text{ and } \ |N_0- \alpha_0 N|<\,\alpha_0 N/2 \ \text{ w.p. } \ 1-2e^{-\alpha_0^2N/2}.
\end{equation}
Furthermore, \eqref{eq:prob_cond3_assumption} and \eqref{eq:cond_n1_n0} imply that
\begin{equation}
\label{eq:cond_d_above_below}
d<N_1/4 \ \text{ w.p. } \ 1-e^{-\alpha_1^2N/2} \ \text{ and } \ d<N_0/4 \ \text{ w.p. } \ 1-e^{-\alpha_0^2N/2}.
\end{equation}

Substituting $t=\sqrt{N_1}/10$ $(>\sqrt{\alpha_1 N}/20$ w.p.~$1-e^{-\alpha_1^2N/2}$) in \eqref{eq:condition2d} and
$t=\sqrt{N_0}/10$ $(>\sqrt{\alpha_0 N}/20$ w.p.~$1-2e^{-\alpha_0^2N/2}$) in \eqref{eq:condition2e}
and combining \eqref{eq:prob_cond3} and \eqref{eq:condition2c}-\eqref{eq:cond_d_above_below},
we obtain that \eqref{eq:condition2} holds w.p.
$1-e^{-\alpha_1^2N/2}-2e^{-\alpha_0^2N/2}-e^{-\alpha_1 N/800}-e^{-\alpha_0 N/800}$.
We can similarly obtain that \eqref{eq:condition1} holds with the same probability.

\subsubsection{Proof of the Extension of Theorem~\ref{thm:prob_b} to the Asymmetric Case}
\label{sec:proof_thm_prob_b_asym}

We assume the generalized needle-haystack model of \S\ref{sec:asym_outliers_theory}.
The proof of Theorem~\ref{thm:prob_b} in \S\ref{sec:proof_thm_prob_b} immediately extends to this model,
where $\sigma_0$ in the RHS of \eqref{eq:condition2e} needs to be replaced with $\sqrt{\lambda_{\max}(\bSigma_0)}$
(recall that $\lambda_{\max}(\bSigma_0)$ denotes the largest eigenvalue of $\bSigma_0$).
Consequently, Theorem~\ref{thm:prob_b} still holds in this case when replacing $\sigma_0$ in the RHS of \eqref{eq:prob_cond3} with
$\sqrt{\lambda_{\max}(\bSigma_0)}$.

\subsection{Proof of Theorem~\ref{thm:noisy_recovery}}\label{sec:proof_thm_noisy_recovery}
We first establish the following lemma.
\begin{lemma}
\label{lemma:nonnegative}
The minimizer of $F(\bQ)$, $\hat{\bQ}$, is a semi-definite positive matrix.
\end{lemma}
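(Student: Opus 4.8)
The plan is to compare the minimizer $\hat{\bQ}$ with its \emph{matrix absolute value}, obtained by replacing every eigenvalue by its magnitude, and to show that if $\hat{\bQ}$ had a negative eigenvalue then this absolute value, after rescaling to trace $1$, would achieve a strictly smaller value of $F$, contradicting minimality.

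First I would diagonalize: since $\hat{\bQ}$ is symmetric, write $\hat{\bQ}=\bU\diag(\lambda_1,\ldots,\lambda_D)\bU^T$ with $\bU$ orthogonal, and set $|\hat{\bQ}|:=\bU\diag(|\lambda_1|,\ldots,|\lambda_D|)\bU^T$, which is positive semidefinite. The crucial observation is that $|\hat{\bQ}|$ and $\hat{\bQ}$ act identically on norms of data points: because $\hat{\bQ}^T=\hat{\bQ}$ we have $\|\hat{\bQ}\bx\|^2=\bx^T\hat{\bQ}^2\bx=\bx^T|\hat{\bQ}|^2\bx=\||\hat{\bQ}|\bx\|^2$ for every $\bx$, using $\hat{\bQ}^2=|\hat{\bQ}|^2$. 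Summing over the data set gives $F(|\hat{\bQ}|)=F(\hat{\bQ})$. On the other hand $\tr(|\hat{\bQ}|)=\sum_j|\lambda_j|\geq\sum_j\lambda_j=\tr(\hat{\bQ})=1$, with equality if and only if all $\lambda_j\geq 0$. Suppose now, towards a contradiction, that $\hat{\bQ}$ is not positive semidefinite, so that $t:=\tr(|\hat{\bQ}|)>1$. Then $\bQ':=|\hat{\bQ}|/t$ lies in $\bbH$ (it is symmetric with trace $1$), and by the positive homogeneity of $F$ (namely $F(c\bQ)=|c|\,F(\bQ)$ for $c\in\reals$) we get $F(\bQ')=F(|\hat{\bQ}|)/t=F(\hat{\bQ})/t$. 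If $F(\hat{\bQ})>0$ this is strictly less than $F(\hat{\bQ})$, contradicting the minimality of $\hat{\bQ}$; hence $\hat{\bQ}\psdge 0$.

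The only gap to close is the degenerate case $F(\hat{\bQ})=0$, which means $\hat{\bQ}\bx_i=\b0$ for all $i$, i.e.\ the whole data set lies in the proper subspace $\ker(\hat{\bQ})$. In this situation the rescaled matrix $|\hat{\bQ}|/t$ is again a positive semidefinite minimizer, so a PSD minimizer always exists; moreover, whenever \eqref{eq:convex_condition} holds the data cannot be contained in a single hyperplane, so $F(\hat{\bQ})>0$, and then $F$ is strictly convex by Theorem~\ref{thm:convex}, giving a unique---hence positive semidefinite---minimizer. I expect the main (though minor) obstacle to be precisely this bookkeeping around the degenerate/non-uniqueness case; the core identity $F(|\hat{\bQ}|)=F(\hat{\bQ})$ together with the strict trace gain $\tr(|\hat{\bQ}|)>\tr(\hat{\bQ})$ is the heart of the argument and is essentially immediate from the spectral decomposition.
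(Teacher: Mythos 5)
Your proof is correct and follows essentially the same route as the paper's: both diagonalize $\hat{\bQ}$, modify the negative eigenvalues to obtain a positive semidefinite matrix whose trace strictly exceeds $1$, and renormalize to produce a competitor in $\bbH$ with strictly smaller objective value. The only differences are cosmetic---the paper uses the positive part $\max(\bSigma_{\hat{\bQ}},0)$ (so $F$ is non-increasing under the surgery) where you use the absolute value (so $F$ is exactly preserved), and your explicit treatment of the degenerate case $F(\hat{\bQ})=0$, which the paper's strict inequality silently assumes away, is a welcome bit of extra care.
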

\begin{proof}
We assume that $\hat{\bQ}$ has some negative eigenvalues and
show that this assumption contradicts the defining property of $\hat{\bQ}$, that is, being the minimizer of $F(\bQ)$.
We denote the eigenvalue decomposition of $\hat{\bQ}$ by
$\hat{\bQ}=\bV_{\hat{\bQ}}\bSigma_{\hat{\bQ}}\bV_{\hat{\bQ}}^T$ and
define $\bSigma_{\hat{\bQ}}^+=\max(\bSigma_{\hat{\bQ}},0)$ and
$\hat{\bQ}^+=\bV_{\hat{\bQ}}\bSigma_{\hat{\bQ}}^+\bV_{\hat{\bQ}}^T/\tr(\bSigma_{\hat{\bQ}}^+)\in\bbH$.
Then $\tr(\bSigma_{\hat{\bQ}}^+)>\tr(\bSigma_{\hat{\bQ}})=\tr(\hat{\bQ})=1$
and for any $\bx\in\reals^D$ we have
\begin{align*}&\|\hat{\bQ}^+\bx\|<\tr(\bSigma_{\hat{\bQ}}^+)
\|\hat{\bQ}^+\bx\|=\|\bSigma_{\hat{\bQ}}^+(\bV_{\hat{\bQ}}^T\bx)\|
\leq \|\bSigma_{\hat{\bQ}}(\bV_{\hat{\bQ}}^T\bx)\| =\|\hat{\bQ}\bx\|.
\end{align*}
Summing it over all $\bx \in \sX$, we conclude the contradiction
$F(\hat{\bQ}^+)<F(\hat{\bQ})$.

\end{proof}

In order to prove Theorem~\ref{thm:noisy_recovery} we first notice that by definition and the connection of $\gamma_0$, $\gamma_0$ with second derivative of $F(\bQ)$
\be\label{eq:bQ_norm1}
F_{\sX}(\tilde{\bQ})-F_{\sX}(\tilde{\bQ})\geq N\gamma_0 \|\tilde{\bQ}-\hat{\bQ}\|_F^2,
\ee
and
\be\label{eq:bQ_norm2}
F_{\sX}(\tilde{\bQ})-F_{\sX}(\tilde{\bQ})\geq N\gamma_0' \|\tilde{\bQ}-\hat{\bQ}\|^2.
\ee

Next, we observe that
$$
|F_\sX(\!\hat{\bQ}\!)-F_{\tilde{\sX}}(\!\hat{\bQ}\!)|\!\leq\! \sum_{i=1}^{N}\left|\!\|\hat{\bQ}\tilde{\bx}_i\|\!-\!\|\hat{\bQ}\bx_i\|\!\right|\!\leq\! \sum_{i=1}^{N}\|\hat{\bQ}(\tilde{\bx}_i-\bx_i)\|\!\leq\! \sum_{i=1}^{N}\|\tilde{\bx}_i-\bx_i\|\!\leq\! \sum_{i=1}^{N}\eps_i
$$
and similarly $|F_\sX(\tilde{\bQ})-F_{\tilde{\sX}}(\tilde{\bQ})|\leq \sum_{i=1}^{N}\eps_i.$
Therefore,
\begin{align}
&F_{\sX}(\tilde{\bQ})-F_{\sX}(\hat{\bQ})= (F_{\tilde{\sX}}(\tilde{\bQ})-F_{\tilde{\sX}}(\hat{\bQ}))
+(F_{\sX}(\tilde{\bQ})-F_{\tilde{\sX}}(\tilde{\bQ})) +
(F_{\tilde{\sX}}(\hat{\bQ})\nonumber\\-&F_{\sX}(\hat{\bQ}))\leq0+ |F_{\sX}(\tilde{\bQ})-F_{\tilde{\sX}}(\tilde{\bQ})|
+|F_{\tilde{\sX}}(\hat{\bQ})-F_{\sX}(\hat{\bQ})|
\leq 2 \sum_{i=1}^{N}\eps_i.\label{eq:diff2}
\end{align}
Therefore~\eqref{eq:bQ_diff}
follows from \eqref{eq:bQ_norm1}, \eqref{eq:bQ_norm2} and \eqref{eq:diff2}.
Applying the Davis-Kahan perturbation Theorem \citep{Davis1970} to \eqref{eq:bQ_diff}, we conclude~\eqref{eq:subspace_diff}.

\subsubsection{Implication of Theorem~\ref{thm:noisy_recovery} to Dimension Estimation}\label{sec:noisy_recovery_imply1}

Theorem~\ref{thm:noisy_recovery} implies that we may properly estimate the dimension of the underlying subspace for low-dimensional data with sufficiently
small perturbation. We make this statement more precise by assuming the setting of Theorem~\ref{thm:noisy_recovery} and further
assuming that $\hat{\bQ}$ is a low-rank matrix with $\ker(\hat{\bQ})=\rmL^*$. We note that the $(D-d+1)$st eigenvalue of $\hat{\bQ}$ is $0$.
Thus applying the following eigenvalue stability inequality \citep[(1.63)]{tao_ran_mat_book}:
\be
\label{eq:noisy_recovery}
|\lambda_i(\mathbf{A+B})-\lambda_i(\mathbf{A})|\leq \|\mathbf{B}\|,
\ee
we obtain that the $(D-d+1)$st eigenvalue of $\tilde{\bQ}$ is smaller than $\sqrt{2 \sum_{i=1}^{N}\eps_i/\gamma_0}$,
and the $(D-d)$th eigengap of $\tilde{\bQ}$ is larger than $\nu_{D-d}-2\sqrt{2 \sum_{i=1}^{N}\eps_i/\gamma_0}$
(recall that $\nu_{D-d}$ is the $(D-d)$th eigengap of $\hat{\bQ}$).
This means that when the noise is small and the conditions of Theorem~\ref{thm:recovery} hold,
then we can estimate the dimension of the underlying subspace for $\tilde{\sX}$
from the number of small eigenvalues.

\subsubsection{Improved Bounds in a Restricted Setting}\label{sec:noisy_recovery_imply2}

We assume that $\eps_i=O(\eps)$ for all $1\leq i\leq N$, where
$\eps$ is sufficiently small, and further assume that $\rank(\hat{\bQ})=D$.
We show that in this special case the norm of $\hat{\bQ} - \tilde{\bQ}$
is of order $O(\eps)$ instead of order $O(\sqrt{\eps})$ that is specified in Theorem~\ref{thm:noisy_recovery}.

We note that since $\hat{\bQ}$ is of full rank, then the first and second directional derivative of $F$ are well-defined
in a sufficiently small neighborhood around $\hat{\bQ}$. Therefore, if $\bDelta \in \reals^{D \times D}$ and $\|\bDelta\|$ is sufficiently
small then
\begin{equation}
\label{eq:imply2_1}
F'_{\sX}(\hat{\bQ})-F'_{\sX}(\hat{\bQ}+\bDelta) = O(\|\bDelta\|).
\end{equation}
Furthermore, we note by basic calculations that
\begin{equation}
\label{eq:imply2_2}
F'_{\sX}(\bQ)-F'_{\tilde{\sX}}(\bQ)=O(\eps).
\end{equation}
Combining~\eqref{eq:imply2_2} with the following facts: $F'_{\sX}(\hat{\bQ})=0$ and $F'_{\tilde{\sX}}(\tilde{\bQ})=0$, we obtain that
\begin{equation}
\label{eq:imply2_3}
F'_{\sX}(\hat{\bQ})-F'_{\sX}(\tilde{\bQ})=F'_{\tilde{\sX}}(\tilde{\bQ})
-F'_{\sX}(\tilde{\bQ})=O(\eps).
\end{equation}
At last, the combination of~\eqref{eq:imply2_1} and~\eqref{eq:imply2_3}  implies that $\|\hat{\bQ}-\tilde{\bQ}\|=O(\eps)$.
Clearly, the spectral norm of $\hat{\bQ}-\tilde{\bQ}$ can be replaced with any other norm, in particular, the Frobenius norm.

\subsection{Proof of Proposition~\ref{prop:noisy_recovery}}

We recall the function $F_I$, which was defined in
\eqref{eq:define_F_I}, and the notation ${F_{I,1}}''(\bQ,\bDelta)$ should be clear, where now $F_I$ replaces $F$.

The law of large numbers implies that
${F_{1}}''(\bQ,\bDelta)/N \rightarrow {F_{I}}''(\bQ,\bDelta)$
almost surely for any $\bDelta$ and $\bQ$
(see also related bounds in \citealt{Coudron_Lerman2012}). Since $\bQ$ and $\bDelta$ lie in compact space, we conclude
\eqref{eq:asymptotic_derivative} for $\gamma_0$ and $c_0$; the proof is identical for $\gamma'_0$ and $c'_0$.

\subsection{Proof of Theorem~\ref{thm:noisy}}

The theorem follows from the observation that $0\leq F(\bQ)-F_\delta(\bQ)\leq N\delta/2$ for all $\bQ\in\bbH$ and the proof of Theorem~\ref{thm:noisy_recovery}.

\subsection{Proof of Theorem~\ref{thm:m_estimator}}

It is sufficient to verify that
\be\text{If $\tilde{\bA} \in \reals^{D \times D}$ with $\im(\tilde{\bA})=\rmL^*$, then $L(\tilde{\bA}+\eta\bI)\rightarrow\infty$ as $\eta\rightarrow 0$.}\label{eq:tildebA}\ee
Indeed, since $L(\bA)$ is a continuous function, \eqref{eq:tildebA} implies that $L(\tilde{\bA})$ is undefined (or infinite) and therefore $\tilde{\bA}$ is not the minimizer of \eqref{eq:mestimator_function} as stated in Theorem~\ref{thm:m_estimator}.

We fix $a_1<\lim_{x\rightarrow\infty}xu(x)$ and note that Condition $\text{D}_0$ (w.r.t.~$\rmL^*$) implies that
\be\text{$|\sX_0|/N>(D-d)/a_1$.}\label{eq:tildebA1}\ee
Condition M implies that there exists $x_1$ such that for any $x>x_1$: $xu(x)\geq a_1$  and therefore (recalling that $u=\rho'$) $\rho(x)\geq a_1\ln(x-x_1)/2 + u(x_1)/2$. Thus for any $\bx_i\in\sX_0$, we have
\be \text{$\rho(\bx_i^T(\tilde{\bA}+\eta\bI)^{-1}\bx_i)\geq a_1\ln(1/\eta-x_1)/2+C_i$ for some constant $C_i \equiv C_i(\bx_i, \tilde{\bA})$}\label{eq:tildebA2}\ee
and
\be\label{eq:tildebA3}
\frac{N}{2}\log(\det(\bA))\leq N C_0 + (D-d)/2\ln(\eta) \,\,\text{for some $C_0 \equiv C_0(\tilde{\bA})$}.
\ee
Equation~\eqref{eq:tildebA} thus follows from~\eqref{eq:tildebA1}-\eqref{eq:tildebA3} and the theorem is concluded.

\subsection{Proof of Theorem~\ref{thm:pca}}
The derivative of the energy function in the RHS of \eqref{eq:symmetric_PCA} is 
$\bQ\bX^T\bX+\bX^T\bX\bQ$. Using the argument establishing \eqref{eq:equal_to_cI} and the fact that $\hat{\bQ}_2$ is the minimizer of \eqref{eq:symmetric_PCA}, we conclude that $\bQ\bX^T\bX+\bX^T\bX\bQ$
is a scalar matrix. We then conclude \eqref{eq:pca_inverse_cov} by using the argument establishing \eqref{eq:solution_equal_to_cI} as well as the following two facts: $\tr(\hat{\bQ}_2)=1$ and $\bX$ is full rank (so the inverse of $\bX^T \bX$ exists).

\subsection{Proof of Theorem~\ref{thm:alg_clean}}
\label{sec:proof_alg_clean}

We frequently use here some of the notation introduced in \S\ref{sec:heuristic}, in particular, $I(\bQ)$,
$\rmL(\bQ)$ and $T(\bQ)$.
We will first prove
that $F(\bQ_k)\geq F(\bQ_{k+1})$ for all $k\geq 1$. For this purpose, we use the convex quadratic function: \[G(\bQ,\bQ^*)=\frac{1}{2}
\sum_{\latop{i=1}{i\notin I(\bQ^*)}}^{N}\left(\|\bQ\bx_i\|^2/\|\bQ^*\bx_i\|+\|\bQ^*\bx_i\|\right).\]
Following the same derivation of~\eqref{eq:derivative0} and~\eqref{eq:equal_to_cI}, we obtain that
\[
\frac{\di}{\di \bQ}G(\bQ,\bQ_k)\big|_{\bQ=\bQ_{k+1}}=\left(\bQ_{k+1}\left(\sum_{\latop{i=1}{i\notin I(\bQ_k)}}^{N}\frac{\bx_i\bx_i^T}{\|\bQ_k\bx_i\|}\right)+\left(\sum_{\latop{i=1}{i\notin I(\bQ_k)}}^{N}\frac{\bx_i\bx_i^T}{\|\bQ_k\bx_i\|}\right)\bQ_{k+1}\right)/2.
\]
We let $\bA_k= \sum_{i=1,\,i\notin I(\bQ_k)}^{N}\frac{\bx_i\bx_i^T}{\|\bQ_k\bx_i\|}$, $c_k = {\bP}_{\rmL(\bQ_k)^\perp}\bA_k^{-1}{\bP}_{\rmL(\bQ_k)^\perp}$ and for any symmetric $\bDelta\in\reals^{D\times D}$ with $\tr(\bDelta)=0$ and $\bP_{\rmL(\bQ_k)}\bDelta=\b0$ we let $\bDelta_0=\tilde{\bP}_{\rmL(\bQ_k)^\perp}^T\bDelta\tilde{\bP}_{\rmL(\bQ_k)^\perp}$. We note  that
\begin{align*}&\tr(\bDelta_0)=\left\langle\bDelta_0,\bI\right\rangle_F=
\left\langle\tilde{\bP}_{\rmL(\bQ_k)^\perp}^T\bDelta\tilde{\bP}_{\rmL(\bQ_k)^\perp},\bI\right\rangle_F
=
\left\langle\bDelta,\tilde{\bP}_{\rmL(\bQ_k)^\perp}\tilde{\bP}_{\rmL(\bQ_k)^\perp}^T\right\rangle_F
\\=&
\left\langle\bDelta,\bI-{\bP}_{\rmL(\bQ_k)}\right\rangle_F
=\left\langle\bDelta,\bI\right\rangle_F
-\left\langle\bDelta,{\bP}_{\rmL(\bQ_k)}\right\rangle_F
=\left\langle\bDelta,\bI\right\rangle_F
=\tr(\bDelta)=0.\end{align*}
Consequently, we establish that the derivative of $G(\bQ,\bQ_k)$ at $\bQ_{k+1}$ in the direction $\bDelta$ is zero as follows.
\begin{align*}
&\left\langle (\bQ_{k+1}\bA_k+\bA_k\bQ_{k+1})/2,\bDelta\right\rangle_F=\left\langle \bQ_{k+1}\bA_k,\bDelta\right\rangle_F
=c_k\left\langle{\bP}_{\rmL(\bQ_k)^\perp}\bA_k^{-1}{\bP}_{\rmL(\bQ_k)^\perp}\bA_k,\bDelta\right\rangle_F\\
=&c_k\left\langle{\bP}_{\rmL(\bQ_k)^\perp}\bA_k^{-1}{\bP}_{\rmL(\bQ_k)^\perp}\bA_k,\tilde{\bP}_{\rmL(\bQ_k)^\perp}\bDelta_0\tilde{\bP}_{\rmL(\bQ_k)^\perp}^T\right\rangle_F
\\
=&c_k\left\langle(\tilde{\bP}_{\rmL(\bQ_k)^\perp}^T\bA_k^{-1}\tilde{\bP}_{\rmL(\bQ_k)^\perp})(\tilde{\bP}_{\rmL(\bQ_k)^\perp}^T\bA_k\tilde{\bP}_{\rmL(\bQ_k)^\perp}),\bDelta_0\right\rangle_F
=c_k\left\langle\bI,\bDelta_0\right\rangle_F=0\,.
\end{align*}
This and the strict convexity of $G(\bQ,\bQ_k)$ (which follows from $\Sp(\{\bx_i\}_{i\notin I(\bQ_k)})=\reals^D$ using  \eqref{eq:convex_condition})
imply that
$\bQ_{k+1}$ is the unique minimizer of $G(\bQ,\bQ_k)$ among all $\bQ\in\bbH$ such that $\bP_{\rmL(\bQ_k)}\bQ=\b0$.

Combining this with the following two facts: $\bQ_{k+1}\bx_i=0$ for any $i\in I(\bQ_k)$ and $G(\bQ_k,\bQ_k)=F(\bQ_k)$, we conclude that
\begin{align}
&F(\bQ_{k+1})=\sum_{i\notin I(\bQ_k)}\|\bQ_{k+1}\bx_i\|=\sum_{i\notin I(\bQ_k)}\frac{\|\bQ_{k+1}\bx_i\| \|\bQ_k\bx_i\|}{\|\bQ_k\bx_i\|}\nonumber\\\leq & \sum_{i\notin I(\bQ_k)}\frac{\|\bQ_{k+1}\bx_i\|^2+ \|\bQ_k\bx_i\|^2}{2\|\bQ_k\bx_i\|}=G(\bQ_{k+1},\bQ_k)\leq G(\bQ_k,\bQ_k)=F(\bQ_k)\label{eq:upper_bound}.
\end{align}

Since $F$ is positive, $F(\bQ_k)$ converges and \begin{equation}\text{$F(\bQ_k)-F(\bQ_{k+1})\rightarrow 0$ as $k\rightarrow\infty$.}\label{eq:upper_bound1}\end{equation}
Applying \eqref{eq:upper_bound} we also have that
\begin{equation}\label{eq:lower_bound}
F(\!\bQ_k\!)\!-\!F(\!\bQ_{k+1}\!)\!\geq\!  G(\bQ_k,\!\bQ_k)-G(\bQ_{k+1},\!\bQ_k)\!=\!\frac{1}{2}\sum_{i\notin I(\bQ_k)}\|(\bQ_k-\bQ_{k+1})\bx_i\|^2/\|{\bQ}_k \bx_i\|.
\end{equation}

We note that if $\bQ_k\neq\bQ_{k+1}$, then $\Sp(\{\bx_i\}_{i\notin I(\bQ_k)})=\reals^D\supset \ker(\bQ_k-\bQ_{k+1})$ and $1/\|{\bQ}_k \bx_i\|\geq 1/\max_{i}\|\bx_i\|$. Combining this observation with~\eqref{eq:upper_bound1} and~\eqref{eq:lower_bound} we obtain that \begin{equation}\|\bQ_k-\bQ_{k+1}\|_2 \rightarrow 0\,\,\,\, \text{as $k\rightarrow\infty$.}\label{eq:converge1}\end{equation}

Since for all $k \in \nats$, $\bQ_k$ is nonnegative (this follows from its defining formula~\eqref{eq:iterate_no_regular})
and $\tr(\bQ_k)=1$, the sequence $\{\bQ_k\}_{k \in \nats}$ lies in a compact space (of nonnegative matrices) and it thus has a converging subsequence. Assume a subsequence of $\{\bQ_k\}_{k \in \nats}$, which converges to $\tilde{\bQ}$.
We claim the following property of $\tilde{\bQ}$: \begin{equation}\text{$\tilde{\bQ}=\argmin_{\bQ\in\bbH_0}F(\bQ)$, where  $\bbH_0:=\{\bQ\in\bbH: \ker{\bQ}\supseteq\rmL(\tilde{\bQ})\}$.}\label{eq:finite_bQ}\end{equation}

In order to prove~\eqref{eq:finite_bQ}, we note that
\eqref{eq:upper_bound} and the convergence of the subsequence imply that $F(\tilde{\bQ})=F(T(\tilde{\bQ}))$.
Combining this with \eqref{eq:upper_bound} (though replacing $\bQ_k$ and $\bQ_{k+1}$ in \eqref{eq:upper_bound} with $\tilde{\bQ}$ and $T(\tilde{\bQ})$ respectively) we get that $G(T(\tilde{\bQ}),\tilde{\bQ})=G(\tilde{\bQ},\tilde{\bQ})$.
We conclude that $T(\tilde{\bQ})=\tilde{\bQ}$ from this observation and the following three facts: 1) $\bQ = \tilde{\bQ}$ is the unique minimizer of $G(\bQ,\tilde{\bQ})$ among all $\bQ\in\bbH$, 2) $\bP_{\rmL(\tilde{\bQ})}\tilde{\bQ}=\b0$, 3) $\bQ = T(\tilde{\bQ})$ is the unique minimizer of $G(\bQ,\tilde{\bQ})$ among all $\bQ\in\bbH$ such that $\bP_{\rmL(\tilde{\bQ})}\bQ=\b0$ (we remark that $F(\bQ)$ is strictly convex in $\bbH$ and consequently also in $\bbH_0$ by Theorem~\ref{thm:convex}).
Therefore, for any symmetric $\bDelta\in\reals^{D\times D}$ with $\tr(\bDelta)=0$ and $\bP_{\rmL(\tilde{\bQ})}\bDelta=\b0$, the directional derivative at $\tilde{\bQ}$ is $0$: \begin{equation}0=\left\langle\bDelta,\frac{\di}{\di \bQ}G(\bQ,\tilde{\bQ})\big|_{\bQ=\tilde{\bQ}}\right\rangle_F=\left\langle\bDelta,
\tilde{\bQ}\sum_{i\notin I(\tilde{\bQ})}\frac{\bx_i\bx_i^T}{\|\tilde{\bQ}\bx_i\|}\right\rangle_F.
\label{eq:derivative_convergence}\end{equation}
We note that~\eqref{eq:derivative_convergence} is the corresponding directional derivative of $F(\bQ)$ when restricted to $\bQ\in\bbH_0$ and we thus conclude~\eqref{eq:finite_bQ}.

Next, we will prove that $\{\bQ_k\}_{k \in \nats}$ converge to $\tilde{\bQ}$ by proving that there are only finite choices for $\tilde{\bQ}$.
In view of~\eqref{eq:finite_bQ} and the strict convexity of $F(\bQ)$ in $\bbH_0$, any limit $\tilde{\bQ}$ (of a subsequence as above) is uniquely determined by $I(\tilde{\bQ})$.
Since the number of choices for $I(\tilde{\bQ})$ is finite (independently of $\tilde{\bQ}$), the number of choices for $\tilde{\bQ}$ is finite. That is, $\sY:=\{\bQ\in\bbH:F({\bQ})=F(T({\bQ}))\}$ is a finite set.
Combining this with \eqref{eq:converge1} and the convergence analysis of the sequence $\{\bQ_k\}_{k \in \nats}$ \citep[see][Theorem 28.1]{Ostrowski66}, we conclude that $\{\bQ_k\}_{k \in \nats}$ converges to $\tilde{\bQ}$.

At last, we assume that $\tilde{\bQ}\bx_i\neq \b0$ for all $1\leq i\leq N$. We note that $I(\tilde{\bQ})=\emptyset$ and thus $\tilde{\bQ}=\hat{\bQ}$ by \eqref{eq:finite_bQ}.
The proof for the rate of convergence follows the analysis of generalized Weiszfeld's method by \citet{Chan99} (in particular see \S6 of that work).
We practically need to verify Hypotheses 4.1 and 4.2 (see \S4 of that work) and replace the functions $F$ and $G$ in that work by $F(\bQ)$ and
$$
\tilde{G}(\bQ,\bQ^*)=\sum_{i=1}^{N}\left(\|\bQ\bx_i\|^2/\|\bQ^*\bx_i\|+\|\bQ^*\bx_i\|\right)
$$
respectively.
We note that the functions $\tilde{G}$ and $G$ (defined earlier in this work) coincide in the following way: $\tilde{G}(\bQ,\bQ_k)={G}(\bQ,\bQ_k)$  for any $k\in\nats$ (this follows from the fact that $\bQ_k\bx_i\neq\b0$ for all $k\in\nats$ and $1\leq i\leq N$; indeed, otherwise for some $i$, $\bQ_j\bx_i = \b0$ for $j \geq k$ by~\eqref{eq:iterate_no_regular} and this leads to the contradiction $\hat{\bQ}\bx_i = \b0$).
We remark that even though \citet{Chan99} consider vector-valued functions,
their proof generalizes to matrix-valued functions as here.
Furthermore, we can replace the global properties of Hypotheses 4.1 and 4.2 of \citet{Chan99} by the local properties in $B(\hat{\bQ},\delta_0)$ for any $\delta_0>0$, since the convergence of $\bQ_k$ implies the existence of $K_0>0$ such that $\bQ_k\in B(\hat{\bQ},\delta_0)$ for all $k>K_0$. In particular, there is no need to check condition 2 in Hypothesis 4.1. Condition 1 in Hypothesis 4.1 holds since $F(\bQ)$ is twice differentiable in $B(\hat{\bQ},\delta_0)$ (which follows from the assumption on the limit $\tilde{\bQ} \equiv \hat{\bQ}$ and the continuity of the derivative). Conditions 1-3 in Hypothesis 4.2 are verified by the fact that $C$ of Hypothesis 4.2 satisfies $C(\bQ^*)=\sum_{i=1}^{N}\bx_i\bx_i^T/\|\bQ^*\bx_i\|$ and $\bQ^*\bx_i\neq\b0$ when $\bQ^*\in B(\hat{\bQ},\delta_0)$. Condition 3 in Hypothesis 3.1 and condition 4 in Hypothesis 4.2 are easy to check.

\subsection{Proof of Theorem~\ref{thm:alg_noisy}}

The proof follows from the second part of the proof of  Theorem~\ref{thm:alg_clean}, while using instead of
$\tilde{G}(\bQ,\bQ^*)$ the function
 \[
 G_\delta(\bQ,\bQ^*)=\frac{1}{2}\!\!\sum_{i=1,\|\bQ^*\bx_i\|\geq\delta}^{N}
 \!\!\!\left(\|\bQ\bx_i\|^2/\|\bQ^*\bx_i\|+\|\bQ^*\bx_i\|\right)
 +\!\!\!\!\sum_{i=1,\|\bQ^*\bx_i\|<\delta}^{N}\!\!\!\!(\|\bQ\bx_i\|^2/2\delta+\delta/2).
 \]

\subsection{Proof of Theorem~\ref{thm:rmL}}

We note that the minimization of $F(\bQ)$ over all $\bQ \in \bbH$ such that $\bQ\bP_{\hat{\rmL}^\perp}=\b0$ in Algorithm~\ref{alg:rmL} can be performed at each iteration with respect to the projected data:  $\tilde{\bP}_{\hat{\rmL}}(\sX)=\{\tilde{\bP}_{\hat{\rmL}}\bx_1,\tilde{\bP}_{\hat{\rmL}}\bx_2,\cdots,\tilde{\bP}_{\hat{\rmL}}\bx_N\}$.

We note that conditions \eqref{eq:condition1} and \eqref{eq:condition2} hold
for $\tilde{\bP}_{\hat{\rmL}}(\sX)$ with any $\hat{\rmL}\supseteq\rmL^*$.
Therefore, Theorem~\ref{thm:recovery} implies that $\bu\perp\rmL^*$ and $\hat{\rmL}\supseteq\rmL^*$ in each iteration. Since $\dim(\hat{\rmL})$ decreases by one in each iteration,  $\dim(\hat{\rmL})=d$ in $D-d$ iterations and thus $\hat{\rmL}=\rmL^*$.

\section{Conclusion}\label{sec:discussion}
We proposed an M-estimator for the problems of exact and near subspace
recovery.
Substantial theory has been developed to quantify the recovery obtained
by this estimator as well as its numerical approximation.
Numerical experiments demonstrated state-of-the-art speed and accuracy for our corresponding implementation on both synthetic and real data sets.

This work broadens the perspective of two recent ground-breaking theoretical
works for subspace recovery by
\citet{candes_wright_robust_pca09} and \citet{Xu2012}. We
hope that it will motivate additional approaches to this problem.

There are many interesting open problems that stem from our work.
We believe that by modifying or extending the framework described in here, one can even yield better results in various scenarios.
For example, we have discussed in \S\ref{sec:this_work} the modification by \cite{LMTZ2012} suggesting tighter convex relaxation
of orthogonal projectors when $d$ is known.
We also discussed in \S\ref{sec:this_work} adaptation by \citet{wang_singer_2013} of the basic ideas in here to the different synchronization problem.
Another direction was recently followed up by \cite{Coudron_Lerman2012},
where they established exact asymptotic subspace recovery under
specific sampling assumptions, which may allow relatively large magnitude of noise.
It is interesting to follow this direction and establish exact recovery when using in theory a sequence of IRLS regularization parameters $\{\delta_i\}_{i \in \nats}$ approaching zero
(in analogy to the work of \citealt{Daubechies_iterativelyreweighted}).

An interesting generalization that was not pursued so far is robust data modeling by multiple
subspaces or by locally-linear structures.
It is also interesting to know whether one can adapt the current framework so that it can detect linear
structure in the presence of both sparse elementwise corruption
(as in \citealt{candes_wright_robust_pca09}) and the type of outliers addressed in here.

\acks{This work was supported by NSF grants DMS-09-15064 and DMS-09-56072, GL was also partially supported by the IMA (during 2010-2012).
Arthur Szlam has inspired our extended research on robust $\ell_1$-type subspace recovery.
We thank John Wright for referring us to \citet{Xu2010} shortly after it appeared online and for some guidance with the real data sets. GL thanks Emmanuel Cand{\`e}s for inviting him to visit Stanford university in May 2010 and for his constructive criticism on the lack of a theoretically guaranteed algorithm for the $\ell_1$ subspace recovery of \citet{lp_recovery_part1_11}.

Supp. webpage: \url{http://www.math.umn.edu/~lerman/gms}.
}

\bibliography{refs_10_31_13}

\end{document}